\newlength{\widebarargwidth}
\newlength{\widebarargheight}
\newlength{\widebarargdepth}
\long\def\@makecaption#1#2{
        \vskip 0.8ex
        \setbox\@tempboxa\hbox{\small {\bf #1:} #2}
        \parindent 1.5em  
        \dimen0=\hsize
        \advance\dimen0 by -3em
        \ifdim \wd\@tempboxa >\dimen0
                \hbox to \hsize{
                        \parindent 0em
                        \hfil 
                        \parbox{\dimen0}{\def\baselinestretch{0.96}\small
                                {\bf #1.} #2
                                } 
                        \hfil}
        \else \hbox to \hsize{\hfil \box\@tempboxa \hfil}
        \fi
        }
\renewcommand{\baselinestretch}{1.04} 
\date{}
\newcommand{\BlackBox}{\rule{1.5ex}{1.5ex}}  
\renewenvironment{proof}{\par\noindent{\bf Proof\ }}{\hfill\BlackBox\\[2mm]}
\newtheorem{theorem}{Theorem}
\newtheorem{lemma}[theorem]{Lemma}
\newtheorem{definition}[theorem]{Definition}
\newcommand\numberthis{\addtocounter{equation}{1}\tag{\theequation}}
\newcommand*{\eqdef}{:=}
\newcommand{\ceil}[1]{{\left\lceil #1 \right\rceil}}
\newcommand{\cK}{{\cal K}}
\newcommand{\cN}{{\cal N}}
\newcommand{\R}{\mathbb{R}}
\renewcommand{\S}{\mathbb{S}}
\newcommand{\N}{\mathbb{N}}
\newcommand{\E}{\mathbb{E}}
\renewcommand{\Pr}{\mathbb{P}}
\newcommand{\lv}{\lVert}
\newcommand{\rv}{\rVert}
\newcommand{\tI}{{\widetilde{I}}}
\newcommand{\tx}{\tilde{x}}
\newcommand{\tmu}{\tilde{\mu}}
\newcommand{\tW}{\widetilde{W}}
\newcommand{\tV}{\widetilde{V}}
\newcommand{\tv}{\tilde{v}}
\newcommand{\tL}{\widetilde{L}}
\newcommand{\tsigma}{\widetilde{\sigma}}
\newcommand{\talpha}{\tilde{\alpha}}
\renewcommand{\epsilon}{\varepsilon}
\renewcommand{\ln}{\log}
\title{\textbf{When Does Gradient Descent 
       with Logistic Loss \\
       Find Interpolating Two-Layer Networks?}}
\author{Niladri S. Chatterji \\ University of California, Berkeley \\ \texttt{chatterji@berkeley.edu} \and Philip M. Long \\ Google \\ \texttt{plong@google.com}  \and Peter L. Bartlett \\ University of California, Berkeley \& Google \\ \texttt{peter@berkeley.edu}}
\date{\today}
\begin{document}
\maketitle
\begin{abstract}
We study the training of finite-width two-layer smoothed ReLU networks for binary classification using the logistic loss. We show that gradient descent drives the training loss to zero if the initial loss is small enough. When the data satisfies certain cluster and separation conditions and the network is wide enough, we show that one step of gradient descent reduces the loss sufficiently that the first result applies. 
\end{abstract}

\section{Introduction}

The success of deep learning 
has led to a lot of recent interest in understanding the properties
of ``interpolating'' neural network models, that achieve 
(near-)zero training loss \citep{zhang2016understanding,belkin2019reconciling}. 
One aspect of understanding these models is to theoretically characterize how first-order gradient methods (with appropriate random initialization) seem to reliably find interpolating solutions to non-convex optimization problems.  

In this paper, we show that, under two sets of conditions, 
training fixed-width two-layer networks with
gradient descent drives the logistic loss to zero.
The networks have
smooth ``Huberized''
ReLUs \citep[][see equation~\eqref{e:helu} and Figure~\ref{f:hrelu}]{tatro2020optimizing}
and 
the output weights are not trained.

The first result only requires the assumption that the initial loss is small, but does not require any assumption about either the width of the network or the number of samples. It guarantees that if the initial loss is small then gradient descent drives the logistic loss to zero.

For our second result we 
assume that the 
inputs come
from four clusters, 
two per class,
and that the clusters corresponding to the opposite labels are appropriately separated. Under these assumptions, we show that random Gaussian initialization along with a single step of gradient descent is enough to guarantee that the loss reduces sufficiently that the first result applies.

A few proof ideas that facilitate our results are as follows: under our first set of assumptions, when the loss is small, we show that the negative gradient
aligns well with the parameter vector. This yields a lower bound on
the norm of the gradient in terms of the loss and the norm
of the current weights.  This implies that, if the
weights are not too large, the loss is reduced rapidly
at the beginning of
the gradient descent step.  
Exploiting the Huberization of the ReLUs, we also show
that the loss is a smooth function of the weights, so
that the loss continues to decrease rapidly throughout
the step, as long as the step-size is not too big.
Crucially, we show that the loss is decreased significantly
compared with the size of the change to the weights.
This implies, in particular, that the norm of the
weights does not increase by too much, so
that progress can continue.

The preceding analysis requires a small loss
to ``get going''.  Our second result provides one example when this provably happens.  A two-layer network may be viewed as
a weighted vote over predictions made by the hidden
units.  Units only vote on examples that fall in
halfspaces where their activation functions are non-zero.
When the network
is randomly initialized, we can think of each hidden unit
as ``capturing'' roughly half of the examples---each
example is turn captured by roughly half of the hidden
units.  Some capturing events are helpful, and some
are harmful.  At initialization, these are roughly
equal.  
Using
the properties of the Gaussian 
initialization (including concentration and
anti-concentration) we show that each example $(x_s,y_s)$ is
captured by many nodes whose
first updates contribute to improving its
loss.
For this to happen, the updates for this
example must not be offset by updates for other
examples.  This happens with sufficient probability at
each individual node
that
the cumulative effect of these
``good'' nodes overwhelms the effects of
potentially confounding nodes, which tend to
cancel one another.  Consequently, 
with $2p$ hidden nodes,
the
loss after one iteration is at most 
$\exp(-\Omega(p^{1/2 - \beta}))$ for
$\beta > 0$.  
By comparison, under similar, but weaker,
clustering assumptions,  \citet{li2018learning} used a neural tangent kernel (NTK) analysis
to show that the loss is $1/\textsf{poly}(p)$
after $\textsf{poly}(p)$ steps.
Our proof uses more structure of the problem than the NTK proof, for
example, that (loosely speaking) the reduction in
the loss is exponential in the number of
hidden units improved.

We work with smooth Huberized ReLUs to facilitate theoretical analysis.
We analyze networks with Huberized ReLUs instead of the
increasingly popular Swish \citep{ramachandran2017searching}, which
is also a smooth approximation to the ReLU, to
facilitate a simple analysis.
We describe some preliminary experiments with
artificial data supporting our theoretical analysis,
and suggesting that networks with Huberized ReLUs
behave similarly to networks with standard ReLUs.

Related results, under weaker assumptions,
have been obtained for the quadratic loss 
\citep{du2018gradient,du2019gradient,allen2019convergence,oymak2020towards}, using the NTK~\citep{jacot2018neural,chizat2019lazy}.
The logistic loss is qualitatively different;  among other things,
driving the logistic loss to zero requires the weights to 
go to infinity, far from their initial values, so that a
Taylor approximation around the initial values cannot be applied.
The NTK framework 
has also been applied to analyze training with the
logistic loss.
A typical result \citep{li2018learning,allen2019convergence,zou2020gradient}
is that
after $\textsf{poly}(1/\epsilon)$ 
updates, a network of size/width $\textsf{poly}(1/\epsilon)$ 
achieves $\epsilon$ loss. Thus to guarantee loss very close to zero, these
analyses require larger and larger networks. The reason for this
appears to be that a key part of these analyses
is to show that a wider network can achieve a certain fixed loss by traveling
a shorter distance in parameter space.  Since it seems that, to drive the
logistic loss to zero with a fixed-width network, the parameters must travel 
an unbounded distance, the NTK approach cannot be applied to obtain the
results of this paper.

In a recent paper, \citet{lyu2019gradient} studied the margin maximization of ReLU networks 
for 
the logistic loss.
\citet{lyu2019gradient} also proved the
convergence of gradient descent to zero, but that result requires positive homogeneity and smoothness, which rules out the ReLU and similar nonlinearities like the Huberized ReLU studied here.
Their results do apply in the case that the ReLU is raised
to a power strictly greater than two.  
Lyu and Li used both assumptions of
positive homogeneity and smoothness to prove the results
in their paper that are most closely related to this paper, so that a substantially different analysis
was needed here.  (See, for example, the proof of Lemma E.7 of their
paper.)  As far as we know, the analysis of the alignment
between the negative gradient and the weights
originated in their paper: in this paper, we establish such
alignment under weaker conditions.

Building on this work by \citet{lyu2019gradient}, \citet{ji2020directional} studied finite-width deep ReLU neural networks and showed that starting from a small loss, gradient flow coupled with logistic loss leads to convergence of the directions of the parameter vectors. They also demonstrate alignment between the parameter vector directions and the negative gradient. However, they 
do not prove that the training loss converges to zero.



The remainder of the paper is organized as follows. In Section~\ref{s:defs} we introduce notation,
definitions, assumptions, and present both of our main results. 
We provide a proof of Theorem~\ref{t:main.general} in Section~\ref{s:proofdetails_main_general} and we prove Theorem~\ref{t:main.clusters} in Section~\ref{s:proofdetails_theorem_cluster}. Section~\ref{s:simulations} is devoted to some numerical simulations.
Section~\ref{s:furtherrelatedwork} points to other related work and we conclude with a discussion in Section~\ref{s:discussion}. 

\section{Preliminaries and Main Results}
\label{s:defs}

This section includes notational conventions, a description of
the setting, and the statements of the main results.

\subsection{Notation}
Given a vector $v$, let $\lv v \rv$ denote its Euclidean norm. Given a matrix $M$, let $\lv M \rv$ denote its Frobenius norm and $\lv M \rv_{op}$ denote its operator norm.  For any $k \in \N$, we denote the set $\{ 1,\ldots,k \}$ by $[k]$. For a number $d$ of inputs, we denote
the set of unit-length vectors
in $\R^d$ by $\S^{d-1}$. Given an event $A$, we let $\mathbf{1}_A$ denote the indicator of this event. 
The symbol $\wedge$ is used to denote the logical ``AND" operation. At multiple points in the proof we will use the standard ``big Oh notation'' \citep[see, e.g.,][]{cormen2009introduction} to denote how certain quantities scale with the number of hidden units ($2p$), while viewing all other problem parameters that are not specifically
set as a function of $p$ as constants.
We will use $C_1, C_2, \ldots$ to denote absolute
constants whose values are fixed throughout the
paper, and $c', c_1, \ldots$ to denote ``local''
constants, which may take different values in
different contexts.

\subsection{The Setting}

We will analyze gradient descent applied
to minimize the training loss of a
two-layer network.  

Let $d$ be the number of inputs, and
$2p$ be the number of hidden nodes. We consider the
case that the weights connected to the
output nodes are fixed: $p$ of them
take the value $1$, and the other $p$
take the value $-1$.  

We work with Huberized ReLUs that are defined as follows:
\begin{equation}
\label{e:helu}
\phi(z) := \left\{ \begin{array}{ll}
            0 & \mbox{ if $z < 0$,} \\
            \frac{z^2}{2 h} & \mbox{ if $z \in [0,h]$,} \\
            z - h/2 & \mbox{otherwise.}
         \end{array}
           \right.
\end{equation}
See Figure~\ref{f:hrelu}. 
\begin{figure}
  \centering
  \includegraphics[width=4in]{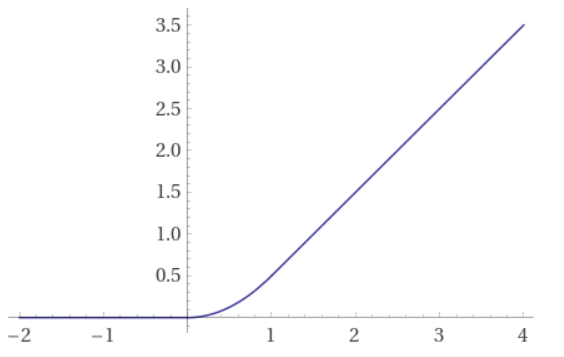}
  \caption{\label{f:hrelu} 
  A plot of the Huberized ReLU in the case $h=1.$
   }
  \end{figure}
We set the value of the bandwidth parameter $h=1/p$ throughout the paper.

For $i \in \{1,\ldots,2p\}$, let
$v_{i} \in \R^d$ be vector of weights from
the inputs to the $i$th hidden node,
and let $u_1,\ldots,u_{2p} \in \R$ be the
weights connecting the hidden nodes
to the output node. Set $u_1 = \ldots = u_p = 1$ 
and $u_{p+1} = \ldots = u_{2p} = -1$. Let $b_i$ be the bias for
the $i$th hidden node.  Let $\theta = (v_1,b_1,\ldots,v_{2p},b_{2p})$
consist of all of the trainable parameters in the network.
Let $f_{\theta}$ denote the function computed by the 
network, which maps $x$ to
\[
f_{\theta}(x) = \sum_{i=1}^{2p} u_i \phi(v_{i} \cdot x + b_i).
\]

Consider a training set $(x_1,y_1),\ldots,(x_n,y_n) \in \S^{d-1} \times \{-1, 1 \}$.  Define the training loss $L$ by
\[
L(\theta) := \frac{1}{n}\sum_{s=1}^n \ln(1+\exp\left(-y_s f_{\theta}(x_s)\right)),
\]
and refer to loss on example $s$ by
\[
L(\theta; x_s,y_s) := \ln(1+\exp\left(-y_s f_{\theta}(x_s)\right)).
\]
The gradient of the loss evaluated at $\theta$ is
\begin{align*}
    \nabla_{\theta} L(\theta) = \frac{1}{n} \sum_{s=1}^n \frac{-y_s \nabla_{\theta} f_{\theta}(x_s)}{1+\exp\left(y_s f_{\theta}(x_s)\right)}.
\end{align*}
Note that, since
$\phi''$ is not defined
at $0$ and $h$, the Hessian
of $L$ is not defined
everywhere.  We use
the following weak derivative
of $\phi'$ to define a weak
Hessian ($\nabla^2_{\theta} L(\theta)$):
\begin{align}
\label{e:hessian_helu}
\gamma(z) := \left\{ \begin{array}{ll}
            \frac{1}{ h} & \mbox{ if $z \in [0,h]$,} \\
            0 & \mbox{otherwise.}
         \end{array}
           \right.
\end{align}
\subsection{A General Bound}
\label{s:main.general}
We first analyze the iterates $\theta^{(1)},\theta^{(2)},\ldots$ defined by
\[
\theta^{(t+1)} := \theta^{(t)} - \alpha_t \nabla_{\theta} L\lvert_{\theta = \theta^{(t)}},
\]
in terms of properties of $\theta^{(1)}$.
\begin{theorem}
\label{t:main.general} 
There is an absolute constant $C_1 > 0$ such that, 
for all $n \geq 2$, 
for all 
$p \geq 1$,
for any initial parameters $\theta^{(1)}$
and
dataset $(x_1,y_1),\ldots,(x_n,y_n) \in \S^{d-1} \times \{-1,1\}$, for any positive $Q_1 \le \widetilde{Q}_1$ and positive $Q_2 \le \widetilde{Q}_2(Q_1)$ (where $\widetilde{Q}_1$ and $\widetilde{Q}_2$ are defined in eqs.~\eqref{eq:Q_1_def}-\eqref{eq:Q_2_def}) the
following holds for all
$t \geq 1$.
If $h = 1/p$ and 
each step-size
$\alpha_t = Q_1 \log^2(1/L(\theta^{(t)}))$,
and if $L(\theta^{(1)})\leq \frac{1}{n^{1 + C_1}}$
then, for all $t \geq 1$,
        \[
L(\theta^{(t)}) 
 \le 
    \frac{L(\theta^{(1)})}{ Q_2 \cdot (t-1)+1}.
\]
\end{theorem}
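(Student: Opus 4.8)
The plan is to prove the bound by induction on $t$, with a carefully chosen induction hypothesis that simultaneously tracks (a) the decay of the loss $L(\theta^{(t)})$ and (b) a bound on the growth of $\norm{\theta^{(t)}}$. The key analytic inputs, which I expect to have been established in preceding lemmas, are: (i) an \emph{alignment} estimate showing that when $L(\theta)$ is small, $-\nabla_\theta L(\theta) \cdot \theta$ is large relative to $\norm{\nabla_\theta L(\theta)}$, which combined with the observation that $\norm{\nabla_\theta L(\theta)}$ is at least of order $L(\theta)/\norm{\theta}$ (up to polylog factors) gives a lower bound $\norm{\nabla_\theta L(\theta)} \gtrsim L(\theta)\log(1/L(\theta))/\norm{\theta}$ or similar; and (ii) a \emph{smoothness} estimate for $L$ along the segment $[\theta^{(t)}, \theta^{(t+1)}]$, exploiting that the Huberized ReLU has bounded weak second derivative $\gamma \le 1/h = p$, so that the loss is, say, $\beta$-smooth on the relevant region with $\beta$ controlled in terms of $p$ and $\norm{\theta^{(t)}}$.

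First I would set up the one-step descent inequality. Using the smoothness bound (ii) and the gradient step $\theta^{(t+1)} = \theta^{(t)} - \alpha_t \nabla_\theta L(\theta^{(t)})$, a standard descent-lemma argument gives
\[
L(\theta^{(t+1)}) \le L(\theta^{(t)}) - \alpha_t\Big(1 - \tfrac{\beta \alpha_t}{2}\Big)\norm{\nabla_\theta L(\theta^{(t)})}^2,
\]
valid as long as the whole segment stays in the region where the smoothness estimate holds. With $\alpha_t = Q_1 \log^2(1/L(\theta^{(t)}))$ chosen so that $\beta \alpha_t \le 1$ (this is where $Q_1 \le \widetilde Q_1$ and the dependence of $\widetilde Q_2$ on $Q_1$ enter, forcing $Q_1$ small enough relative to the smoothness constant), the bracket is at least $1/2$, so $L(\theta^{(t+1)}) \le L(\theta^{(t)}) - \tfrac{\alpha_t}{2}\norm{\nabla_\theta L(\theta^{(t)})}^2$. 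Now I substitute the gradient lower bound (i): $\norm{\nabla_\theta L(\theta^{(t)})}^2 \gtrsim L(\theta^{(t)})^2 \log^2(1/L(\theta^{(t)}))/\norm{\theta^{(t)}}^2$, so that
\[
L(\theta^{(t+1)}) \le L(\theta^{(t)}) - c\, \frac{Q_1 \log^4(1/L(\theta^{(t)}))\, L(\theta^{(t)})^2}{\norm{\theta^{(t)}}^2}.
\]
To extract a $1/t$ rate from a recursion of the form $L_{t+1} \le L_t - c' L_t^2$ (after absorbing the $\log^4$ factor into a constant, using that $L_t$ is decreasing so the log is at least its initial value, and controlling $\norm{\theta^{(t)}}^2$ by a constant), I would use the classical trick of passing to reciprocals: $1/L_{t+1} \ge 1/L_t + c'$, which telescopes to $1/L_t \ge 1/L_1 + c'(t-1)$, i.e. $L(\theta^{(t)}) \le L(\theta^{(1)})/(Q_2(t-1)+1)$ with $Q_2 = c' L(\theta^{(1)})$ — or more precisely $Q_2$ is the constant coming out of this computation, which is where $\widetilde Q_2(Q_1)$ is defined.

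The main obstacle — and the reason the statement must be proved by a coupled induction rather than the two bullet points separately — is controlling $\norm{\theta^{(t)}}$, which appears in the denominator of the gradient lower bound and inside the smoothness constant $\beta$. The induction hypothesis should assert something like $\norm{\theta^{(t)}} \le \norm{\theta^{(1)}} + (\text{small increment})$, and the crucial estimate is that the weight displacement at step $t$ is \emph{small relative to the loss decrease}: since $\norm{\theta^{(t+1)} - \theta^{(t)}} = \alpha_t \norm{\nabla_\theta L(\theta^{(t)})}$ while the loss drops by $\tfrac{\alpha_t}{2}\norm{\nabla_\theta L(\theta^{(t)})}^2$, the ratio of displacement to loss-decrease is $\sim 2/\norm{\nabla_\theta L(\theta^{(t)})}$, and the alignment/lower bound again gives $\norm{\nabla_\theta L(\theta^{(t)})} \gtrsim L(\theta^{(t)})\log(1/L(\theta^{(t)}))/\norm{\theta^{(t)}}$; since $\sum_t$ of the loss decreases telescopes and is bounded by $L(\theta^{(1)})$, and $L(\theta^{(t)})\log(1/L(\theta^{(t)}))$ can be bounded below using the $1/t$ decay, one gets a convergent series bounding $\norm{\theta^{(t)}} - \norm{\theta^{(1)}}$. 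This is exactly the place where the hypothesis $L(\theta^{(1)}) \le n^{-(1+C_1)}$ is used: it makes the initial loss — and hence the total budget $L(\theta^{(1)})$ for weight growth, as well as the smallness needed for the alignment lemma to apply — small enough (in terms of $n$ and $p$) that $\norm{\theta^{(t)}}$ never leaves the region where (i) and (ii) are valid, closing the induction. I would organize the writeup so that the induction step first uses the $\norm{\theta^{(t)}}$ bound to validate the descent and gradient inequalities, derives the new loss bound, and then uses the freshly-derived loss decrease to re-establish the $\norm{\theta^{(t+1)}}$ bound.
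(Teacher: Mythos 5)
Your overall architecture matches the paper's: a coupled induction combining a descent lemma (with a smoothness constant controlled by $p$ and by $L$ itself) with a lower bound $\lv \nabla L_t\rv \gtrsim L_t\log(1/L_t)/\lv V^{(t)}\rv$ obtained from the alignment of $-\nabla L_t$ with the current weights, followed by a comparison of the per-step weight displacement to the per-step loss decrease. Your reciprocal trick for extracting the $1/t$ rate is a legitimate substitute for the paper's direct verification of the inductive bound.

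There is, however, a genuine gap in the norm-control step, which you correctly identify as the crux. You propose the invariant $\lv\theta^{(t)}\rv \le \lv\theta^{(1)}\rv + (\text{small increment})$ and claim the displacements form a convergent series. This is false: since $|f_\theta(x)| \le \sqrt{2p}\,\lv\theta\rv$ for unit-norm inputs, driving the logistic loss to zero forces $\lv\theta^{(t)}\rv\to\infty$, so no bounded-norm invariant can close the induction. Concretely, the displacement at step $t$ is at most $\tfrac{6}{5}(L_t-L_{t+1})/\lv\nabla L_t\rv \lesssim (L_t-L_{t+1})\lv V^{(t)}\rv/\bigl(L_t\log(1/L_t)\bigr)$; even granting the $1/t$ rate (so $L_t-L_{t+1}\sim 1/t^2$ and $L_t\sim 1/t$), the summands are of order $1/(t\log t)$ and the series diverges, roughly like $\log\log(1/L_t)$. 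The paper's fix is to maintain instead the ratio invariant $\log^2(1/L_t)/\lv V^{(t)}\rv \ge \log^2(1/L_1)/\lv V^{(1)}\rv$: the norm may grow, but no faster than $\log^2(1/L_t)$, and this is exactly what the gradient lower bound delivers, since the relative loss decrease $\tfrac{5\alpha_t\lv\nabla L_t\rv^2}{6L_t\log(1/L_t)}$ dominates the relative norm increase $\alpha_t\lv\nabla L_t\rv/\lv V^{(t)}\rv$ precisely when $\lv\nabla L_t\rv\gtrsim L_t\log(1/L_t)/\lv V^{(t)}\rv$. Relatedly, you cannot ``control $\lv\theta^{(t)}\rv^2$ by a constant''; the recursion still closes because the growing step size $\alpha_t=Q_1\log^2(1/L_t)$ exactly compensates for the growth of $\lv V^{(t)}\rv$, turning the one-step bound into $L_{t+1}\le L_t\bigl(1-cQ_1L_t\log^4(1/L_1)/\lv V^{(1)}\rv^2\bigr)$ with constants depending only on the initial data. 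Your sketch does not engage with why the step size must grow over time, and with a bounded-norm invariant the argument as written cannot be repaired.
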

The proof of this theorem is presented in Section~\ref{s:proofdetails_main_general} below. 

We reiterate that this theorem makes no assumption about the number of hidden nodes ($p$) and makes a mild assumption on the number of samples required ($n\ge 2$). The only other constraint is that the initial loss needs to be smaller than $1/n^{1+C_1}$, for some universal constant $C_1>0$. Our choice of a time-varying step-size $\alpha_t$ that grows with $\log^2(1/L(\theta^{(t)}))$ leads to an upper bound on the loss that scales with $1/t$.

\subsection{Clustered Data with Random Initialization}
\label{s:main.clusters}
We next consider gradient descent after random
initialization by showing that, after one iteration,
$\theta^{(1)}$ has the favorable properties needed to apply
Theorem~\ref{t:main.general}.
We assume that all trainable parameters are initialized by
being chosen independently at random from
$\cN(0,\sigma^2)$.  
Let $\theta^{(0)}$ be the initial value of
the parameters and $\alpha_0$ be the original
step-size (which will be chosen as a function of $p$).

This analysis depends on cluster and separation conditions. We shall use $k$ and $\ell$ to index over the clusters (ranging from $1$ to $4$), and $s$ and $q$ to index over individual samples (ranging from $1$ to $n$).
We assume that the training data can be divided into four clusters $\cK_1,\ldots,\cK_4$. All examples $(x_s,y_s)$ in
clusters $\cK_1$ and $\cK_2$ have $y_s = 1$ and 
all examples $(x_s,y_s)$ in
clusters $\cK_3$ and $\cK_4$ have $y_s = -1$. For some cluster index $k$, let $y_k$ be the label shared by
all examples in cluster $\cK_k$. With some abuse of notation we will often use $s\in \cK_k$ to denote that the example $(x_s,y_s)$ belongs to the cluster $\cK_k$. 

We make the following assumptions about the clustered training data.  
\begin{itemize}
\item For $\epsilon > 0$, for
each cluster $\cK_k$, we assume $n_k \eqdef |\cK_k|$ satisfies 
$\left(1/4-\epsilon\right)n \leq n_k \leq \left(1/4+\epsilon\right)n.$%
\item Assume that
$\lv x_s \rv=1$ for all $s \in [n]$.
\item For a radius $r > 0$,
each cluster $\cK_k$ has a center $\mu_k$ with $\lv \mu_k\rv = 1$, such that
for all $s \in \cK_k$, $\lv x_s - \mu_k \rv \leq r$.
\item For a separation parameter $\Delta > 0$, we assume that
for all $k,\ell$ with $y_k \neq y_\ell$, $\mu_k \cdot \mu_{\ell} \leq \Delta$.
\end{itemize}

Under these assumptions we demonstrate that with high probability random initialization followed by one step of gradient descent leads to a network whose training loss is at most $\exp\left(-\Omega(p^{1/2-\beta})\right)$ for $\beta >0$. 
Theorem~\ref{t:main.general}
then implies that gradient descent in the subsequent steps leads to a solution with
training loss approaching zero.

\begin{theorem}
\label{t:main.clusters}
For any $\beta \in (0,1/2)$,
there are absolute constants $C_2,\ldots,C_6 > 0$ such that, under the assumptions described above, the
following holds for all $\delta < 1/2$ and $n \geq 4$.
If 
$\Delta < C_2$, 
$r < C_2$, $\epsilon < C_2$,
$\alpha_0 = \frac{1}{p^{1/2 + \beta}}$, $\sigma = \frac{1}{p^{1/2 + \beta/2}}$, $h = \frac{1}{p}$, and
$p \geq \ln^{C_3}(n d/\delta)$ both of the following hold:
\begin{enumerate}[(a)]
   \label{i:first_step}
    \item With probability $1-\delta$, $$L(\theta^{(1)}) 
     \le \exp(-C_4 p^{(1/2 - \beta)});$$
    \item 
\label{i:fixed_alpha} 
    if, for all $t \geq 1$, $\alpha_t = \frac{C_5\log^2(1/L(\theta^{(t)}))}{p}$, then
      with probability $1 - \delta$, 
       for all $t > 1$,
        \[
L(\theta^{(t)}) 
 \le 
    \frac{C_6 d}{ p^{1 - 3 \beta} t}.
\]
\end{enumerate}
\end{theorem}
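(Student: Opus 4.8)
The plan is to split the argument along the two parts of the statement. For part~(a), the goal is to show that a single gradient step starting from the Gaussian initialization $\theta^{(0)}$ produces parameters $\theta^{(1)}$ with $L(\theta^{(1)}) \le \exp(-C_4 p^{1/2-\beta})$. The overall strategy is the ``good nodes overwhelm confounding nodes'' picture sketched in the introduction. First I would fix an example $(x_s,y_s)$ and analyze, at random initialization, the set of hidden units $i$ that ``capture'' it, i.e.\ for which $v_i^{(0)}\cdot x_s + b_i^{(0)}$ lies in the active region of $\phi$. Because $v_i^{(0)}$ and $b_i^{(0)}$ are Gaussian, this inner product is a mean-zero Gaussian of a known variance, so standard Gaussian concentration gives that roughly half of the $2p$ units capture $x_s$, with deviations controlled by $p^{1/2}$-scale fluctuations; anti-concentration (a small-ball bound for Gaussians) guarantees that not too many units land in the narrow Huber region $[0,h]=[0,1/p]$, so that most captured units are in the linear piece where $\phi'=1$. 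Then I would compute the first gradient update $v_i^{(1)} = v_i^{(0)} - \alpha_0 \nabla_{v_i} L|_{\theta^{(0)}}$, noting that at initialization all losses $L(\theta^{(0)};x_q,y_q)$ are $\approx \ln 2$, so each per-example gradient contribution is essentially $-\tfrac{1}{2}\cdot \tfrac{1}{n} y_q u_i \phi'(v_i^{(0)}\cdot x_q + b_i^{(0)}) x_q$.

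The crux of part~(a) is then to lower bound $y_s f_{\theta^{(1)}}(x_s)$. Writing $f_{\theta^{(1)}}(x_s) = \sum_i u_i\phi(v_i^{(1)}\cdot x_s + b_i^{(1)})$ and expanding the update, the contribution of a unit $i$ that captures $x_s$ decomposes into a ``self'' term proportional to $\alpha_0 \langle x_s, x_s\rangle = \alpha_0$ (always helpful, with the right sign once $u_i$ is accounted for) and ``cross'' terms proportional to $\alpha_0 \langle x_q, x_s\rangle$ for $q\ne s$. Using the cluster structure — $\lv x_s-\mu_k\rv\le r$ within a cluster and $\mu_k\cdot\mu_\ell\le\Delta$ across opposite labels — the same-label cross terms are also (approximately) helpful while the opposite-label cross terms are bounded by $O(r+\Delta)$ in magnitude; choosing $\Delta, r, \epsilon < C_2$ small makes the net drift per captured good node a positive constant times $\alpha_0/ (\text{something}) $. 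The subtle point is the magnitude of random cancellation in the cross terms across the $\approx p$ capturing units: the signs $u_i$ and the activation indicators are, conditionally, close to independent mean-zero, so a Bernstein/Hoeffding bound shows the fluctuation of $\sum_i u_i(\cdots)$ is only $O(\sqrt{p})$ times the per-term scale, whereas the systematic helpful drift is $\Omega(p)$ times that scale. With $\alpha_0 = p^{-1/2-\beta}$ and $\sigma = p^{-1/2-\beta/2}$ plugged in, and tracking the Huber scaling $h=1/p$ through $\phi$, this yields $y_s f_{\theta^{(1)}}(x_s) = \Omega(p^{1/2-\beta})$ uniformly over $s$ with probability $1-\delta$ (the $\ln^{C_3}(nd/\delta)$ lower bound on $p$ supplies enough room for a union bound over the $n$ examples and for the Gaussian tail events in dimension $d$). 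Since $L(\theta^{(1)};x_s,y_s) = \ln(1+e^{-y_s f_{\theta^{(1)}}(x_s)}) \le e^{-y_s f_{\theta^{(1)}}(x_s)}$, averaging gives part~(a).

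For part~(b), the work is essentially done: part~(a) gives $L(\theta^{(1)}) \le \exp(-C_4 p^{1/2-\beta})$, and for $p \ge \ln^{C_3}(nd/\delta)$ with $C_3$ large enough this is below the threshold $1/n^{1+C_1}$ required by Theorem~\ref{t:main.general}. I would then verify that the prescribed step-size $\alpha_t = C_5\log^2(1/L(\theta^{(t)}))/p$ is of the form $Q_1\log^2(1/L(\theta^{(t)}))$ with $Q_1 = C_5/p \le \widetilde Q_1$ and that $Q_2$ can be taken as $\widetilde Q_2(Q_1) = \Omega(1/(pd))$ (the inverse dependence on $p$ and $d$ coming from the smoothness constant of $L$, which scales with $p$ because $h=1/p$, and from $\lv \sum_s x_s\rv^2 \le$ something like $nd$ — this is where the $d$ in the final bound originates). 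Theorem~\ref{t:main.general} then gives $L(\theta^{(t)}) \le L(\theta^{(1)})/(Q_2(t-1)+1) \le 1/(Q_2 (t-1))$ for $t>1$, and substituting the lower bound on $Q_2$ yields $L(\theta^{(t)}) \le C_6 d/(p^{1-3\beta} t)$ after absorbing constants (the exponent $1-3\beta$ rather than $1$ presumably reflecting additional $p^\beta$ losses incurred when bounding $\widetilde Q_2$ under the random initialization scale). The main obstacle is unquestionably the concentration/anti-concentration bookkeeping in part~(a): correctly identifying the ``good'' capturing events, showing their helpful drift survives the cross-example interference, and proving the confounding fluctuations are only $O(\sqrt p)$ — all while carrying the $h=1/p$, $\sigma = p^{-1/2-\beta/2}$, $\alpha_0 = p^{-1/2-\beta}$ scalings through every term so that the final exponent comes out as $p^{1/2-\beta}$ and not something weaker.
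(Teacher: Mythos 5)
Part~(a) of your proposal matches the paper's plan closely: both decompose the hidden units by whether they capture $x_s$ with a margin and by the sign $u_i$, use Gaussian concentration to quantify the systematic drift from the first gradient step, anti-concentration to bound the number of units in the narrow transition band around $[0,h]$, and the cluster/separation structure to control cross-example interference, with the same $\alpha_0,\sigma,h$ scalings yielding $y_s f_{\theta^{(1)}}(x_s) = \Omega(\alpha_0 p) = \Omega(p^{1/2-\beta})$ uniformly over $s$ on a high-probability event.

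Part~(b), however, contains a genuine gap in tracking $Q_2$. You assert $\widetilde{Q}_2(Q_1) = \Omega(1/(pd))$, but by its definition in~\eqref{eq:Q_2_def}, $\widetilde{Q}_2(Q_1) = \frac{125\,Q_1 L_1\log^4(1/L_1)}{216\,\lv V^{(1)}\rv^2}$ carries a factor of $L_1 \le \exp(-C_4 p^{1/2-\beta})$, so it is exponentially small in $p^{1/2-\beta}$ --- not polynomially bounded below. Taking $Q_2 = \Theta(1/(pd))$ would violate the requirement $Q_2 \le \widetilde{Q}_2(Q_1)$ of Theorem~\ref{t:main.general}, so that theorem cannot be invoked with your choice. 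Your subsequent chain $L_t \le 1/(Q_2(t-1))$ also discards the $L_1$ in the numerator of the Theorem~\ref{t:main.general} conclusion, which is exactly what makes the argument work: keeping it, one has $L_t \le L_1/(Q_2(t-1))$, and the $L_1$ in the numerator cancels against the $L_1$ sitting inside $Q_2 = \widetilde{Q}_2(Q_1)$, leaving $L_t \lesssim \lv V^{(1)}\rv^2/(Q_1\log^4(1/L_1)(t-1)) \approx d/(p^{1-3\beta}(t-1))$. Relatedly, the $d$ in the final bound comes from $\lv V^{(1)}\rv^2 = \Theta(d\,p\,\sigma^2) = \Theta(d/p^\beta)$ (Part~\ref{i:init:normboundV} of Lemma~\ref{l:concentration}), not from a quantity like $\lv\sum_s x_s\rv^2$ or from the smoothness constant.
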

This theorem is proved in Section~\ref{s:proofdetails_theorem_cluster}.
It 
shows that if the data satisfies the cluster and separation conditions then the loss after a single step of gradient descent decreases by an amount that is exponential in $p^{1/2-\beta}$ with high probability. This result only 
 requires the width $p$ to be poly-logarithmic in the number of samples, input dimension and $1/\delta$.


\section{Proof of Theorem~\protect\ref{t:main.general}}
\label{s:proofdetails_main_general}

In this section, we prove
Theorem~\ref{t:main.general}.
Our proof is by induction.  As mentioned in
the introduction, the key lemma is a lower bound on the norm of the gradient.
Our lower bound (Lemma~\ref{l:lower.bound.gradient}) is in terms of the
loss, and also the norm of
the weights.  Roughly speaking, for it to provide leverage, we need that the loss
is small relative to the size of the weights, or, in other words, that the
model doesn't excessively ``waste weight''.  The
bound of Theorem~\protect\ref{t:main.general} accounts for the amount of such
wasted weight at initialization, so we do not need a wasted-weight assumption.  On the other hand, we need a condition on the wasted weight
in our inductive hypothesis---we need to prove that training does not increase
the amount of wasted weight too much.

Lemma~\ref{l:lower.bound.gradient} also requires an upper bound on the step
size -- another part of the inductive hypothesis ensures that this requirement
is met throughout training.

Before the proof, we lay some
groundwork.  First, to simplify expressions, we reduce to
the case that the biases are zero.  Then we establish some lemmas that
will be used in the inductive step, about the progress in an iteration,
smoothness, etc.  Finally, we applied these tools in the inductive proof.

\subsection{Reduction to the Zero-Bias Case}
We first note that, applying a standard reduction,
without loss of generality, we may assume
\begin{itemize}
    \item $b_1,\ldots,b_{2p}$ are fixed to $0$, and not trained,
  and
    \item for all $s, q$, $x_s \cdot x_{q} \geq 0$.
\end{itemize}
The idea is to adopt the view that the inputs have an
additional component that acts as a placeholder for the bias
term, which allows us to view the bias term as another component
of $v_i$.   The details are in Appendix~\ref{a:reduction.general}.  
We will make the above assumptions from now on.  Since the bias
terms are fixed at zero, for a matrix $V$ whose rows are the weights of the hidden units,
we will refer to the resulting loss as $L(V)$, 
$f_{\theta}$ as $f_V$, and so on.
Let $V^{(t)}$ be the $t$th iterate.

\subsection{Additional Definitions}

\begin{definition}
\label{d:L_tk}
For all 
iterates $t$,
define
$L_{ts} := L(V^{(t)}; x_s,y_s)$ and let $L_t := \frac{1}{n} \sum_{s=1}^n L_{ts}$. 
Additionally for all 
$t$,
define $\nabla L_t : = \nabla_{V} L|_{V = V^{(t)}}$. We will also use $\nabla_{V^{(t)}}L$ to refer to the gradient $\nabla_{V} L|_{V = V^{(t)}}$.
\end{definition}
\begin{definition}
\label{d:g_tk} For any weight matrix $V$, define
\begin{align*}
    g_s(V):= \frac{1}{1+\exp\left(y_s f_{V}(x_s)\right)}.
\end{align*}
We often 
will use $g_s$ as shorthand for $g_s(V)$ when
$V$ can be determined from context.
Further, for all $t \in \{0,1,\ldots\}$, define
$g_{ts} := g_s(V^{(t)}).$
\end{definition}
Informally, $g_s(V)$ is the size of the contribution
of example $s$ to the gradient.

\subsection{Technical Tools}\label{ss:technicaltool}
In this subsection we assemble several technical tools required to prove Theorem~\ref{t:main.general}. The proofs that are omitted in this subsection are presented in Appendix~\ref{a:omiittedproof_main}.

We start with the following lemma, which is a slight variant of a standard inequality, and provides a bound on the loss after a step of gradient descent when the loss function is locally smooth.
It is proved in Appendix~\ref{a:pl}.
\begin{restatable}{lem}{pllemma}
\label{l:pl}
For $\alpha_t > 0$, let $V^{(t+1)} = V^{(t)} - \alpha_t \nabla L_t$.
If, for all convex combinations $W$ of $V^{(t)}$
and $V^{(t+1)}$, we have $\lv \nabla^2_{W} L \rv_{op} \leq M$, then
if $\alpha_t \leq \frac{1}{3M}$, we have
\[
L_{t+1} \leq L_t - \frac{5\alpha_t \lv \nabla L_t \rv^2}{6}.
\]
\end{restatable}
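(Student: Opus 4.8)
The plan is to run the standard ``descent lemma'' argument but track constants carefully so that the $\frac{1}{2}$ in the usual bound is replaced by $\frac{5}{6}$ under the slightly stronger step-size restriction $\alpha_t \le \frac{1}{3M}$. First I would write the exact Taylor expansion with integral (or Lagrange) remainder for the one-dimensional function $\phi(\tau) := L\big(V^{(t)} - \tau \alpha_t \nabla L_t\big)$ on $\tau \in [0,1]$, using the weak Hessian $\nabla^2 L$ defined via $\gamma$ in~\eqref{e:hessian_helu} (so one should note that $L$ is $C^1$ with Lipschitz gradient and that the fundamental theorem of calculus applies to $\nabla L$ with the weak Hessian as its a.e.\ derivative along the segment). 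This gives
\[
L_{t+1} = L_t - \alpha_t \lv \nabla L_t \rv^2 + \int_0^1 (1-\tau)\, \alpha_t^2\, (\nabla L_t)^\top \nabla^2_{W_\tau} L\, (\nabla L_t)\, d\tau,
\]
where $W_\tau = (1-\tau) V^{(t)} + \tau V^{(t+1)}$ is a convex combination of $V^{(t)}$ and $V^{(t+1)}$.

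Next I would bound the remainder term. By hypothesis $\lv \nabla^2_{W_\tau} L \rv_{op} \le M$ for every such convex combination, so $(\nabla L_t)^\top \nabla^2_{W_\tau} L\, (\nabla L_t) \le M \lv \nabla L_t \rv^2$, and $\int_0^1 (1-\tau)\, d\tau = \frac12$, giving
\[
L_{t+1} \le L_t - \alpha_t \lv \nabla L_t \rv^2 + \frac{M \alpha_t^2}{2} \lv \nabla L_t \rv^2 = L_t - \alpha_t \lv \nabla L_t \rv^2 \Big(1 - \frac{M\alpha_t}{2}\Big).
\]
Now plug in $\alpha_t \le \frac{1}{3M}$, so $\frac{M\alpha_t}{2} \le \frac16$ and $1 - \frac{M\alpha_t}{2} \ge \frac56$, which yields $L_{t+1} \le L_t - \frac{5\alpha_t}{6}\lv \nabla L_t \rv^2$, as claimed. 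The only subtlety here is that $\nabla^2 L$ is a \emph{weak} Hessian: one should check that $\gamma$ is genuinely the a.e.\ derivative of $\phi'$ and that along the line segment $\tau \mapsto W_\tau$ the gradient $\nabla L$ is absolutely continuous (it is, since $\nabla L$ is Lipschitz because $\phi'$ is Lipschitz with constant $1/h$ and the $x_s$ are bounded), so the integral form of Taylor's theorem with the weak second derivative is valid.

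The main obstacle, such as it is, is purely bookkeeping: making sure the ``convex combination'' quantifier is used correctly (every point $W_\tau$ on the segment is a convex combination of the two endpoints, so the hypothesis applies uniformly in $\tau$) and justifying the second-order Taylor expansion with the weak Hessian rather than a classical one. Once those points are settled the inequality is immediate from $\int_0^1(1-\tau)d\tau = 1/2$ and the numerical bound $1 - 1/6 = 5/6$.
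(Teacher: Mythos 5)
Your proposal is correct and follows essentially the same route as the paper: the paper first uses the weak Hessian to show $\nabla L$ is $M$-Lipschitz along the segment and then invokes the standard descent lemma, which is just a repackaging of your integral-remainder Taylor expansion, and both arrive at $L_{t+1} \le L_t - \alpha_t\left(1-\tfrac{\alpha_t M}{2}\right)\lv \nabla L_t \rv^2$ before substituting $\alpha_t \le \tfrac{1}{3M}$. Your attention to justifying the use of the weak Hessian in the Taylor/fundamental-theorem step matches the paper's treatment.
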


To apply Lemma~\ref{l:pl} we need to show that the loss function $L$ is smooth
near $L_t$; the following lemma is a start.
It is proved in Appendix~\ref{a:L_smooth}.
\begin{restatable}{lem}{hessianlemma}
\label{l:L_smooth} 
If $h=1/p$, for any weight matrix $V \in \R^{2p \times (d+1)}$, 
$
\lv \nabla_{V}^2 L \rv_{op} \le 5  p L(V).
$
\end{restatable}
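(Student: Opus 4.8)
The plan is to bound the operator norm of the weak Hessian $\nabla^2_V L$ by differentiating the loss twice in $V$ and controlling each piece. Write $L(V) = \frac{1}{n}\sum_{s} \ell(y_s f_V(x_s))$ where $\ell(z) = \log(1+e^{-z})$, so $\ell'(z) = -g$ and $\ell''(z) = g(1-g) \le g$, where $g = 1/(1+e^{y_s f_V(x_s)})$. By the chain rule, for each sample $s$,
\[
\nabla^2_V \big[\ell(y_s f_V(x_s))\big] = \ell''(y_s f_V(x_s))\, \nabla_V f_V(x_s)\, \nabla_V f_V(x_s)^\top + \ell'(y_s f_V(x_s))\, y_s \nabla^2_V f_V(x_s).
\]
The first term is PSD with operator norm at most $\ell'' \cdot \lv \nabla_V f_V(x_s)\rv^2 \le g_s \lv \nabla_V f_V(x_s)\rv^2$; the second has operator norm at most $g_s \cdot \lv \nabla^2_V f_V(x_s)\rv_{op}$ since $|\ell'| = g_s \le 1$ and $|y_s| = 1$.

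Next I would bound the two geometric quantities using the structure of $f_V(x) = \sum_i u_i \phi(v_i\cdot x)$ with $|u_i|=1$, $\lv x\rv \le 1$, and the Huberized ReLU. Since $|\phi'(z)| \le 1$ everywhere, the gradient $\nabla_{v_i} f_V(x) = u_i \phi'(v_i\cdot x)\, x$ has norm at most $\lv x\rv \le 1$; because the rows of $\nabla_V f_V(x)$ are these $2p$ vectors, $\lv \nabla_V f_V(x)\rv^2 \le 2p$. For the Hessian of $f$: it is block-diagonal across the hidden units (the $i$th block is $u_i \gamma(v_i\cdot x)\, x x^\top$, using the weak second derivative $\gamma$ from eq.~\eqref{e:hessian_helu}), so its operator norm is the max over $i$ of $|\gamma(v_i\cdot x)|\,\lv x\rv^2 \le 1/h = p$. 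Combining, the per-sample Hessian has operator norm at most $g_s(2p) + g_s \cdot p = 3p\, g_s$, so $\lv\nabla^2_V L\rv_{op} \le \frac{1}{n}\sum_s 3p\, g_s = 3p \cdot \frac{1}{n}\sum_s g_s$.

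Finally I relate $\frac1n\sum_s g_s$ to $L(V)$. Using $\log(1+e^{-z}) \ge \log 2 \cdot \frac{1}{1+e^z}$ for all $z$ — equivalently $g_s \le \frac{1}{\log 2} L(V; x_s, y_s)$ — we get $\frac1n\sum_s g_s \le \frac{1}{\log 2} L(V) \le \tfrac{3}{2} L(V)$, which yields $\lv\nabla^2_V L\rv_{op} \le 3p \cdot \tfrac{3}{2} L(V) = \tfrac{9}{2} p\, L(V) \le 5p\, L(V)$, as claimed. The only subtlety — and the main thing to be careful about — is the bookkeeping for the weak Hessian: one must check that the claimed block-diagonal matrix genuinely serves as a weak Hessian of $f_V$ (hence of $L$) given that $\phi'$ is not differentiable at $0$ and $h$, which is exactly what the definition of $\gamma$ in eq.~\eqref{e:hessian_helu} is set up to handle, and then confirm the elementary inequality $g_s \le L(V;x_s,y_s)/\log 2$ holds on the whole real line. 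Everything else is a routine chain-rule computation.
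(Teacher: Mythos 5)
Your proposal is correct, and it reaches the stated bound by a genuinely different decomposition than the paper. The paper views the weak Hessian as a $2p\times 2p$ array of $(d+1)\times(d+1)$ blocks $\nabla_{v_i}\nabla_{v_j}L$, bounds each block's operator norm separately ($\le L$ off the diagonal, $\le 2L/h$ on the diagonal, both via Lemma~\ref{l:relationsbetweengradientandloss}), and then controls $\lv(\nabla^2 L)a\rv^2$ by an explicit case analysis of the triple sum over block indices, arriving at $\sqrt{12}\,pL\le 5pL$. You instead split each per-sample Hessian into the rank-one Gauss--Newton term $\ell''\,\nabla_V f\,(\nabla_V f)^\top$ (operator norm $\le g_s\lv\nabla_V f\rv^2\le 2p\,g_s$) and the block-diagonal curvature term $\ell'\,y_s\nabla^2_V f$ (operator norm $\le g_s/h=p\,g_s$, since the operator norm of a block-diagonal matrix is the maximum over blocks), then apply the triangle inequality over samples. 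This avoids the combinatorial bookkeeping entirely and makes the source of the factor $p$ transparent. Two small remarks: the paper's Lemma~\ref{l:relationsbetweengradientandloss} already gives $g_s\le L(V;x_s,y_s)$ with constant $1$ (via $\log(1+u)\ge u/(1+u)$), so your $1/\log 2$ factor is unnecessary and you could conclude with the better constant $3$ in place of $9/2$; and your closing caveat about the weak Hessian is exactly what the paper's definition of $\gamma$ in equation~\eqref{e:hessian_helu} resolves, since $f_V$ is a sum of terms each depending on a single row $v_i$, so $\nabla^2_V f_V$ is indeed block diagonal with $i$th block $u_i\gamma(v_i\cdot x)\,xx^\top$.
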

Next, we show that $L$ 
changes slowly in general, and especially slowly when it is
small.  The proof is in Appendix~\ref{a:gradient_norm.upper}.

\begin{restatable}{lem}{gradientupper}
\label{l:gradient_norm.upper} For any weight matrix $V \in \R^{2p \times (d+1)}$, 
$\lv \nabla_{V} L \rv \le \sqrt{2p} \min\{ L(V), 1\}.$
\end{restatable}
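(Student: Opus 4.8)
The goal is the bound $\lv \nabla_V L \rv \le \sqrt{2p}\,\min\{L(V),1\}$. I would start from the explicit formula for the gradient, $\nabla_V L = \frac{1}{n}\sum_{s=1}^n (-y_s) g_s(V)\,\nabla_V f_V(x_s)$, where $g_s(V) = 1/(1+\exp(y_s f_V(x_s)))$ as in Definition~\ref{d:g_tk}. By the triangle inequality for the Frobenius norm, $\lv \nabla_V L\rv \le \frac{1}{n}\sum_s g_s(V)\,\lv \nabla_V f_V(x_s)\rv$. The two factors to control are (i) $g_s(V)$, which I want to bound by the per-example loss $L(V;x_s,y_s)$, and (ii) the norm of the per-example network gradient $\lv \nabla_V f_V(x_s)\rv$.

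For (i), the key elementary inequality is that for all $z \in \R$, $\frac{1}{1+e^{z}} \le \min\{\ln(1+e^{-z}),1\}$: the bound by $1$ is trivial since $g_s \le 1$, and $\frac{1}{1+e^z}\le \ln(1+e^{-z})$ follows because $\ln(1+e^{-z})=\int_{z}^\infty \frac{1}{1+e^{t}}\,dt \ge \frac{1}{1+e^z}\cdot$ (something) — more simply, one checks $\ln(1+u)\ge \frac{u}{1+u}$ with $u=e^{-z}$, giving $\ln(1+e^{-z})\ge \frac{e^{-z}}{1+e^{-z}} = \frac{1}{1+e^z}$. Hence $g_s(V) \le \min\{L(V;x_s,y_s),1\}$. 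For (ii), since $f_V(x) = \sum_{i=1}^{2p} u_i \phi(v_i\cdot x)$ with $u_i = \pm1$ and (in the zero-bias reduction) $\lv x\rv = 1$, we have $\nabla_{v_i} f_V(x) = u_i \phi'(v_i\cdot x)\, x$, and $|\phi'(z)| \le 1$ everywhere for the Huberized ReLU (it ramps linearly from $0$ to $1$ on $[0,h]$ and is $1$ thereafter, $0$ for $z<0$). Therefore $\lv \nabla_{v_i} f_V(x)\rv \le \lv x\rv = 1$, and summing the squared blocks over the $2p$ hidden units gives $\lv \nabla_V f_V(x)\rv \le \sqrt{2p}$.

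Combining, $\lv \nabla_V L\rv \le \sqrt{2p}\cdot \frac{1}{n}\sum_s \min\{L(V;x_s,y_s),1\}$. To finish I use concavity of $u\mapsto\min\{u,1\}$ together with Jensen's inequality: $\frac{1}{n}\sum_s \min\{L(V;x_s,y_s),1\} \le \min\{\frac{1}{n}\sum_s L(V;x_s,y_s),\,1\} = \min\{L(V),1\}$. This yields exactly $\lv \nabla_V L\rv \le \sqrt{2p}\,\min\{L(V),1\}$.

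I do not expect a serious obstacle here; this is a routine estimate. The only points needing care are the scalar inequality $g_s \le \min\{L(V;x_s,y_s),1\}$ (a one-line calculus fact) and making sure the Frobenius-norm bookkeeping across the $2p$ weight blocks is clean — in particular that the reduction to zero bias has already absorbed the bias coordinate into $v_i$ and normalized $\lv x_s\rv$ so that $|\phi'(v_i\cdot x_s)|\le 1$ and $\lv x_s\rv \le 1$ together give each block norm at most $1$. Everything else is the triangle inequality and Jensen.
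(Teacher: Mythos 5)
Your proposal is correct and follows essentially the same route as the paper: both arguments reduce to bounding $\lv \nabla_V L\rv$ by $\frac{\sqrt{2p}}{n}\sum_s g_s(V)$ (you via the triangle inequality on the per-example gradients, the paper by expanding the squared Frobenius norm and bounding $|\phi'(v_i\cdot x_s)\phi'(v_i\cdot x_q)\,y_s y_q\, x_s\cdot x_q|\le 1$, which yields the identical intermediate quantity), and both then invoke the same elementary facts $g_s\le 1$ and $g_s\le L(V;x_s,y_s)$ (the paper's Lemma~\ref{l:relationsbetweengradientandloss}). The only cosmetic difference is your use of concavity of $u\mapsto\min\{u,1\}$ to combine the two cases in one step, versus the paper treating the bounds $\sqrt{2p}$ and $\sqrt{2p}\,L(V)$ separately.
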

The following lemma applies Lemma~\ref{l:pl} (along with Lemma~\ref{l:L_smooth}) to show that if the step-size at step $t$ is small enough then the loss decreases by an amount that is proportional to the squared norm of the gradient. 
Its proof is in Appendix~\ref{a:L.one_step_improvement}.
\begin{restatable}{lem}{onesteplemma}
\label{l:L.one_step_improvement}
If $\alpha_t L_t \leq \frac{1}{30  p }$, then
$
L_{t+1} \leq L_t- \frac{5\alpha_t  \lv \nabla L_t \rv^2 }{6} .
$
\end{restatable}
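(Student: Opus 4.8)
The plan is to apply Lemma~\ref{l:pl} with a smoothness constant of the form $M \asymp p L_t$, which requires two preliminary observations: first, that the loss does not grow by more than a constant factor along the segment joining $V^{(t)}$ to $V^{(t+1)}$, and second, that on this segment Lemma~\ref{l:L_smooth} then yields $\lv \nabla^2_W L\rv_{op} \le M$ with $M$ small enough that the hypothesis $\alpha_t L_t \le \frac{1}{30p}$ translates exactly into $\alpha_t \le \frac{1}{3M}$. Before starting, note that $L_{ts} = \ln(1+\exp(-y_s f_{V^{(t)}}(x_s))) > 0$ for every finite weight matrix, so $L_t > 0$ and quantities such as $1/(pL_t)$ are well defined; if in addition $\nabla L_t = 0$ then $V^{(t+1)} = V^{(t)}$ and the claim is trivial, so we may assume $\nabla L_t \neq 0$.

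The heart of the argument is the uniform bound on $L$ over the segment. Every convex combination of $V^{(t)}$ and $V^{(t+1)}$ has the form $W(\lambda) = V^{(t)} - \lambda \alpha_t \nabla L_t$ for $\lambda \in [0,1]$. Since $\phi \in C^1$, the map $\lambda \mapsto L(W(\lambda))$ is $C^1$, and by Cauchy--Schwarz together with Lemma~\ref{l:gradient_norm.upper} (applied both at $W(\lambda)$ and at $V^{(t)}$),
\[
\frac{d}{d\lambda} L(W(\lambda)) = -\alpha_t \langle \nabla_{W(\lambda)} L, \nabla L_t \rangle \le \alpha_t \lv \nabla_{W(\lambda)} L\rv \, \lv \nabla L_t \rv \le 2p\, \alpha_t\, L_t\, L(W(\lambda)).
\]
Let $\bar L := \max_{\lambda \in [0,1]} L(W(\lambda))$, attained by continuity at some $\lambda^\star \in [0,1]$ (if $\lambda^\star = 0$ then $\bar L = L_t$ and the following bound is immediate). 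Integrating the displayed inequality over $[0,\lambda^\star]$, bounding the integrand by $2p\alpha_t L_t \bar L$, and using the hypothesis $\alpha_t L_t \le \frac{1}{30p}$, which gives $2p\alpha_t L_t \le \frac{1}{15}$, we obtain $\bar L - L_t \le \frac{1}{15}\bar L$, hence $\bar L \le \frac{15}{14} L_t \le 2 L_t$. Consequently, by Lemma~\ref{l:L_smooth}, for every convex combination $W$ of $V^{(t)}$ and $V^{(t+1)}$ we have $\lv \nabla^2_W L\rv_{op} \le 5 p L(W) \le 10 p L_t =: M$. The assumption $\alpha_t L_t \le \frac{1}{30p}$ is exactly $\alpha_t \le \frac{1}{30 p L_t} = \frac{1}{3M}$, so Lemma~\ref{l:pl} applies and gives $L_{t+1} \le L_t - \frac{5\alpha_t \lv \nabla L_t \rv^2}{6}$, as claimed.

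The one step that needs care is the uniform control of $L$ along the update segment: the natural smoothness estimate, Lemma~\ref{l:L_smooth}, is stated in terms of $L$ itself, so one cannot simply bound the change in $L$ by a fixed Lipschitz constant times the step length. The way around this is to exploit that the \emph{gradient-norm} bound of Lemma~\ref{l:gradient_norm.upper} is also proportional to $L$, which turns the estimate on $L(W(\lambda))$ into a Gr\"onwall-type (or max-value) argument; the small multiplicative slack it produces ($15/14$, or $e^{1/15}$ if one integrates the logarithm instead) is absorbed into the constant in $M$. The fact that the Hessian is only a weak Hessian causes no difficulty, since Lemma~\ref{l:pl} is already phrased in those terms. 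Everything else is routine assembly of the three cited lemmas and bookkeeping of absolute constants.
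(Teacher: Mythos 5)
Your proof is correct. The overall plan coincides with the paper's: control $L$ on the segment from $V^{(t)}$ to $V^{(t+1)}$ so that Lemma~\ref{l:L_smooth} gives a uniform operator-norm bound $M = 10pL_t$ on the weak Hessian, note that $\alpha_t L_t \le \tfrac{1}{30p}$ is exactly $\alpha_t \le \tfrac{1}{3M}$, and finish with Lemma~\ref{l:pl}. Where you diverge is in the bootstrap that establishes $L(W) \le 2L_t$ along the segment. The paper discretizes the segment into $N = \lceil \sqrt{2p}\,\lVert V^{(t+1)}-V^{(t)}\rVert / L_t\rceil$ pieces and runs an induction: at each stage it re-invokes Lemma~\ref{l:pl} with a partial step to conclude $L(\widetilde W)\le L_t$ at the previous grid point, and then uses the crude Lipschitz bound $\lVert \nabla L\rVert \le \sqrt{2p}$ (the ``$\min\{\cdot,1\}$'' half of Lemma~\ref{l:gradient_norm.upper}) to control the increase across one subinterval of length $\lVert V^{(t+1)}-V^{(t)}\rVert/N \le L_t/\sqrt{2p}$. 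You instead differentiate $\lambda \mapsto L(W(\lambda))$ along the segment, use the loss-proportional half of Lemma~\ref{l:gradient_norm.upper} at both $W(\lambda)$ and $V^{(t)}$ to get the differential inequality $\tfrac{d}{d\lambda}L(W(\lambda)) \le 2p\alpha_t L_t\, L(W(\lambda)) \le \tfrac{1}{15}\bar L$, and close with a max-value/Gr\"onwall argument yielding $\bar L \le \tfrac{15}{14}L_t$. Your version is cleaner in that it never needs to apply Lemma~\ref{l:pl} inside the bootstrap and avoids the choice of $N$; its only additional requirement is the $C^1$ regularity of $L$ in $V$, which you correctly note follows from $\phi$ being $C^1$ (this is fine here, whereas the paper's discrete induction would survive even without differentiability of the gradient path). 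The constants work out identically, and the handling of the degenerate cases ($\nabla L_t = 0$, $\lambda^\star = 0$) is correct.
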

We need the following technical lemma which is proved in Appendix~\ref{a:concave.min}.
\begin{restatable}{lem}{concaveminlemma}
\label{l:concave.min}
If $\psi:(0,M] \to \mathbb{R}$ is a continuous, concave function such that $\lim_{\eta \to 0^{+}} \psi(\eta)$ exists. Then the infimum of
$\sum_{i=1}^n \psi(z_i)$ subject to
$z_1,\ldots,z_n > 0$ and
$\sum_{i=1}^n z_i = M$ is $\psi(M)+(n-1)\lim_{\eta \to 0^{+}} \psi(\eta)$.
\end{restatable}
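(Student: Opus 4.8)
The plan is to reduce the minimization over the open simplex to the minimization of a concave function over the \emph{closed} simplex, where the minimum is forced to a vertex. First I would extend $\psi$ to all of $[0,M]$ by declaring $\psi(0) \eqdef \lim_{\eta\to 0^+}\psi(\eta)$, which exists by hypothesis; the extended function is continuous on $[0,M]$, and it remains concave there, since for $x,y\in(0,M]$ and $\lambda\in[0,1]$ the inequality $\psi(\lambda x + (1-\lambda) y) \ge \lambda\psi(x) + (1-\lambda)\psi(y)$ holds by assumption, and letting $x\to 0^+$ (using continuity at the point $(1-\lambda)y>0$) extends it to $x=0$.

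Next, consider $F\colon[0,M]^n\to\mathbb{R}$, $F(z)\eqdef\sum_{i=1}^n\psi(z_i)$, restricted to the closed simplex $S\eqdef\{z\in[0,M]^n : \sum_{i=1}^n z_i = M\}$. The set $S$ is compact and convex, and its extreme points are exactly the $n$ vertices $Me_1,\ldots,Me_n$, where $e_i$ denotes the $i$th standard basis vector. Since each coordinate map $z\mapsto\psi(z_i)$ is concave, $F$ is concave on $S$. A continuous concave function on a compact convex set attains its minimum at an extreme point: if $z^\star$ is a minimizer, write it (via Minkowski's and Carathéodory's theorems in finite dimensions) as a convex combination $z^\star=\sum_j\lambda_j e^{(j)}$ of extreme points $e^{(j)}$ of $S$; then concavity gives $F(z^\star)\ge\sum_j\lambda_j F(e^{(j)})\ge\min_j F(e^{(j)})\ge F(z^\star)$, so $F$ attains its minimum at some vertex. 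By symmetry every vertex gives the same value $F(Me_i)=\psi(M)+(n-1)\psi(0)=\psi(M)+(n-1)\lim_{\eta\to 0^+}\psi(\eta)$, which is therefore $\min_S F$.

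Finally I would match the infimum over the open simplex $S^\circ\eqdef\{z : z_i>0,\ \sum_i z_i=M\}$ with $\min_S F$. Since $S^\circ\subseteq S$ we get $\inf_{S^\circ}F\ge\min_S F$ at once. For the reverse inequality, fix a minimizing vertex, say $Me_1$, and take the points $z^{(\delta)}\eqdef(1-(n-1)\delta)Me_1+\delta M\sum_{i\ge 2}e_i\in S^\circ$ for small $\delta>0$; by continuity of $F$ on $S$ we have $F(z^{(\delta)})\to F(Me_1)=\min_S F$ as $\delta\to 0^+$, so $\inf_{S^\circ}F\le\min_S F$. Hence the infimum equals $\psi(M)+(n-1)\lim_{\eta\to 0^+}\psi(\eta)$, as claimed.

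The argument is essentially routine convex analysis, and there is no serious obstacle; the only points needing care are the continuous/concave extension of $\psi$ across $0$ and the passage from the open to the closed simplex, which ensures the correct value even though the infimum need not be attained on $S^\circ$. As an alternative to the extreme-point step one can use a direct mass-shifting argument: if $z\in S^\circ$ has positive coordinates $a\le b$, then $t\mapsto\psi(a-t)+\psi(b+t)$ is nonincreasing on $[0,a]$ by monotonicity of the one-sided derivatives of the concave $\psi$, so emptying the smaller coordinate does not increase $F$; iterating drives all but one coordinate to $0$ and recovers the bound. I would present the extreme-point version since it is shorter.
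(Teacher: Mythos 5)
Your proof is correct, and it takes a different route from the paper's. The paper argues directly inside the open domain: ordering the coordinates, it applies two concavity inequalities to replace the pair $(z_1,z_2)$ by $(z_1+z_2-\eta,\eta)$ without increasing the objective, iterates this over the remaining coordinates to reach $(M-(n-1)\eta,\eta,\ldots,\eta)$, and only then sends $\eta\to 0^{+}$ using continuity and the existence of the limit. This keeps every intermediate point in $(0,M]^n$ and so never needs $\psi$ to be defined at $0$, at the cost of a slightly fussier bookkeeping with the parameter $\eta$. You instead extend $\psi$ continuously and concavely to $[0,M]$, invoke the standard fact that a continuous concave function on a compact convex polytope is minimized at a vertex, and then pass between the open and closed simplices by continuity; this is cleaner and more conceptual, and it makes explicit the upper-bound direction (that the infimum is actually approached), which the paper leaves implicit via the points $(M-(n-1)\eta,\eta,\ldots,\eta)$. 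The mass-shifting alternative you sketch at the end is essentially the paper's argument, except that the paper empties the smaller coordinate only down to $\eta>0$ rather than to $0$, precisely to avoid the extension step. Both proofs are sound; yours trades a small amount of convex-analysis machinery for a shorter and more transparent argument.
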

The next lemma establishes a lower bound on the norm of the gradient of the loss in the later iterations. 
\begin{lemma}
\label{l:lower.bound.gradient} 
For all large enough $C_1$, for any $t \ge 1$,
if $L_t \le 1/n^{1+C_1}$,
then
\begin{align} \label{e:lowerboundonthegradient_used}
    \lv \nabla L_t \rv \ge \frac{5L_t \log(1/L_t)}{6 \lv V^{(t)} \rv}.
\end{align}
\end{lemma}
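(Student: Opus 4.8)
The plan is to lower-bound $\lv \nabla L_t \rv$ by the normalized alignment $\langle -\nabla L_t,\, V^{(t)}\rangle / \lv V^{(t)}\rv$ (Cauchy--Schwarz; note $V^{(t)}\neq 0$, since otherwise $f_{V^{(t)}}\equiv 0$ and $L_t=\log 2$, contradicting $L_t\le n^{-1-C_1}$), and then to show that this alignment is at least $\tfrac56 L_t\log(1/L_t)$. Writing $m_s \eqdef y_s f_{V^{(t)}}(x_s)$ for the margin on example $s$, so that $g_{ts}=(1+e^{m_s})^{-1}$ and $L_{ts}=\log(1+e^{-m_s})$, one has $\langle -\nabla L_t,\, V^{(t)}\rangle = \frac1n\sum_{s=1}^n g_{ts}\, y_s\,\langle \nabla_{V} f_{V^{(t)}}(x_s),\, V^{(t)}\rangle$. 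This is the alignment-type identity that, in a different regime, goes back to \citet{lyu2019gradient}; here the point is to control it when $L_t$ (hence every $L_{ts}\le nL_t$) is small.

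The key structural fact is that the Huberized ReLU is \emph{almost $1$-homogeneous}: checking the three branches of \eqref{e:helu} gives $0\le \phi'(z)z-\phi(z)\le h/2$ for all $z$. Since $\langle \nabla_V f_V(x),V\rangle=\sum_{i=1}^{2p}u_i\,\phi'(v_i\cdot x)(v_i\cdot x)$, summing over the $2p$ units and using $h=1/p$ gives $\langle \nabla_V f_V(x),V\rangle = f_V(x)+\delta$ with $|\delta|\le 2p\cdot(h/2)=1$ --- an $O(1)$ defect that does not grow with $p$, which is exactly why $h$ is taken to be $1/p$. Combining this with $g_{ts}\le L_{ts}$ (which follows from $\log(1+x)\ge x/(1+x)$) yields $\langle -\nabla L_t,\, V^{(t)}\rangle \ge \frac1n\sum_{s} g_{ts} m_s - L_t$.

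Next I would establish a per-example logistic inequality, $g_{ts} m_s \ge (1-c\,L_{ts})\,L_{ts}\log(1/L_{ts})$ for an absolute constant $c$, using the elementary identities $m_s=\log(1/g_{ts})+\log(1-g_{ts})$ and $L_{ts}=-\log(1-g_{ts})$ together with the estimates $g_{ts}\le L_{ts}\le g_{ts}(1+g_{ts})$ and $\log(1-g_{ts})\ge -g_{ts}/(1-g_{ts})$, all valid because $L_{ts}\le nL_t\le n^{-C_1}\le 1/2$. Summing over $s$ and applying Lemma~\ref{l:concave.min} with the concave function $\psi(x)=x\log(1/x)$ on $(0,M]$ for $M=nL_t$ (here $\psi''=-1/x<0$, $\lim_{x\to 0^+}\psi(x)=0$, and each $L_{ts}\le nL_t=M$) gives $\sum_s L_{ts}\log(1/L_{ts})\ge nL_t\log(1/(nL_t))$, hence $\frac1n\sum_s g_{ts}m_s \ge (1-c\,n^{-C_1})\,L_t\big(\log(1/L_t)-\log n\big)$.

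Finally I would bookkeep the error terms: since $L_t\le n^{-1-C_1}$ we have $\log(1/L_t)\ge(1+C_1)\log n\ge(1+C_1)\log 2$, so the $\log n$ term, the additive $L_t$ coming from the homogeneity defect, and the $c\,n^{-C_1}$ slack are each bounded by an arbitrarily small multiple of $L_t\log(1/L_t)$ once $C_1$ is a large enough absolute constant. Chaining the three bounds gives $\langle -\nabla L_t,\, V^{(t)}\rangle \ge \big(1-O(1/C_1)-O(n^{-C_1})\big)\,L_t\log(1/L_t)\ge \tfrac56 L_t\log(1/L_t)$, and dividing by $\lv V^{(t)}\rv$ yields \eqref{e:lowerboundonthegradient_used}. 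I expect the main obstacle to be the per-example inequality $g_{ts}m_s\gtrsim L_{ts}\log(1/L_{ts})$ with a constant close enough to $1$ and, relatedly, verifying that \emph{all} the correction terms (homogeneity defect, the $\log n$ loss from concavity, and the $O(g_{ts})$ slack in the logistic estimate) are genuinely lower order --- this is precisely what forces the ``for all large enough $C_1$'' in the hypothesis.
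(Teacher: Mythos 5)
Your proposal is correct and follows essentially the same route as the paper: Cauchy--Schwarz reduces to the alignment $-\nabla L_t\cdot V^{(t)}/\lv V^{(t)}\rv$, the near-$1$-homogeneity $0\le\phi'(z)z-\phi(z)\le h/2$ together with $g_{ts}\le L_{ts}$ and $h=1/p$ controls the defect by $L_t$, Lemma~\ref{l:concave.min} concentrates the loss on a single example, and the same $C_1$-bookkeeping finishes. The only difference is cosmetic: the paper applies the concavity lemma directly to the exact function $(1-e^{-z})\log(1/(e^z-1))$ and Taylor-expands once at the end, whereas you Taylor-expand per example first and then apply the lemma to $x\log(1/x)$ (which makes the concavity check trivial at the cost of a per-example inequality); both orderings work.
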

\begin{proof}
Since
\begin{align*}
\lv \nabla L_t \rv =\sup_{a: \lv a \rv =1} \left(\nabla L_t \cdot a\right)\geq (\nabla L_t) \cdot\left(\frac{-V^{(t)}}{\lv V^{(t)} \rv}\right),
\end{align*}
we seek a lower bound on $-\nabla L_t \cdot \frac{V^{(t)}}{\lv V^{(t)} \rv}$.  We have
\begin{align*}
    -\nabla L_t \cdot \frac{V^{(t)}}{\lv V^{(t)} \rv} & = \frac{1}{\lv V^{(t)} \rv} \sum_{i \in [2p]} \frac{u_i}{n} \sum_{s=1}^n g_{ts} y_s \phi'(v_i^{(t)} \cdot x_s) v_i^{(t)} \cdot x_s \\
    & = \frac{1}{n \lv V^{(t)} \rv} \sum_{s=1}^n g_{ts} y_s \left[ \sum_{i \in [2p]}u_i \phi'(v_i^{(t)}\cdot x_s) (v_i^{(t)} \cdot x_s)\right].
\end{align*}
Note that, by definition of the Huberized ReLU for any $z \in \mathbb{R}$, $\phi(z) \le \phi'(z)z \le \phi(z)+h/2$, and therefore,
\begin{align} 
\nonumber
    -\nabla L_t \cdot \frac{V^{(t)}}{\lv V^{(t)} \rv} 
    & = \frac{1}{n \lv V^{(t)} \rv} \sum_{s=1}^n g_{ts} \left[y_s \sum_{i \in [2p]}u_i \phi(v_i^{(t)}\cdot x_s) \right] \\& \qquad  + \frac{1}{n \lv V^{(t)} \rv} \sum_{s=1}^n g_{ts} \left[ \sum_{i \in [2p]}y_su_i \left(\phi'(v_i^{(t)}\cdot x_s)(v_i^{(t)}\cdot x_s)-\phi(v_i^{(t)}\cdot x_s) \right)\right] \nonumber\\
    \nonumber & \ge \frac{1}{n \lv V^{(t)} \rv} \sum_{s=1}^n g_{ts} \left[y_s \sum_{i \in [2p]}u_i \phi(v_i^{(t)}\cdot x_s) \right]   - \frac{1}{n \lv V^{(t)} \rv} \sum_{s=1}^n g_{ts} \left( \frac{h}{2}\sum_{i \in [2p]}\lvert y_su_i\rvert  \right) \\
    & \overset{(i)}{=} \frac{1}{n \lv V^{(t)} \rv} \sum_{s=1}^n g_{ts} \left[y_s \sum_{i \in [2p]}u_i \phi(v_i^{(t)}\cdot x_s) \right]  - \frac{hp}{n\lv V^{(t)}\rv}\sum_{s=1}^n g_{ts} \nonumber\\
     & \overset{(ii)}{\ge} \frac{1}{n \lv V^{(t)} \rv} \sum_{s=1}^n g_{ts} \left[y_s \sum_{i \in [2p]}u_i \phi(v_i^{(t)}\cdot x_s) \right] - \frac{L_t}{\lv V^{(t)}\rv} \nonumber \\
         & = \frac{1}{n \lv V^{(t)} \rv} \sum_{s=1}^n 
         \frac{y_s f_{V^{(t)}}(x_s)}{1+\exp\left(y_s f_{V^{(t)}}(x_s)\right)}- \frac{L_t}{\lv V^{(t)}\rv}\label{e:lowerboundonthenormofthegradient},
\end{align}
where $(i)$ follows as $|y_s u_i| = 1$ for all $i \in [2p]$ and the inequality in $(ii)$ follows since $g_{ts} \le L_{ts}$ for all samples by Lemma~\ref{l:relationsbetweengradientandloss} and because $h=1/p$.

For every sample $s$, 
$
    L_{ts} = \log\left(1+\exp\left(-y_s f_{V^{(t)}}(x_s) \right)\right)
$
which implies
\begin{align*}
y_s f_{V^{(t)}}(x_s) = \log\left(\frac{1}{\exp(L_{ts})-1} \right) \quad  \mbox{and} \quad \frac{1}{1+\exp\left(y_s f_{V^{(t)}}(x_s)\right)} = 1-\exp(-L_{ts}).
\end{align*}
Plugging this into inequality~\eqref{e:lowerboundonthenormofthegradient} we derive,
\begin{align*}
-\nabla L_t \cdot \frac{V^{(t)}}{\lv V^{(t)} \rv} 
 & \ge \frac{1}{n \lv V^{(t)} \rv} \sum_{s=1}^n \left( 1 - \exp(-L_{ts}) \right)
      \log\left( \frac{1}{\exp(L_{ts}) - 1} \right)-\frac{L_t}{\lv V^{(t)}\rv}.
\end{align*}
Observe that the function $\left(1-\exp(-z)\right)\log\left(\frac{1}{\exp(z)-1}\right)$ is continuous and concave with $\lim_{z \to 0^{+}}(1-\exp(-z))\log\left(\frac{1}{\exp(z)-1}\right)~=~0$. Also recall that $\sum_{s} L_{ts} = L_tn$. Therefore by Lemma~\ref{l:concave.min},
\begin{align}
-\nabla L_t \cdot \frac{V^{(t)}}{\lv V^{(t)} \rv} 
 & \ge \frac{1}{\lv V^{(t)} \rv}\left[ \frac{ 1 - \exp(-L_{t}n)}{n} 
      \log\left( \frac{1}{\exp(L_{t} n) - 1} \right)-L_t\right]. \label{e:pretaylor_lowerbound}
\end{align}
We know that for any $z \in [0,1]$
\begin{align*}
\exp(z) \le 1+2z \quad \mbox{and} \quad  \exp(-z) \le 1-z+z^2.
\end{align*}
Since $L_t \leq \frac{1}{n^{1 + C_1}}$ and $n \geq 2$ for large enough $C_1$ these bounds on the exponential function combined with inequality~\eqref{e:pretaylor_lowerbound} yields
\begin{align*}
-\nabla L_t \cdot \frac{V^{(t)}}{\lv V^{(t)} \rv} 
 & \geq 
     \frac{1}{\lv V^{(t)} \rv} \left[ (L_t - n L_t^2) \log \left( \frac{1}{2n L_t } \right) - L_t \right] \\
    & =
     \frac{1}{\lv V^{(t)} \rv} \left[ L_t \log\left(\frac{1}{L_t}\right)+nL_t^2\log(2n)-L_t(1+\log(2n))-nL_t^2\log\left(\frac{1}{L_t}\right)\right] \\
     & \ge
     \frac{1}{\lv V^{(t)} \rv} \left[ L_t \log\left(\frac{1}{L_t}\right)-L_t(1+\log(2n))-nL_t^2\log\left(\frac{1}{L_t}\right)\right]\\
     & =
     \frac{L_t\log(1/L_t)}{\lv V^{(t)} \rv} \left[ 1-\frac{1+\log(2)+\log(n)}{\log(1/L_t)}-nL_t\right].
\end{align*}
Recalling again that 
$L_t \leq \frac{1}{n^{1 + C_1}}$ and $n \geq 2$,
\begin{align*}
    -\nabla L_t \cdot \frac{V^{(t)}}{\lv V^{(t)} \rv}  & \ge  \frac{L_t\log(1/L_t)}{\lv V^{(t)} \rv} \left[ 1-\frac{1+\log(2)+\log(n)}{(1+C_1)\log(n)}-\frac{1}{n^{C_1}}\right] \\
    & \ge  \frac{L_t\log(1/L_t)}{\lv V^{(t)} \rv} \left[ 1-\frac{1+2\log(2)}{(1+C_1)\log(2)}-\frac{1}{2^{C_1}}\right]\\
    & \ge  \frac{5L_t\log(1/L_t)}{6\lv V^{(t)} \rv},
\end{align*}
where the final inequality holds for a large enough value of $C_1$.
\end{proof}
We are now ready to prove our theorem. 
\subsection{The Proof}
As mentioned above, 
the proof of Theorem~\ref{t:main.general} is by induction. Given the initial weight matrix $V^{(1)}$ and $p$, the values $\widetilde{Q}_1$ and $\widetilde{Q}_2(Q_1)$ can be chosen as stated below:
\begin{eqnarray}
\label{eq:Q_1_def}\widetilde{Q}_1  &=& \min\left\{\frac{1}{30pL_1\log^2\left(1/L_1\right)},\frac{108 \lv V^{(1)}\rv^2}{125L_1 \log^4\left(1/L_1\right) },\frac{e^2}{120p}\right\} \quad \text{and} \\
\label{eq:Q_2_def} \widetilde{Q}_2(Q_1)  &=& \frac{125 Q_1 L_1 \log^4(1/L_1)}{216 \lv V^{(1)}\rv^2} . 
\end{eqnarray}
The proof goes through for any
positive $Q_1 \le \widetilde{Q}_1$ and any positive $Q_2 \le \widetilde{Q}_2(Q_1)$.
Recall that the sequence of step-sizes is given by $\alpha_t = Q_1 \log^2(1/L_t)$. We will use the following multi-part inductive hypothesis:
\begin{enumerate}[({I}1)]
    \item $L_{t} \le \frac{L_1}{Q_2\cdot (t-1)+1};$
    \item $\alpha_t L_t \le \frac{1}{30p};$ 
    \item $\frac{\log^2(1/L_{t})}{\lv V^{(t)}\rv} \ge 
           \frac{\log^2(1/L_1)}{\lv V^{(1)}\rv}$. 
\end{enumerate}

The base case is trivially true for the first and the third part of the inductive hypothesis. It is true for the second part 
since
$Q_1 \le \frac{1}{30pL_1\log^2(1/L_1)}$. 

Now let us assume that the inductive hypothesis holds for a step $t \ge 1$ and
prove that it holds for the next step $t+1$.  We start with Part~I1.
\begin{lemma}
\label{l:L.inductive_step_constant_step_size}

If the inductive hypothesis holds at step $t$ then,
\[
L_{t+1} \leq 
\frac{L_1}{Q_2t + 1}.
\]
\end{lemma}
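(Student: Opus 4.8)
The plan is to turn the inductive hypothesis into a one-step recursion of the form $L_{t+1} \le L_t - (Q_2/L_1)L_t^2$ and then run the standard $1/t$ estimate. First I would note that (I1) gives $L_t \le L_1/(Q_2(t-1)+1) \le L_1 \le 1/n^{1+C_1}$, so the hypothesis of Lemma~\ref{l:lower.bound.gradient} holds; and (I2) is exactly the hypothesis of Lemma~\ref{l:L.one_step_improvement}. Combining the two,
\[
L_{t+1} \le L_t - \frac{5\alpha_t\lv\nabla L_t\rv^2}{6} \le L_t - \frac{5\alpha_t}{6}\cdot\frac{25\,L_t^2\log^2(1/L_t)}{36\,\lv V^{(t)}\rv^2} = L_t - \frac{125\,\alpha_t L_t^2\log^2(1/L_t)}{216\,\lv V^{(t)}\rv^2}.
\]

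Next I would substitute the step-size rule $\alpha_t = Q_1\log^2(1/L_t)$ and apply (I3) in the squared form $\log^4(1/L_t)/\lv V^{(t)}\rv^2 \ge \log^4(1/L_1)/\lv V^{(1)}\rv^2$ to remove the dependence on the current iterate:
\[
L_{t+1} \le L_t - \frac{125\,Q_1 L_t^2\log^4(1/L_1)}{216\,\lv V^{(1)}\rv^2} = L_t - \frac{\widetilde{Q}_2(Q_1)}{L_1}L_t^2 \le L_t - \frac{Q_2}{L_1}L_t^2,
\]
where the middle equality is the definition~\eqref{eq:Q_2_def} of $\widetilde{Q}_2(Q_1)$ and the last inequality uses $Q_2 \le \widetilde{Q}_2(Q_1)$ together with $L_t^2 \ge 0$.

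To finish, I would use monotonicity of $\psi(x) := x - (Q_2/L_1)x^2$, which is increasing on $[0,L_1/(2Q_2)]$. The point $L_t$ lies in this range because $Q_2 \le 1/2$: combining~\eqref{eq:Q_2_def} with the bound $Q_1 \le \widetilde{Q}_1 \le 108\lv V^{(1)}\rv^2/(125\,L_1\log^4(1/L_1))$ from~\eqref{eq:Q_1_def} gives $Q_2 \le \widetilde{Q}_2(Q_1) \le 108/216 = 1/2$, so $L_1/(2Q_2) \ge L_1 \ge L_t$. Hence, writing $a = Q_2(t-1)+1$, applying (I1) and then monotonicity of $\psi$,
\[
L_{t+1} \le \psi(L_t) \le \psi\!\left(\frac{L_1}{a}\right) = L_1\cdot\frac{a-Q_2}{a^2} \le \frac{L_1}{a+Q_2} = \frac{L_1}{Q_2 t + 1},
\]
where the last inequality is $(a-Q_2)(a+Q_2) = a^2 - Q_2^2 \le a^2$.

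The argument is mostly bookkeeping; the one place that genuinely needs care is verifying that $L_t$ stays in the increasing branch of the quadratic map $\psi$, i.e. that $Q_2 \le 1/2$ — this is precisely what the constraints $Q_1 \le \widetilde{Q}_1$ and $Q_2 \le \widetilde{Q}_2(Q_1)$ are engineered to guarantee, and it is also what lets (I3) be useful rather than circular. Everything else (invoking the two lemmas, the final $(a-Q_2)/a^2 \le 1/(a+Q_2)$ step) is routine.
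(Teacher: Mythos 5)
Your proposal is correct and follows essentially the same route as the paper: combine Lemma~\ref{l:L.one_step_improvement} with Lemma~\ref{l:lower.bound.gradient}, substitute the step-size rule, use (I3) to freeze the coefficient at its initial value, and exploit monotonicity of the resulting quadratic map before the algebraic step $(a-Q_2)(a+Q_2)\le a^2$. The only cosmetic difference is that you relax $\widetilde{Q}_2(Q_1)$ to $Q_2$ before invoking monotonicity (checking $Q_2\le 1/2$), whereas the paper keeps the larger coefficient and checks $L_1\le 108\lv V^{(1)}\rv^2/(125\,Q_1\log^4(1/L_1))$ directly; both are valid.
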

\begin{proof}
Since $\alpha_t L_t<1/(30p)$ by applying Lemma~\ref{l:L.one_step_improvement}
\begin{align*}
    L_{t+1} \le L_t - \frac{5\alpha_t}{6} \lv \nabla L_t\rv^2.
\end{align*}
 By the lower bound on the norm of the gradient established in Lemma~\ref{l:lower.bound.gradient} since $L_t \le L_1 \le 1/n^{1+C_1}$ we have
\begin{align*}
    L_{t+1} \le L_t - \frac{125\alpha_t L_t^2 \log^2(1/L_t)}{216\lv V^{(t)} \rv^2} &= L_t - \frac{125Q_1 L_t^2 \log^4(1/L_t)}{216\lv V^{(t)} \rv^2}\\
    &\le L_t\left(1 - \frac{125Q_1 L_t \log^4(1/L_1)}{216\lv V^{(1)} \rv^2}\right)\numberthis \label{e:lossdecreasemasterequation},
\end{align*}
where the final inequality makes use of the third part of the inductive hypothesis.
For any $z\ge 0$, the quadratic function 
$$z-z^2\frac{125Q_1  \log^4(1/L_1)}{216\lv V^{(1)} \rv^2}$$
 is a monotonically increasing function in the interval
$$\left[0,\frac{108\lv V^{(1)} \rv^2}{125Q_1  \log^4(1/L_1)}\right].$$
Thus, because 
$L_t \leq \frac{L_1}{Q_2(t-1)+1}$, if $ \frac{L_1}{Q_2(t-1)+1} \leq \frac{108\lv V^{(1)} \rv^2}{125Q_1  \log^4(1/L_1)}$, the RHS of 
\eqref{e:lossdecreasemasterequation}
is bounded above by its value 
when $L_t = \frac{L_1}{Q_2(t-1)+1}$. But this is easy to check: by our choice of the constant $Q_1$ we have,
\begin{align*}
     &Q_1 \le \widetilde{Q}_1 \le  \frac{108\lv V^{(1)} \rv^2}{125 L_1  \log^4(1/L_1)} \\
    & \Rightarrow L_1 \le \frac{108\lv V^{(1)} \rv^2}{125Q_1  \log^4(1/L_1)} \\
    &\Rightarrow \frac{L_1}{Q_2(t-1)+1} \le \frac{108\lv V^{(1)} \rv^2}{125Q_1  \log^4(1/L_1)}. \\
\end{align*}
Bounding the RHS of inequality~\eqref{e:lossdecreasemasterequation}
by using the worst case that $L_t = \frac{L_1}{Q_2(t-1)+1}$, we get
 \begin{align*}
     L_{t+1} & \le \frac{L_1}{Q_2(t-1)+1}\left(1 - \frac{L_1}{Q_2(t-1)+1}\frac{125Q_1 \log^4(1/L_1)}{216\lv V^{(1)} \rv^2}\right)\\
     & = \frac{L_1}{Q_2t+1}\left(\frac{Q_2t+1}{Q_2(t-1)+1}\right)\left(1 - \frac{Q_2}{Q_2(t-1)+1}\frac{125Q_1 L_1 \log^4(1/L_1)}{216Q_2\lv V^{(1)} \rv^2}\right)\\
     & = \frac{L_1}{Q_2t+1}\left(1+\frac{Q_2}{Q_2(t-1)+1}\right)\left(1 - \frac{Q_2}{Q_2(t-1)+1}\frac{125Q_1 L_1\log^4(1/L_1)}{216Q_2\lv V^{(1)} \rv^2}\right)\\
     & \le \frac{L_1}{Q_2t+1}\left(1-\left(\frac{Q_2}{Q_2(t-1)+1}\right)^2\right)\\
     & \hspace{1in} \left(\mbox{since $Q_2 \le \frac{125 Q_1 L_1 \log^4(1/L_1)}{216 \lv V^{(1)}\rv^2}$} \right) \\
     & \le \frac{L_1}{Q_2 t+1}.
 \end{align*}
  This establishes the desired upper bound on the loss at step $t+1$.
\end{proof}
In the next lemma we ensure that the second part of the inductive hypothesis holds. 

\begin{lemma}
\label{l:verify.valid.step_size} Under the setting of Theorem~\ref{t:main.general} if the induction hypothesis holds at step $t$ then,
\begin{align*}
    \alpha_{t+1} L_{t+1} \le \frac{1}{30p}.
\end{align*}
\end{lemma}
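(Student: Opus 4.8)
The plan is to reuse the same elementary monotonicity fact that made the base case of part~(I2) work, now applied at the loss value $L_{t+1}$ instead of $L_1$. Recall that the step-size is $\alpha_{t+1} = Q_1 \log^2(1/L_{t+1})$, so the claim $\alpha_{t+1} L_{t+1} \le \frac{1}{30p}$ is precisely
\[
Q_1 \, L_{t+1} \log^2(1/L_{t+1}) \le \frac{1}{30p}.
\]

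First I would record that $0 < L_{t+1} \le L_1$. The upper bound follows either from Lemma~\ref{l:L.inductive_step_constant_step_size}, which we have just established under the present inductive hypothesis and which gives $L_{t+1} \le L_1/(Q_2 t + 1) \le L_1$, or directly from Lemma~\ref{l:L.one_step_improvement} together with part~(I2) at step $t$, which gives $L_{t+1} \le L_t \le L_1$; positivity holds because the logistic loss is strictly positive. Since $L_1 \le 1/n^{1+C_1}$ and $n \ge 2$, for $C_1$ large enough we have $L_1 < e^{-2}$.

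The key step is the observation that $\zeta(z) := z \log^2(1/z)$ is continuous and monotonically increasing on $(0, e^{-2}]$: its derivative is $\log(1/z)\bigl(\log(1/z) - 2\bigr)$, which is nonnegative exactly when $z \le e^{-2}$. Since $0 < L_{t+1} \le L_1 \le e^{-2}$, this yields $\zeta(L_{t+1}) \le \zeta(L_1)$, i.e.\ $L_{t+1}\log^2(1/L_{t+1}) \le L_1 \log^2(1/L_1)$. Combining this with the definition of $\widetilde{Q}_1$ in eq.~\eqref{eq:Q_1_def}, which forces $Q_1 \le \widetilde{Q}_1 \le \frac{1}{30 p L_1 \log^2(1/L_1)}$, we obtain
\[
\alpha_{t+1} L_{t+1} = Q_1 \, L_{t+1}\log^2(1/L_{t+1}) \le Q_1 \, L_1 \log^2(1/L_1) \le \frac{1}{30p},
\]
which is the desired bound.

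There is essentially no obstacle here; the only point requiring care is to confirm that the relevant loss values lie in the region $(0,e^{-2}]$ on which $z \mapsto z\log^2(1/z)$ is increasing, which is guaranteed by the hypothesis $L_1 \le 1/n^{1+C_1}$ with $n\ge 2$ and $C_1$ sufficiently large.
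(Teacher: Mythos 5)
Your proof is correct, but it reaches the bound by a slightly different route than the paper. The paper's proof uses the \emph{global} bound $z\log^2(1/z)\le 4/e^2$ for all $z\in(0,1]$ together with the third constraint in the definition of $\widetilde{Q}_1$ in eq.~\eqref{eq:Q_1_def}, namely $Q_1\le e^2/(120p)$, so that $Q_1 L_{t+1}\log^2(1/L_{t+1})\le \frac{e^2}{120p}\cdot\frac{4}{e^2}=\frac{1}{30p}$. You instead exploit the \emph{monotonicity} of $z\mapsto z\log^2(1/z)$ on $(0,e^{-2}]$, reduce to the value at $L_1$, and invoke the first constraint $Q_1\le \frac{1}{30pL_1\log^2(1/L_1)}$ --- exactly the constraint that handled the base case of (I2). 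Both arguments are sound and both rely on Lemma~\ref{l:L.inductive_step_constant_step_size} (or Lemma~\ref{l:L.one_step_improvement} plus (I2)) only to conclude $L_{t+1}\le L_1$. The paper's version is marginally lighter since it never needs to check that $L_1\le e^{-2}$ (which you correctly verify from $L_1\le n^{-(1+C_1)}$ with $n\ge 2$ and $C_1$ large). Your version has the mildly interesting consequence that the third term in the $\min$ defining $\widetilde{Q}_1$ is not needed for this particular step of the induction; it is still needed elsewhere, since the proof of Theorem~\ref{t:main.clusters} instantiates $Q_1=e^2/(120p)$, for which the third constraint is the binding one. One cosmetic quibble: the derivative $\log(1/z)\bigl(\log(1/z)-2\bigr)$ is nonnegative for $z\le e^{-2}$ and also for $z>1$, so ``exactly when'' holds only on $(0,1)$; this does not affect your argument, which only uses the interval $(0,e^{-2}]$.
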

\begin{proof}
We know by the previous lemma that if the induction hypothesis holds at step $t$, then $L_{t+1} \le L_t \le 1$. The function $z \log^2(1/z)$ is no more than $4/e^2$ for $z \in (0,1]$. Since $Q_1 \le e^2/(120p)$ we have 
\begin{align*}
    \alpha_{t+1}L_{t+1} = Q_1 L_{t+1}\log^2(1/L_{t+1}) \le \frac{1}{30p}.
\end{align*}
\end{proof}
Finally, we shall establish that the third part of the inductive hypothesis holds.
\begin{lemma} Under the setting of Theorem~\ref{t:main.general} if the induction hypothesis holds at step $t$ then,
\label{l:vbound}
\begin{align*}
    \frac{\log^2\left(\frac{1}{L_{t+1}}\right)}{\lv V^{(t+1)}\rv} \ge \frac{\log^2\left(\frac{1}{L_{1}}\right)}{\lv V^{(1)}\rv}.
\end{align*}
\end{lemma}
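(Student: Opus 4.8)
The plan is to show that the quantity $\log^2(1/L_t)/\lv V^{(t)}\rv$ never decreases from one step to the next; once that is in hand, Part~I3 of the inductive hypothesis at step $t$ gives immediately
\[
\frac{\log^2(1/L_{t+1})}{\lv V^{(t+1)}\rv} \ \ge\ \frac{\log^2(1/L_{t})}{\lv V^{(t)}\rv} \ \ge\ \frac{\log^2(1/L_1)}{\lv V^{(1)}\rv}.
\]
If $\nabla L_t = 0$ then $V^{(t+1)} = V^{(t)}$ and $L_{t+1} = L_t$, so the inequality is trivial; from now on assume $\lv\nabla L_t\rv>0$. (Note $L_{t+1}>0$ always, since the logistic loss of a finite network is strictly positive.)

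The first step is to lower bound the growth of the numerator. Part~I2 of the inductive hypothesis lets us apply Lemma~\ref{l:L.one_step_improvement}, so $L_{t+1} \le L_t(1-\rho)$ with $\rho := \tfrac{5\alpha_t\lv\nabla L_t\rv^2}{6L_t}$; since $0 < L_{t+1} \le L_t(1-\rho)$ we get $\rho \in [0,1)$. Using $-\log(1-\rho)\ge\rho$ this gives $\log(1/L_{t+1}) \ge \log(1/L_t) + \rho$, and squaring (all three logarithms are positive because $L_t,L_{t+1}<1$) yields
\[
\log^2\!\big(1/L_{t+1}\big)\ \ge\ \log^2\!\big(1/L_{t}\big)\Big(1 + \tfrac{2\rho}{\log(1/L_t)}\Big).
\]
On the other side, the triangle inequality applied to $V^{(t+1)} = V^{(t)} - \alpha_t\nabla L_t$ gives $\lv V^{(t+1)}\rv \le \lv V^{(t)}\rv\big(1 + \tfrac{\alpha_t\lv\nabla L_t\rv}{\lv V^{(t)}\rv}\big)$. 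Hence it suffices to verify $\tfrac{2\rho}{\log(1/L_t)} \ge \tfrac{\alpha_t\lv\nabla L_t\rv}{\lv V^{(t)}\rv}$; substituting the definition of $\rho$ and cancelling the common positive factor $\alpha_t\lv\nabla L_t\rv$, this reduces to the clean inequality $\lv\nabla L_t\rv \ge \tfrac{3}{5}\cdot\tfrac{L_t\log(1/L_t)}{\lv V^{(t)}\rv}$.

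This last inequality is exactly where the gradient lower bound enters: since $L_t \le L_1 \le 1/n^{1+C_1}$ by Part~I1, Lemma~\ref{l:lower.bound.gradient} gives $\lv\nabla L_t\rv \ge \tfrac{5}{6}\cdot\tfrac{L_t\log(1/L_t)}{\lv V^{(t)}\rv}$, and $5/6 \ge 3/5$, which closes the argument. The one place needing care is the bookkeeping above — checking that the multiplicative gain of $\log^2(1/L_{t+1})$ dominates the multiplicative growth of $\lv V^{(t+1)}\rv$; what makes this balance is that the step-size $\alpha_t$ cancels out of the comparison entirely, leaving precisely the alignment/gradient-norm estimate of Lemma~\ref{l:lower.bound.gradient}, whose constant $5/6$ was chosen with enough slack. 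In particular, the specific functional form $\alpha_t = Q_1\log^2(1/L_t)$ plays no role in this part of the induction; it matters only for Parts~I1 and~I2.
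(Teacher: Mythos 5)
Your proof is correct and follows essentially the same route as the paper: bound the one-step growth of the numerator via Lemma~\ref{l:L.one_step_improvement} and $-\log(1-\rho)\ge\rho$, bound the growth of $\lv V^{(t+1)}\rv$ by the triangle inequality, cancel $\alpha_t\lv\nabla L_t\rv$, and reduce to $\lv\nabla L_t\rv \ge \tfrac{3}{5}\cdot\tfrac{L_t\log(1/L_t)}{\lv V^{(t)}\rv}$, which Lemma~\ref{l:lower.bound.gradient} supplies with slack ($5/6 \ge 3/5$). The only noticeable deviation is how you establish $\rho<1$ (needed so the log and the drop of $\rho^2$ are valid): the paper argues via Lemma~\ref{l:gradient_norm.upper} together with Part~I2, while you observe directly that $L_{t+1}>0$ forces $1-\rho>0$; your version is a touch more elementary and equally correct.
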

\begin{proof}
We know from Lemma~\ref{l:L.one_step_improvement} that $L_{t+1} \le L_t\left(1 - 5\alpha_t \lv \nabla L_t \rv^2/(6L_t)\right)$, and by the triangle inequality $\lv V^{(t+1)} \rv \le \lv V^{(t)} \rv + \alpha_t \lv \nabla L_t \rv$, hence
\begin{align*}
    \frac{\log^2\left(\frac{1}{L_{t+1}}\right)}{\lv V^{(t+1)}\rv} & \ge  \frac{\log^2\left(\frac{1}{L_t\left(1 - \frac{5\alpha_t}{6L_t}\lv \nabla L_t \rv^2 \right)}\right)}{\lv V^{(t)}\rv + \alpha_t \lv \nabla L_t\rv} \\
    & = \frac{\left(\log\left(\frac{1}{L_t}\right)-\log\left(1 - \frac{5\alpha_t}{6L_t}\lv \nabla L_t \rv^2 \right)\right)^2}{\lv V^{(t)}\rv + \alpha_t \lv \nabla L_t\rv} \\
    & = \frac{\log^2\left(\frac{1}{L_t}\right)-2\log\left(\frac{1}{L_t}\right)\log\left(1 - \frac{5\alpha_t}{6L_t}\lv \nabla L_t \rv^2 \right)+\log^2\left(1 - \frac{5\alpha_t}{6L_t}\lv \nabla L_t \rv^2 \right)}{\lv V^{(t)}\rv + \alpha_t \lv \nabla L_t\rv} \\
    & \overset{(i)}{\ge} \frac{\log^2\left(\frac{1}{L_t}\right)\left(1-\frac{2\log\left(1 - \frac{5\alpha_t}{6L_t}\lv \nabla L_t \rv^2 \right)}{\log\left(\frac{1}{L_t}\right)}\right)}{\lv V^{(t)}\rv \left(1+ \frac{\alpha_t \lv \nabla L_t\rv}{\lv V^{(t)} \rv}\right)} \\
    & \overset{(ii)}{\ge} \frac{\log^2\left(\frac{1}{L_t}\right)}{\lv V^{(t)}\rv}\left\{\frac{1+\frac{5\alpha_t \lv \nabla L_t \rv^2}{3L_t\log\left(\frac{1}{L_t}\right)}}{1+ \frac{\alpha_t \lv \nabla L_t\rv}{\lv V^{(t)} \rv}}\right\} 
    \numberthis \label{eq:lower_bound_ratio}
\end{align*}
where in $(i)$ the lower bound follows as we are dropping a positive lower-order term, and $(ii)$ follows since $\log(1-z) \le -z$ for all 
$z < 1$ and 
\begin{align*}
\frac{5\alpha_t}{6L_t}\lv \nabla L_t \rv^2
& \leq \frac{10p\alpha_t  L_t}{6}
  \hspace{1in}
  \mbox{(by Lemma~\ref{l:gradient_norm.upper})} \\
& < 1
\end{align*}
by the inductive hypothesis.
 
 We want the term in curly brackets in inequality \eqref{eq:lower_bound_ratio} to be at least 1, that is, 
\begin{align*}
    & 1+\frac{5\alpha_t \lv \nabla L_t \rv^2}{3L_t\log\left(\frac{1}{L_t}\right)} \ge 1+ \frac{\alpha_t \lv \nabla L_t\rv}{\lv V^{(t)} \rv} \\
    & \Leftarrow \lv \nabla L_t \rv  \ge  \frac{3L_t \log\left(\frac{1}{L_t}\right)}{5\lv V^{(t)} \rv},
\end{align*}
which follows from Lemma~\ref{l:lower.bound.gradient} which ensures that $\lv \nabla L_t\rv \ge 5L_t\log(1/L_t)/(6\lv V^{(t)} \rv)$ (since $5/6 \ge 3/5$). Thus we can infer that
\begin{align*}
      \frac{\log^2\left(\frac{1}{L_{t+1}}\right)}{\lv V^{(t+1)}\rv} \ge   \frac{\log^2\left(\frac{1}{L_{t}}\right)}{\lv V^{(t)}\rv} \ge \frac{\log^2\left(\frac{1}{L_{1}}\right)}{\lv V^{(1)}\rv}.
\end{align*}
This proves that the ratio is lower bounded at step $t+1$ by its initial value and establishes our claim.
\end{proof}

Combining the results of Lemmas \ref{l:L.inductive_step_constant_step_size}, \ref{l:verify.valid.step_size} and \ref{l:vbound} completes the proof of theorem.

\section{Proof of Theorem~\protect\ref{t:main.clusters}}
\label{s:proofdetails_theorem_cluster}

The proof of Theorem~\ref{t:main.clusters} has two parts. First, we analyze the first step and show that the loss decreases by a factor that is exponentially large in $p^{1/2-\beta}$. After this, we complete the proof by invoking Theorem~\ref{t:main.general}.

\subsection{The Effect of the Reduction on the Clusters}
\label{s:prelim.clusters}

We reduce to the case that the bias terms are fixed at
zero in the context of Theorem~\ref{t:main.clusters}.
In this case, we can assume the following without loss of generality:
\begin{itemize}
    \item $b_1,\ldots,b_{2p}$ are fixed to $0$, and not trained,
    \item for all $s$, $\lv x_s \rv = 1$, 
    \item for all $s, q \in [n]$, $x_s \cdot x_{q} \geq 0$,
    \item for all $k, \ell \in [4]$ for $y_k \neq y_{\ell}$, $\mu_k \cdot \mu_{\ell} \leq (1+\Delta)/2$, and
    \item for all $s,k$,\; $s \in \cK_k$, $\lv x_s - \mu_k \rv \leq r/\sqrt{2}$.
\end{itemize}
The details are in Appendix~\ref{a:reduction.clusters}.

\subsection{Analysis of the Initial Step} \label{ss:clustered_init}
Our analysis of the first step will make reference to the set
of hidden units that ``capture'' an example 
by a sufficient margin,
further dividing them into helpful and harmful
units.
\begin{definition}
\label{d:I}
Define 
\begin{align*}
I_{+s}&:= \left\{i\in [2p]: \left(u_i = y_{s}\right) \wedge \left(v_i^{(0)} \cdot x_s \ge h+4 \alpha_0\right)\right\} \; \text{and } \\
I_{-s}&:= \left\{i\in [2p]: \left(u_i = -y_{s}\right) \wedge \left(v_i^{(0)} \cdot x_s \ge h+4 \alpha_0\right)\right\}.
\end{align*}
\end{definition}
Next, we prove that the random initialization satisfies a 
number of properties with high probability.  (We will later
show that they are sufficient for convergence.)
The proof is in Appendix~\ref{a:concentration}. (Recall that $C_2,\ldots,C_4$ are specified in the
statement of Theorem~\ref{t:main.clusters}. We also remind the reader that sufficiently large $C_3$ means that $p$ is sufficiently large.)

\begin{restatable}{lem}{concentrationlemma}
\label{l:concentration}
There exists a real-valued function $\chi$ such that, for all $(x_1,y_1),\ldots,(x_n,y_n)\in \S^{d-1} \times\{-1,1\}$, for all small enough $C_2$ and all large enough $C_3$,
with probability $1 - \delta$ over the draw of $V^{(0)}$, all of the following hold.
\begin{enumerate}
\item \label{i:meanofiplusandiminus} For all $s \in [n]$,
\begin{align*}
    \sum_{i\in I_{+s}} v_i^{(0)} \cdot x_s &\ge p  \chi(h,\alpha_0,\sigma) - 1, \text{ and}\\
    \sum_{i\in I_{-s}} v_i^{(0)} \cdot x_s & \le  p  \chi(h,\alpha_0,\sigma) +1.
\end{align*}
\item \label{i:sizeofthesetiplusandiminus} 
For all samples $s \in [n]$,
\begin{align*}
     (1/2 - o(1))p &\le \lvert I_{+s} \rvert \le (1/2+o(1))p, \text{ and}\\
    (1/2 - o(1))p &\le \lvert I_{-s} \rvert \le (1/2+o(1))p.
\end{align*}
    \item  \label{i:init.lprime}
       For all samples $s \in [n]$ 
\begin{align*}
    \frac{1}{2}-o(1)\le g_{0s} \le \frac{1}{2}+o(1).
\end{align*}
\item \label{i:numberofelementsinthesameclustercaptured} For all clusters $k \in [4]$,
\begin{align*}
    &\left\lvert \left\{i \in [2p]: \forall \; s \in \cK_k,\; i \in I_{+s}) \right\}\right\rvert \ge \left(\frac{1-\sqrt{r}}{2} -o(1)\right)p, \text{ and}\\
    &\left\lvert \left\{i \in [2p]: \forall \; s \in \cK_k,\; i \in I_{-s}) \right\}\right\rvert \ge \left(\frac{1 - \sqrt{r}}{2} -o(1)\right)p.\\
\end{align*}
     \item \label{i:init.captured.not.captured.part2}
         For all pairs $s, q \in [n]$ such that $y_s \neq y_q$,
         \begin{align*}
        \left| \left\{ i \in [2p] :
         \left(i \in I_{+s} \right)
        \wedge \left(v_i^{(0)} \cdot x_q \geq 0\right)
         \right\} \right| &\leq \left(\frac{1}{3} +
            \frac{\Delta}{4}
            + r + o(1)\right) p, \text{ and}\\
         \left| \left\{ i \in  [2p]:
         \left( i\in I_{-s}\right)
        \wedge \left( v_i^{(0)} \cdot x_q \geq 0\right)
         \right\} \right| &\leq \left(\frac{1}{3} +
            \frac{\Delta}{4}
            + r + o(1)\right) p.
         \end{align*}
         \item \label{i:sizeofsetthatfallsinthemiddle} For all samples $s \in [n]$,
         \begin{align*}
             &\left\lvert \left\{i \in [2p]: \left(- \alpha_0\left(\frac{1}{2}+2(\Delta+r)\right)\le v_i^{(0)} \cdot x_s \le h+ 4 \alpha_0 \right)\wedge \left(u_i \neq y_s\right) \right\} \right\rvert \\ 
             &\hspace{2in}\le  
            \left( \frac{\sqrt{2}}{\sigma\sqrt{\pi}} \left(h+5 \alpha_0(2+\Delta+r)\right)
              \right) p.
         \end{align*}
    \item \label{i:init:normboundV} The norm of the weight matrix after one iteration satisfies $ \frac{3}{5}\sqrt{\frac{d}{p^{\beta}}}\le\lv V^{(1)} \rv \le 3\sqrt{\frac{d}{p^{\beta}}}$.
\end{enumerate}
\end{restatable}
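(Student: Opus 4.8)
\textbf{Proof proposal for Lemma~\ref{l:concentration}.}

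The plan is to establish each of the eight items as a separate high-probability statement and combine them by a union bound at the very end. Two features of the initialization do almost all the work. First, since $\lv x_s\rv=1$, each projection $v_i^{(0)}\cdot x_s$ is $\cN(0,\sigma^2)$, and for $s\neq q$ the pair $\bigl(v_i^{(0)}\cdot x_s,\,v_i^{(0)}\cdot x_q\bigr)$ is jointly Gaussian with covariance $\sigma^2(x_s\cdot x_q)$. Second, with the prescribed scalings both $h=p^{-1}$ and $\alpha_0=p^{-1/2-\beta}$ are $o(\sigma)=o(p^{-1/2-\beta/2})$, so the threshold $h+4\alpha_0$ in Definition~\ref{d:I} lies within $o(1)$ standard deviations of $0$; in particular $\Pr(v_i^{(0)}\cdot x_s\ge h+4\alpha_0)=\tfrac12-o(1)$. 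For each item the quantity being counted is (given the data) a sum of independent $\{0,1\}$ or bounded terms over the $2p$ units, so I would compute its expectation by a Gaussian integral and apply a Chernoff/Bernstein/sub‑Gaussian tail bound. Items~2 and~5--6 follow immediately from this template: for item~2, $\lvert I_{+s}\rvert,\lvert I_{-s}\rvert\sim\mathrm{Binom}(p,\tfrac12-o(1))$ and Chernoff gives $(\tfrac12\pm o(1))p$ with failure $e^{-\Omega(p)}$; for items~5--6, the event ``$i\in I_{+s}$ and $v_i^{(0)}\cdot x_q\ge0$'' has probability at most the orthant probability $\tfrac14+\tfrac1{2\pi}\arcsin(x_s\cdot x_q)$, and $y_s\neq y_q$ forces $x_s\cdot x_q\le\mu_k\cdot\mu_\ell+O(r)\le\tfrac{1+\Delta}{2}+O(r)$, so expanding $\arcsin$ around $\tfrac12$ bounds the per‑unit probability by $\tfrac13+\tfrac{\Delta}{4}+r+o(1)$, and Chernoff finishes it.

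Three items need slightly more. For item~1 I would set $\chi(h,\alpha_0,\sigma):=\E\bigl[(v\cdot x)\,\mathbf{1}\{v\cdot x\ge h+4\alpha_0\}\bigr]$ with $v\cdot x\sim\cN(0,\sigma^2)$, which equals $\sigma\varphi\bigl((h+4\alpha_0)/\sigma\bigr)=\Theta(\sigma)$ so that $p\chi=\Theta(p^{1/2-\beta/2})$ is positive; then $\sum_{i\in I_{+s}}v_i^{(0)}\cdot x_s$ is a sum of $p$ i.i.d.\ terms, each bounded in absolute value by $\lvert\cN(0,\sigma^2)\rvert$ and hence $O(\sigma)$‑sub‑Gaussian, with mean $p\chi$; a sub‑Gaussian tail bound gives additive deviation at most $1$ with failure probability $e^{-\Omega(1/(p\sigma^2))}=e^{-\Omega(p^\beta)}$, and the $I_{-s}$ sum is identical. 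For item~3 I would show $f_{V^{(0)}}(x_s)=o(1)$ with high probability: writing $\phi(z)=\max(0,z)+(\phi(z)-\max(0,z))$, the leading part $\sum_i u_i\max(0,v_i^{(0)}\cdot x_s)$ is a difference of two i.i.d.\ sums of $p$ rectified Gaussians, hence mean $0$ and $O(\sqrt p\,\sigma)$‑sub‑Gaussian, while the Huber correction is $-h/2$ on the $\approx p$ units per sign with $v_i^{(0)}\cdot x_s>h$ (cancelling across signs up to an $O(\sqrt p\,h)=o(1)$ fluctuation) and is $O(h)$ on the $O(p^{1/2+\beta/2})$ units landing in $[0,h]$ (total $O(p^{-1/2+\beta/2})=o(1)$); hence $g_{0s}=1/(1+\exp(y_s f_{V^{(0)}}(x_s)))=\tfrac12\pm o(1)$. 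For item~7 the count is $\mathrm{Binom}(p,q)$ with $q$ at most the window length $h+4\alpha_0+\alpha_0(\tfrac12+2(\Delta+r))$ times the peak density $\tfrac1{\sigma\sqrt{2\pi}}$, which is $o(1)$ but still has $pq\gg\ln(nd/\delta)$, so Chernoff keeps it below the stated $\tfrac{\sqrt2}{\sigma\sqrt\pi}\bigl(h+5\alpha_0(2+\Delta+r)\bigr)p$ (the factor $2$ relative to the peak density absorbs the slack). Item~8 is unrelated to the capture analysis: by Lemma~\ref{l:gradient_norm.upper}, $\alpha_0\lv\nabla L_0\rv\le\alpha_0\sqrt{2p}=\sqrt2\,p^{-\beta}$, while $\lv V^{(0)}\rv^2=\sigma^2\chi^2_{2p(d+1)}$ concentrates around $2p(d+1)\sigma^2=\Theta(d/p^\beta)$, so $\lv V^{(0)}\rv=\sqrt{2d/p^\beta}\,(1\pm o(1))$ and $\alpha_0\lv\nabla L_0\rv=o(\lv V^{(0)}\rv)$; the triangle inequality $\bigl|\lv V^{(1)}\rv-\lv V^{(0)}\rv\bigr|\le\alpha_0\lv\nabla L_0\rv$ then places $\lv V^{(1)}\rv$ in $[\tfrac35\sqrt{d/p^\beta},\,3\sqrt{d/p^\beta}]$.

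I expect item~4 to be the main obstacle, since it asks that a single random unit with $u_i=y_k$ capture \emph{every} $x_s$ in the cluster $\cK_k$ at once, so one cannot reduce to a statement about an individual $x_s$. I would condition on $Y:=v_i^{(0)}\cdot\mu_k\sim\cN(0,\sigma^2)$ and write $v_i^{(0)}\cdot x_s=Y\bigl(1-\tfrac12\lv x_s-\mu_k\rv^2\bigr)+\widetilde Z_s$, where $\widetilde Z_s$ is the component of $v_i^{(0)}$ along $(x_s-\mu_k)^\perp$, independent of $Y$, distributed $\cN(0,\sigma^2\lv(x_s-\mu_k)^\perp\rv^2)$ with standard deviation at most $\sigma r/\sqrt2$. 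Controlling $\max_{s\in\cK_k}\lvert\widetilde Z_s\rvert$ by a Gaussian tail bound and integrating against the law of $Y$, the probability that $i$ captures all of $\cK_k$ is at least $\Pr\bigl(\cN(0,1)\ge(h+4\alpha_0)/\sigma+O(\sqrt r)\bigr)-o(1)\ge(1-\sqrt r)/2-o(1)$; the delicate point is that the within‑cluster spread must only enter as a lower‑order penalty, small enough that the linearization $\Pr(\cN(0,1)\ge t)\ge\tfrac12-t/\sqrt{2\pi}$ still returns at least $(1-\sqrt r)/2$ once $r$ is a sufficiently small constant. A Chernoff bound over the $p$ units with $u_i=y_k$ then yields the claimed cardinality, and the $I_{-s}$ version is identical. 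Finally, I would collect the $O(n^2)$ failure events, each of probability at most $e^{-\Omega(p^\beta)}$ (or $e^{-\Omega(p)}$), and note that $p\ge\ln^{C_3}(nd/\delta)$ with $C_3$ large enough — so that $p^\beta\gg\ln(n^2p/\delta)$ — makes their total at most $\delta$.
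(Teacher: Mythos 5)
For items~\ref{i:meanofiplusandiminus}, \ref{i:sizeofthesetiplusandiminus}, \ref{i:init.captured.not.captured.part2}, \ref{i:sizeofsetthatfallsinthemiddle} and \ref{i:init:normboundV} (paper numbering; you count seven claims as eight by splitting item~\ref{i:init.captured.not.captured.part2}) your argument is essentially the paper's: the same choice of $\chi$ as a truncated Gaussian mean followed by a sub-Gaussian Hoeffding bound with deviation $1=\eta p$ at $\eta=1/p$; binomial counts with per-unit probability $\tfrac12-O((h+\alpha_0)/\sigma)$; the Gaussian orthant probability $\tfrac14+\tfrac{1}{2\pi}\arcsin(x_s\cdot x_q)$, which is the paper's $\tfrac12\bigl(1-\arccos(x_s\cdot x_q)/\pi\bigr)$ in disguise; the window-length-times-peak-density bound; and $\chi^2$ concentration for $\lv V^{(0)}\rv$ combined with $\alpha_0\lv \nabla L_0\rv\le\sqrt{2p}\,\alpha_0$. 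For item~\ref{i:init.lprime} you take a mildly different route (ReLU part plus Huber correction); the paper instead notes that $\E\bigl[\sum_i u_i\phi(v_i^{(0)}\cdot x_s)\bigr]=0$ exactly by the symmetry between the $u_i=1$ and $u_i=-1$ halves and applies Gaussian Lipschitz concentration to the $\sqrt{2p}$-Lipschitz map $(\xi_1,\dots,\xi_{2p})\mapsto\sum_i u_i\phi(\xi_i)$. Both yield $|f_{V^{(0)}}(x_s)|=o(1)$, so this difference is cosmetic.

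Item~\ref{i:numberofelementsinthesameclustercaptured} is where you genuinely diverge from the paper, and where your argument as written has a gap. You condition on $Y=v_i^{(0)}\cdot\mu_k$ and control $\max_{s\in\cK_k}|\widetilde Z_s|$, where each $\widetilde Z_s$ has standard deviation at most $\sigma r/\sqrt2$. But $\cK_k$ contains order $n$ samples, so a Gaussian tail bound only gives $\max_{s\in\cK_k}|\widetilde Z_s|\lesssim \sigma r\sqrt{\ln n}$ with probability $1-o(1)$; the resulting shift in the threshold for $Y$ is $O(r\sqrt{\ln n})$ standard deviations, not $O(\sqrt r)$. Since the hypotheses only require $p\ge\ln^{C_3}(nd/\delta)$, $\ln n$ may grow with $p$ while $r$ stays a fixed constant, so $r\sqrt{\ln n}$ is not $O(\sqrt r)$ and the claimed per-unit simultaneous-capture probability $(1-\sqrt r)/2-o(1)$ does not follow from your estimate. (Trying to avoid the max by union-bounding the failure probabilities $\Pr[\widetilde Z_s<-(Y-h-4\alpha_0)]$ over $s\in\cK_k$ runs into the same $\sqrt{\ln n}$ requirement on $Y$.) You correctly identified this as the delicate point, but the step fails as stated; to salvage your route you would need either $r\lesssim 1/\ln n$ or a weakened bound of the form $\bigl(1-cr\sqrt{\ln n}\bigr)/2$ propagated through Lemma~\ref{l:bound_on_contribution_of_Iplus_and_Iminus}. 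For comparison, the paper's proof of this item never forms the maximum over the cluster: it lower-bounds only the \emph{pairwise} probability $\Pr\bigl[v_i^{(0)}\cdot x_s\ge h+4\alpha_0\ \wedge\ v_i^{(0)}\cdot x_q\ge h+4\alpha_0\bigr]\ge\tfrac12\bigl(1-\arccos(1-2r)/\pi\bigr)-o(1)\ge\tfrac{1-\sqrt r}{2}-o(1)$ via the orthant formula, applies Hoeffding to the count for each fixed pair, and then takes a union bound over the $O(n^2)$ pairs in the cluster.
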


\begin{definition}
\label{d:good_run}
If the random initialization satisfies all of the 
conditions 
of Lemma~\ref{l:concentration},
let us refer to the
entire ensuing training process as {\em a good run.}
\end{definition}

Armed with Lemma~\ref{l:concentration}, it suffices to
show that the loss bounds of Theorem~\ref{t:main.clusters} hold
on a good run.  For the rest of the proof, let us assume
that we are analyzing a good run.  
\begin{lemma}
\label{l:loss.one_step} 
For all small enough $C_2 > 0$, all large enough 
$C_3$ and all small enough $C_4 > 0$, 
the loss after the initial step of gradient descent is bounded above as follows:
\begin{align*}
    L_{1} \le \exp\left(-C_4p^{(1/2 - \beta)}\right).
\end{align*}
\end{lemma}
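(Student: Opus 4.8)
The plan is to prove the stronger claim that, on a good run, $y_s f_{V^{(1)}}(x_s)\ge c\,p^{1/2-\beta}$ for \emph{every} $s\in[n]$, where $c>0$ is an absolute constant; since $\log(1+e^{-z})\le e^{-z}$, averaging over $s$ gives $L_1\le e^{-c\,p^{1/2-\beta}}$, which is the lemma for any $C_4\le c$. Fix $s$. Writing the gradient step out, $v_i^{(1)}=v_i^{(0)}+\frac{\alpha_0u_i}{n}\sum_{q=1}^n y_q g_{0q}\,\phi'(v_i^{(0)}\cdot x_q)\,x_q$, and using $|x_q\cdot x_s|\le 1$, $0\le\phi'\le 1$, and $\frac1n\sum_q g_{0q}\le\frac12+o(1)$ (item~\ref{i:init.lprime} of Lemma~\ref{l:concentration}), the step moves each preactivation of $x_s$ by at most $|(v_i^{(1)}-v_i^{(0)})\cdot x_s|\le\alpha_0(\tfrac12+o(1))<\alpha_0$ once $p$ is large. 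Two consequences: (i) every $i\in I_{+s}\cup I_{-s}$ still has $v_i^{(1)}\cdot x_s\ge h+3\alpha_0>h$, so $\phi$ sits in its linear piece there; and (ii) since the positive part of the update is at most $\alpha_0(\tfrac12+2\epsilon)(\tfrac12+o(1))\big(\tfrac{1+\Delta}{2}+O(r)\big)<\alpha_0\big(\tfrac12+2(\Delta+r)\big)$ for small $\epsilon,\Delta,r$, no unit with $u_i\ne y_s$ and $v_i^{(0)}\cdot x_s<-\alpha_0(\tfrac12+2(\Delta+r))$ becomes active. Decomposing $y_sf_{V^{(1)}}(x_s)=\sum_{i:u_i=y_s}\phi(v_i^{(1)}\cdot x_s)-\sum_{i:u_i=-y_s}\phi(v_i^{(1)}\cdot x_s)$, dropping the nonnegative terms over $\{i:u_i=y_s\}\setminus I_{+s}$, and using (i)--(ii) on the negative side so that only $I_{-s}$ and the ``middle band'' $\mathcal M_s:=\{i:u_i\ne y_s,\ v_i^{(0)}\cdot x_s\in[-\alpha_0(\tfrac12+2(\Delta+r)),\,h+4\alpha_0]\}$ survive, I get
\[
y_sf_{V^{(1)}}(x_s)\ \ge\ \Big(\sum_{i\in I_{+s}}v_i^{(1)}\cdot x_s-\sum_{i\in I_{-s}}v_i^{(1)}\cdot x_s\Big)-\frac h2\big(|I_{+s}|-|I_{-s}|\big)-\sum_{i\in\mathcal M_s}\phi(v_i^{(1)}\cdot x_s).
\]

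The crux is showing the bracketed difference is at least a constant times $\alpha_0 p=p^{1/2-\beta}$. Expand $v_i^{(1)}\cdot x_s=v_i^{(0)}\cdot x_s+\frac{\alpha_0u_i}{n}\sum_q y_q g_{0q}\phi'(v_i^{(0)}\cdot x_q)(x_q\cdot x_s)$: the $v_i^{(0)}\cdot x_s$ parts cancel up to $\pm2$ by item~\ref{i:meanofiplusandiminus}, and the remaining drift splits, over each of $I_{+s}$ and $I_{-s}$, into a ``friend'' part (sum over $q$ in the cluster $\cK_k$ containing $s$) and an ``enemy'' part (sum over $q$ with $y_q\ne y_s$). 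Because $u_i=y_s$ on $I_{+s}$ and $u_i=-y_s$ on $I_{-s}$, the friend part \emph{raises} $\sum_{I_{+s}}v_i^{(1)}\cdot x_s$ and \emph{lowers} $\sum_{I_{-s}}v_i^{(1)}\cdot x_s$ --- both helpful --- whereas the enemy part does the opposite. For $q\in\cK_k$ we have $x_q\cdot x_s\ge1-r^2$, $g_{0q}\ge\tfrac12-o(1)$, and by item~\ref{i:numberofelementsinthesameclustercaptured} at least $(\tfrac{1-\sqrt r}{2}-o(1))p$ units of $I_{+s}$ (resp.\ $I_{-s}$) have $v_i^{(0)}\cdot x_q\ge h$ for \emph{all} $q\in\cK_k$ (so $\phi'=1$), making the friend part at least $(\tfrac14-\epsilon)(\tfrac12-o(1))(1-r^2)(\tfrac{1-\sqrt r}{2}-o(1))\,\alpha_0 p$ on each side. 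For the enemy part, $x_q\cdot x_s\le\tfrac{1+\Delta}{2}+O(r)$ when $y_q\ne y_s$ (by the separation assumption as transformed by the reduction), there are at most $(\tfrac12+2\epsilon)n$ such $q$, and by item~\ref{i:init.captured.not.captured.part2} at most $(\tfrac13+\tfrac\Delta4+r+o(1))p$ units of $I_{+s}$ (resp.\ $I_{-s}$) have $v_i^{(0)}\cdot x_q\ge0$; since $\phi'\le1$ this makes the enemy part at most $(\tfrac12+2\epsilon)(\tfrac12+o(1))\big(\tfrac{1+\Delta}{2}+O(r)\big)(\tfrac13+\tfrac\Delta4+r+o(1))\,\alpha_0 p$ on each side. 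As $\epsilon,\Delta,r\to0$ and $p\to\infty$ the friend bound approaches $\alpha_0 p/16$ and the enemy bound approaches $\alpha_0 p/24$; since $1/16>1/24$, picking $C_2$ small and $C_3$ large makes the net drift on each side at least $\alpha_0 p/48$, so the bracketed difference is at least $\tfrac1{24}\alpha_0 p-2$.

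It remains to absorb the two corrections. By item~\ref{i:sizeofthesetiplusandiminus}, $\big|\,|I_{+s}|-|I_{-s}|\,\big|=o(p)$, so $\frac h2\big(|I_{+s}|-|I_{-s}|\big)=o(1)$. Every $i\in\mathcal M_s$ has $\phi(v_i^{(1)}\cdot x_s)\le\max\{v_i^{(1)}\cdot x_s,0\}\le h+5\alpha_0$, while item~\ref{i:sizeofsetthatfallsinthemiddle} gives $|\mathcal M_s|\le\tfrac{\sqrt2}{\sigma\sqrt\pi}\big(h+5\alpha_0(2+\Delta+r)\big)p$; plugging in $h=1/p$, $\alpha_0=p^{-1/2-\beta}$, $\sigma=p^{-1/2-\beta/2}$ (and $\beta<1/2$, so $\alpha_0\gg h$) yields $\sum_{i\in\mathcal M_s}\phi(v_i^{(1)}\cdot x_s)=O(p^{1/2-3\beta/2})=o(p^{1/2-\beta})$. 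Hence $y_sf_{V^{(1)}}(x_s)\ge\tfrac1{24}\alpha_0 p-o(\alpha_0 p)=\tfrac1{24}p^{1/2-\beta}(1-o(1))\ge\tfrac1{48}p^{1/2-\beta}$ for all large $p$, uniformly in $s$, so $L_1\le\exp(-\tfrac1{48}p^{1/2-\beta})$, which gives the lemma for any $C_4\le\tfrac1{48}$.

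I expect the main obstacle to be the drift estimate of the second paragraph: one must check that the ``friend'' push strictly beats the ``enemy'' push --- limiting constants $1/16$ versus $1/24$ --- with enough slack to swallow the $\epsilon,\Delta,r$ and $o(1)$ corrections. This is precisely where the cluster and separation conditions matter: small $r$ keeps same-cluster points nearly aligned with $x_s$ and (via item~\ref{i:numberofelementsinthesameclustercaptured}) keeps many units capturing a whole cluster with margin, while small $\Delta,r,\epsilon$ keep opposite-label points both weakly aligned with $x_s$ (inner product near $\tfrac12$, not $1$) and captured by only a minority of the units in $I_{\pm s}$ (item~\ref{i:init.captured.not.captured.part2}). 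The secondary technical point is the Huberization transition band $\mathcal M_s$, which the anti-concentration estimate in item~\ref{i:sizeofsetthatfallsinthemiddle} renders negligible at scale $p^{1/2-\beta}$.
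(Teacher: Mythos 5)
Your proposal is correct and follows essentially the same route as the paper: the same four-way split of the hidden units (the sets $I_{+s}$, $I_{-s}$, the inactive same-sign units, and the Huberization/middle band controlled via Part~\ref{i:sizeofsetthatfallsinthemiddle}), the same cancellation of the $v_i^{(0)}\cdot x_s$ terms via Part~\ref{i:meanofiplusandiminus}, and the same ``friend vs.\ enemy'' drift comparison with limiting constants $1/16$ versus $1/24$ yielding a net gain of order $\alpha_0 p/24$ (these are exactly the paper's $\Xi_1$ and $\Xi_2$ bounds in Lemma~\ref{l:bound_on_contribution_of_Iplus_and_Iminus}). The only difference is organizational---you bound the combined difference directly rather than bounding the $I_{+s}$ and $I_{-s}$ contributions separately as the paper does.
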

\begin{proof}Let us examine the loss of each example after one step. Consider an example $s \in [n]$. Without loss of generality let us assume that $y_s = 1$ and that it belongs to cluster $\cK_k$:
\begin{align*}
    L_{1s}  = \ln\left(1+\exp\left(- \sum_{i=1}^{2p}u_i \phi(v_i^{(1)} \cdot x_s)\right)\right).
\end{align*}
Since $s$ is fixed and $y_s=1$, we simplify the notation for $I_{+s}$ and $I_{-s}$ and define their complements, dividing the hidden nodes into four groups:
\begin{enumerate}
    \item $I_{+}$ where $v_i^{(0)} \cdot x_s \ge h+ 4\alpha_0$ and $u_i = 1$;
    \item $I_{-}$ where $v_i^{(0)} \cdot x_s \ge h+ 4\alpha_0$ and $u_i = -1$;
    \item $\widetilde{I}_{+}$ where $ v_i^{(0)} \cdot x_s < h+ 4\alpha_0$ and $u_i = 1$;
    \item $\widetilde{I}_{-}$ where $v_i^{(0)} \cdot x_s < h+ 4\alpha_0$ and $u_i = -1$.
\end{enumerate}

We have
\begin{align*}
    L_{1s} &= \ln\left(1+\exp\left(- \sum_{i \in I_+} \phi(v_i^{(1)} \cdot x_s)  +  \sum_{i \in I_-} \phi(v_i^{(1)} \cdot x_s)   \right. \right.\\ &  \qquad \qquad \qquad  \qquad \qquad \qquad \qquad \left.\left. - \sum_{i \in \widetilde{I}_+} \phi(v_i^{(1)} \cdot x_s) +  \sum_{i \in \widetilde{I}_-} \phi(v_i^{(1)} \cdot x_s) \right)\right). \numberthis \label{e:lossdecompositionintermsofgroups}
\end{align*}

By definition of the gradient descent update we have, for each node  $i$,
\begin{align*}
    v_i^{(1)}\cdot x_s = v_i^{(0)} \cdot x_s + \frac{\alpha_0 u_i}{n} \sum_{q: v_i^{(0)} \cdot x_q \ge 0} y_{q}   g_{0q}\phi'(v_i^{(0)} \cdot x_{q})(x_{q} \cdot x_s).
\end{align*}
Note that the groups $I_+$ and $I_-$ are defined such that even after one step of gradient descent, for any node $i \in I_{+} \cup I_{-}$
\begin{align}
    \phi(v_i^{(1)}\cdot x_s) = v_i^{(1)}\cdot x -h/2. \label{e:membership_in_I_plus_minus}
\end{align}
That is, $v_i^{(1)} \cdot x_s$ continues to lie in the linear region of $\phi$ after the first step. To see this, notice that for all $q$,
\[
 g_{0q}, \phi'(v_i^{(0)} \cdot x_{q}),  x_{q} \cdot x_s 
     \in [0,1],  
\]
 and hence $\lvert v_i^{(1)} \cdot x_s - v_i^{(0)}\cdot x_s \rvert \le \alpha_0$.

Our proof will proceed using four steps. Each step analyzes the contribution of nodes in a particular group.  We give the outline here, deferring
the proof of some parts to lemmas that follow.

\emph{Steps 1 and 2:} 
In Lemma~\ref{l:bound_on_contribution_of_Iplus_and_Iminus}
we will show that, for an absolute constant $c$,
\begin{align}\label{e:contributionofIplus}
    \sum_{i \in I_{+}} \phi(v_{i}^{(1)}\cdot x_s) & \ge p \chi(h,\alpha_0,\sigma) + \frac{\alpha_0  p}{48} \left(1-c(\Delta+\sqrt{r}+\epsilon)-o(1)\right)-2
\end{align}
and
\begin{align} \label{e:contributionofIminus}
    \sum_{i \in I_-} \phi(v_i^{(1)} \cdot x_s)  \le p \chi(h,\alpha_0,\sigma) - \frac{\alpha_0  p}{48} \left(1-c(\Delta+\sqrt{r}+\epsilon)-o(1)\right)+2.
\end{align}

\emph{Step 3:} Since the Huberized ReLU is non-negative a simple 
bound on the contribution of nodes in $\widetilde{I}_{+}$ is $\sum_{i \in \widetilde{I}_{+}} \phi(v_i^{(1)} \cdot x_k)\ge 0$. 

\emph{Step 4:} Finally 
in Lemma~\ref{l:bound_on_the_contribution_of_tilde_Iminus} we will show that the contribution of the nodes in $\widetilde{I}_-$ is bounded above by
\begin{align}
    \sum_{i \in \tI_-} \phi(v_i^{(1)} \cdot x_s) &\le \frac{\sqrt{2}}{\sigma\sqrt{\pi}} \left(h+5\alpha_0(2+\Delta+r)\right)^2p. \label{e:contributionoftildeIminus}
\end{align}

Combining the bounds in inequalities \eqref{e:contributionofIplus}, \eqref{e:contributionofIminus} and \eqref{e:contributionoftildeIminus} with the decomposition of the loss in \eqref{e:lossdecompositionintermsofgroups} we infer,
\begin{align*}
    L_{1s}  &\le \ln\left(1+\exp\left(-\frac{\alpha_0  p}{24}
    \left(1-c(\Delta+\sqrt{r}+\epsilon)-o(1)\right) + \frac{\sqrt{2}}{\sigma\sqrt{\pi}} \left(h+5\alpha_0(2+\Delta+r)\right)^2p\right)\right) \\
    & \le \ln\left(1+\exp\left(-\frac{\alpha_0  p}{24} \left[1-c(\Delta+\sqrt{r}+\epsilon)-o(1)\right] \right)\right),
\end{align*}
since $h = o(\alpha_0)$, and $\alpha_0 = o(\sigma)$.
Now since $\Delta,r,\epsilon < C_2$, where $C_2$ is a small enough constant, $\alpha_0 = 1/p^{1/2 + \beta}$ and because $p$ is bigger than a suitably large constant we have,
\begin{align*}
    L_{1s} \le \ln\left(1+\exp\left(-C_4 \alpha_0 p\right)\right) = \ln\left(1+\exp\left(-C_4 p^{1/2 - \beta}\right)\right) \le \exp\left(-C_4p^{1/2 - \beta}\right).
\end{align*}
 Recall that the sample $s$ was chosen without loss of generality above. Therefore, by averaging over the $n$ samples we have
\begin{align*}
    L_{1} = \frac{1}{n}\sum_{s=1}^n L_{1s}\le  \exp\left(-C_4p^{1/2 - \beta}\right)
\end{align*}
establishing our claim.
\end{proof}
Next, as promised in the proof of Lemma~\ref{l:loss.one_step}, we bound the contribution due to the nodes in $I_{+}$ and $I_{-}$ after one step.
\begin{lemma} \label{l:bound_on_contribution_of_Iplus_and_Iminus} Borrowing all notation from the proof of Lemma~\ref{l:loss.one_step} above, for all
small enough $C_2$
and large enough $C_3$, there is an absolute constant $c$ such that, on a good run 
\begin{align*}
     \sum_{i \in I_{+}} \phi(v_{i}^{(1)}\cdot x_s) & \ge p \chi(h,\alpha_0,\sigma) + \frac{\alpha_0  p}{48} \left(1-c(\Delta+\sqrt{r}+\epsilon)-o(1)\right)-2, \text{ and }\\
      \sum_{i \in I_-} \phi(v_i^{(1)} \cdot x_s)  &\le p \chi(h,\alpha_0,\sigma) - \frac{\alpha_0  p}{48} \left(1-c(\Delta+\sqrt{r}+\epsilon)-o(1)\right)+2.
\end{align*}
\end{lemma}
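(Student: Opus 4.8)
The plan is to expand $\phi(v_i^{(1)}\cdot x_s)$ for each $i\in I_+\cup I_-$ using eq.~\eqref{e:membership_in_I_plus_minus} — which already tells us these nodes stay in the linear region of $\phi$ after one step, so $\phi(v_i^{(1)}\cdot x_s)=v_i^{(1)}\cdot x_s-h/2$ — together with the gradient-descent update for $v_i^{(1)}\cdot x_s$ written out in the proof of Lemma~\ref{l:loss.one_step}. Since $u_i=1$ on $I_+$, summing over $i\in I_+$ gives
\[
\sum_{i\in I_+}\phi(v_i^{(1)}\cdot x_s)=\sum_{i\in I_+}v_i^{(0)}\cdot x_s-\frac{|I_+|h}{2}+\frac{\alpha_0}{n}\sum_{i\in I_+}\sum_{q:\,v_i^{(0)}\cdot x_q\ge0}y_q g_{0q}\,\phi'(v_i^{(0)}\cdot x_q)\,(x_q\cdot x_s).
\]
Part~\ref{i:meanofiplusandiminus} of Lemma~\ref{l:concentration} lower bounds the first term by $p\chi(h,\alpha_0,\sigma)-1$, and Part~\ref{i:sizeofthesetiplusandiminus} with $h=1/p$ bounds the second by $-(1/4+o(1))\ge-1$, so the whole task reduces to showing the ``update'' double sum is at least $\tfrac{\alpha_0 p}{48}\big(1-c(\Delta+\sqrt r+\epsilon)-o(1)\big)$, which then gives inequality~\eqref{e:contributionofIplus} with the ``$-2$'' absorbing the two $O(1)$ losses.

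To bound the update double sum I would split the inner sum over $q$ by label. The terms with $y_q=y_s$ are all non-negative (post-reduction $x_q\cdot x_s\ge0$, and $g_{0q},\phi'\ge0$, all in $[0,1]$), so I can drop all of them except those with $q$ in $s$'s own cluster $\cK_k$ and $i$ lying in $I_{+q}$ for \emph{every} $q\in\cK_k$: by Part~\ref{i:numberofelementsinthesameclustercaptured} there are at least $(\tfrac{1-\sqrt r}{2}-o(1))p$ such nodes, all of them lie in $I_+$, and on them the extra margin $h+4\alpha_0$ in Definition~\ref{d:I} forces $v_i^{(0)}\cdot x_q\ge h$, hence $\phi'(v_i^{(0)}\cdot x_q)=1$. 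Combining with $g_{0q}\ge\tfrac12-o(1)$ (Part~\ref{i:init.lprime}), $x_q\cdot x_s\ge1-r^2$ for $q,s\in\cK_k$ (from $\|x_q-\mu_k\|,\|x_s-\mu_k\|\le r/\sqrt2$ and unit norms), and $n_k\ge(\tfrac14-\epsilon)n$, this positive contribution is at least $\tfrac{\alpha_0 p}{16}\big(1-c'(\sqrt r+\epsilon)-o(1)\big)$. For the terms with $y_q\ne y_s$ I would bound the total magnitude using $g_{0q}\le\tfrac12+o(1)$, $\phi'\le1$, and $x_q\cdot x_s\le\tfrac{1+\Delta}{2}+\sqrt2\,r$ (reduced separation plus cluster radii), times the number of pairs $(i,q)$ with $i\in I_{+s}$, $y_q\ne y_s$, $v_i^{(0)}\cdot x_q\ge0$, which is at most $(n_3+n_4)(\tfrac13+\tfrac\Delta4+r+o(1))p\le(\tfrac12+2\epsilon)n(\tfrac13+\tfrac\Delta4+r+o(1))p$ by Part~\ref{i:init.captured.not.captured.part2}; this gives a negative magnitude at most $\tfrac{\alpha_0 p}{24}\big(1+c''(\Delta+r+\epsilon)+o(1)\big)$. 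Since $\tfrac1{16}-\tfrac1{24}=\tfrac1{48}$, subtracting yields the required lower bound.

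Inequality~\eqref{e:contributionofIminus} for $I_-$ is the mirror image: here $u_i=-1$, so $\sum_{i\in I_-}\phi(v_i^{(1)}\cdot x_s)\le\sum_{i\in I_-}v_i^{(0)}\cdot x_s-\tfrac{\alpha_0}{n}\sum_{i\in I_-}\sum_q y_q g_{0q}\phi'(\cdot)(x_q\cdot x_s)$; the first term is at most $p\chi(h,\alpha_0,\sigma)+1$ by Part~\ref{i:meanofiplusandiminus}, and one now needs a \emph{lower} bound on the same double sum, produced exactly as above but invoking the $I_{-s}$ halves of Parts~\ref{i:numberofelementsinthesameclustercaptured} and~\ref{i:init.captured.not.captured.part2}. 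I expect the bulk of the work to be routine bookkeeping: carrying the $o(1)$ and $O(\Delta+\sqrt r+\epsilon)$ corrections through all the products and confirming that, once $C_2$ is small enough, the net coefficient stays positive and equals $\tfrac1{48}$ up to those corrections. The only genuinely non-routine points are checking that the ``$\phi'=1$'' simplification on the good nodes is legitimate (it uses precisely the $4\alpha_0$ slack in Definition~\ref{d:I}) and that restricting the positive part to one cluster and to the good nodes only costs a constant factor rather than the whole term.
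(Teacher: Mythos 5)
Your proposal is correct and follows essentially the same route as the paper: expand via the linear-region identity and the gradient update, lower bound $\sum_{i\in I_+} v_i^{(0)}\cdot x_s$ by $p\chi - 1$, then split the update term into a same-cluster positive part (bounded below by $\tfrac{\alpha_0 p}{16}(1 - O(\sqrt r + \epsilon) - o(1))$ via Parts~\ref{i:sizeofthesetiplusandiminus}--\ref{i:numberofelementsinthesameclustercaptured} of Lemma~\ref{l:concentration}) and an opposite-label negative part (bounded above by $\tfrac{\alpha_0 p}{24}(1 + O(\Delta + r + \epsilon) + o(1))$ via Part~\ref{i:init.captured.not.captured.part2}), with $\tfrac1{16}-\tfrac1{24}=\tfrac1{48}$ and the mirror argument for $I_-$. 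The minor differences (your $x_q\cdot x_s \ge 1 - r^2$ versus the paper's $1-2r$, and folding the $q=s$ term into the cluster sum rather than isolating it) are immaterial.
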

\begin{proof}We begin by analyzing the contribution of nodes in group $I_+$.
\begin{align*}
    & \sum_{i \in I_+} \phi(v_i^{(1)} \cdot x_s) \\
    &   = \sum_{i \in I_{+}} \phi\left(v_i^{(0)} \cdot x_s + \frac{\alpha_0}{n} \sum_{q: v_i^{(0)} \cdot x_{q} \ge 0} y_{q} g_{0q} \phi'(v_i^{(0)} \cdot x_{q}) (x_{q} \cdot x_s)\right) \\
    & = \sum_{i \in I_{+}} \left(v_i^{(0)} \cdot x_s + \frac{\alpha_0}{n} \sum_{q: v_i^{(0)} \cdot x_{q} \ge 0} y_{q} g_{0q} \phi'(v_i^{(0)} \cdot x_{q}) (x_{q} \cdot x_s)\right) -\frac{h \lvert I_{+} \rvert}{2} \\
    & \hspace{1.5in} \left(\mbox{since the $i \in I_+$ satisfy \eqref{e:membership_in_I_plus_minus}}\right) \\
    & = \sum_{i \in I_{+}} \left(v_i^{(0)} \cdot x_s + \frac{\alpha_0 g_{0s}}{n} + \frac{\alpha_0 }{n} \sum_{q \neq s: v_i^{(0)} \cdot x_{q} \ge 0} y_{q} g_{0q} \phi'(v_i^{(0)} \cdot x_{q}) (x_{q} \cdot x_s)\right) -\frac{h \lvert I_{+} \rvert}{2}.
\end{align*}
Since
we are analyzing a good run, Parts~\ref{i:meanofiplusandiminus} and \ref{i:sizeofthesetiplusandiminus}
of 
Lemma~\ref{l:concentration}
imply that
$\sum_{i \in I_{+}} v_i^{(0)} \cdot x_k \ge p\chi(h,\alpha_0,\sigma) - 1$ and that $h\lvert I_{+} \rvert \leq 1/2 + o(1)$, therefore, for $p$ larger than a constant,
\begin{align*}
     &\sum_{i \in I_+} \phi(v_i^{(1)} \cdot x_s)\\ 
     & \ge p \chi(h, \alpha_0,\sigma)  + \frac{\alpha_0}{n} \sum_{i \in I_{+}}\left( g_{0s}   + \sum_{q \neq s: v_i^{(0)} \cdot x_{q} \ge 0} y_{q} g_{0q} \phi'(v_i^{(0)} \cdot x_q) (x_{q} \cdot x_s) \right) -2\\
       & \ge p \chi(h,\alpha_0,\sigma)  + \frac{\alpha_0}{n} \left(\sum_{i \in I_{+}} g_{0s} + \sum_{i \in I_{+}} \sum_{q \in \cK_k - \{ s \} , v_{i}^{(0)}\cdot x_q \ge h} g_{0q}(x_q \cdot x_s)\right.\\& 
       \left. \hspace{1.5in}- \sum_{i\in I_{+}}\sum_{q: v_i^{(0)} \cdot x_{q} \ge 0,y_q = -1}  g_{0q} \phi'(v_i^{(0)} \cdot x_q) (x_{q} \cdot x_s) \right) - 2 \numberthis\label{e:iplusboundmiddle}
       \end{align*}
       where the previous inequality above follows in part by 
       recalling that $s \in \cK_k$ where $y_k = 1$, and
       noting that, since $x_s\cdot x_q \ge 0$ for all pairs, we can ignore contributions that have $y_q =1$. Evolving this further
       \begin{align*}
       \sum_{i \in I_+} \phi(v_i^{(1)} \cdot x_s)
       & \overset{(i)}{\ge} p \chi(h,\alpha_0,\sigma)  + \frac{\alpha_0}{n} \left(\sum_{i \in I_{+}} g_{0s} + \sum_{i \in I_{+}} \sum_{q: q \in \cK_k - \{ s \}, v_{i}^{(0)}\cdot x_q \ge h} g_{0q}(1-2r)\right.\\
       & \left. \hspace{1.5in}- \sum_{i\in I_{+}}\sum_{q : v_i^{(0)} \cdot x_{q} \ge 0,y_q = -1} g_{0q} \phi'(v_i^{(0)} \cdot x_q) (x_{q} \cdot x_s) \right) - 2 \\
       & \overset{(ii)}{\ge} p \chi(h,\alpha_0,\sigma)  + \frac{\alpha_0}{n} \left(\underbrace{\sum_{i \in I_{+}} g_{0s} + \sum_{i \in I_{+}} \sum_{q: q \in \cK_k - \{ s \}, v_{i}^{(0)}\cdot x_q \ge h} g_{0q}(1-2r)}_{=: \Xi_1}\right.\\
       & \left. \hspace{1.5in}- \underbrace{\sum_{i\in I_{+}}\sum_{q : v_i^{(0)} \cdot x_{q} \ge 0, y_q = -1}  g_{0q}  (x_{q} \cdot x_s)}_{=:\Xi_2}\right) - 2, \label{e:lowerboundonthecontributionofiplus} \numberthis
\end{align*}
where $(i)$ follows 
since, when $s$ and $q$ are in the same cluster,
$x_s \cdot x_q \geq 1 - 2 r$
(which is proved in Lemma~\ref{l:aux.innerproductboundonx_sandx_q} 
below)
and $(ii)$ follows since 
$\phi$ is $1$-Lipschitz.
Next we provide a lower bound on the term~$\Xi_1$
\begin{align*}
    \Xi_1 
    &= \sum_{i \in I_{+}} g_{0s} + \sum_{i \in I_{+}} \sum_{q \in \cK_k - \{ s \}, v_{i}^{(0)}\cdot x_q \ge h} g_{0q}(1-2r)\\
    &= g_{0s} \lvert I_{+} \rvert + \sum_{i \in I_{+}} \sum_{q \in \cK_k - \{ s \}, v_{i}^{(0)}\cdot x_q \ge h} g_{0q}(1-2r)\\
    & \overset{(i)}{\ge} \left(\frac{1}{2}-o(1)\right) \lvert I_{+} \rvert \\
    & \hspace{40pt} +\left(\frac{1}{2}-o(1)\right)(1-2r) \sum_{q: q \in \cK_k - \{ s \}}\left\lvert\left\{ i\in [2p]: i\in I_{+} \text{ and } v_i^{(0)} \cdot x_q \ge h\right\}\right\rvert   \\
    & \overset{(ii)}{\ge} \left(\frac{1}{2}-o(1)\right)\left[\left(\frac{1}{2}-o(1)\right)p +(1-2r)(|\cK_k| -1)\left(\frac{1-\sqrt{r}}{2}-o(1)\right)p \right]\\
    &\ge \left(\frac{1}{2}-o(1)\right) |\cK_k| (1-2r)\left(\frac{1-\sqrt{r}}{2}-o(1)\right)p \\
    & \overset{(iii)}{\ge} \left(\frac{1}{2}-o(1)\right)\left(\frac{1}{4}-\epsilon\right)(1-2r)\left(\frac{1-\sqrt{r}}{2}-o(1)\right) np\\
    & \overset{(iv)}{\ge} \frac{1}{16} \left( 1 - c_1 (\sqrt{r} + \epsilon) -o(1)\right)np, \numberthis  \label{e:lowerboundonxi1}
\end{align*}
for an absolute positive constant $c_1$,
where $(i)$ follows since, by
Part~\ref{i:init.lprime}
of Lemma~\ref{l:concentration},
on a good run,
$g_{0s} \ge 1/2-o(1)$ for all samples, $(ii)$ follows by using Parts~\ref{i:sizeofthesetiplusandiminus} and \ref{i:numberofelementsinthesameclustercaptured} of Lemma~\ref{l:concentration}, $(iii)$ is by the assumption that $|\cK_k| \ge (1/4-\epsilon)n$ and the simplification in $(iv)$ follows since both $r,\epsilon<C_2$ for a small enough constant $C_2$.

Now we upper bound $\Xi_2$ to get,
\begin{align*}
    \Xi_2 &= \sum_{i\in I_{+}}\sum_{q : v_i^{(0)} \cdot x_{q} \ge 0, y_q = -1}  g_{0q}  (x_{q} \cdot x_s) \\
    & \overset{(i)}{\le} \left(\frac{1}{2}+o(1)\right)\sum_{i\in I_{+}}\sum_{q : v_i^{(0)} \cdot x_{q} \ge 0, y_q = -1}    x_{q} \cdot x_s \\
    &\overset{(ii)}{\le} \left(\frac{1}{2}+o(1)\right)\sum_{i\in I_{+}}\sum_{q : v_i^{(0)} \cdot x_{q} \ge 0, y_q = -1}   \left(\frac{1+\Delta}{2} +2r\right)\\
    & = \left(\frac{1}{2}+o(1)\right) \left(\frac{1+\Delta}{2} +2r\right) \sum_{q :  y_q = -1} \left\lvert \left\{i\in[2p]: i \in I_{+} \text{ and } v_i^{(0)}\cdot x_q \ge 0 \right\}\right\rvert \\
    & \overset{(iii)}{\le} \left(\frac{1}{2}+o(1)\right) \left(\frac{1+\Delta+4r}{2} \right) 
    (|\cK_3| + |\cK_4|)
    \left( \frac{1}{3} +
            \frac{\Delta}{4}
            + r +o(1)\right)p \\
    & \overset{(iv)}{\le} \left(\frac{1}{2}+o(1)\right) \left(\frac{1+\Delta+4r}{2} \right)\left(\frac{1}{2}+2\epsilon\right)\left( \frac{1}{3} +
            \frac{\Delta}{4}
            + r+o(1)\right)np \\
    & \overset{(v)}{\le} \frac{\left(1+c_2 (\Delta + r + \epsilon) +o(1)\right)np}{24}, \numberthis \label{e:upperboundonthetermxi2}
\end{align*}
for an absolute positive constant $c_2$,
where $(i)$ follows as, by
Part~\ref{i:init.lprime}
of Lemma~\ref{l:concentration},
on a good run, for all samples $g_{0q} \le 1/2+o(1)$, $(ii)$ follows from the fact that,
for $q$ and $s$ from
opposite classes, $x_q \cdot x_s \leq \frac{1 + \Delta}{2} + 2 r$
(which is proved in Lemma~\ref{l:aux.innerproductboundonx_sandx_q}
below), $(iii)$ is obtained by invoking
Part~\ref{i:init.captured.not.captured.part2}
of
Lemma~\ref{l:concentration}, $(iv)$ is by the assumption that 
all clusters have at most $(1/4+\epsilon)n$ 
examples and the simplification in $(v)$ follows since $\Delta,r,\epsilon <C_2$ where $C_2$ is a small enough constant.

Combining the conclusion of inequality~\eqref{e:lowerboundonthecontributionofiplus} with the bounds in \eqref{e:lowerboundonxi1} and \eqref{e:upperboundonthetermxi2} completes the proof of the first part of the lemma:
\begin{align*}
    \sum_{i \in I_{+}} \phi(v_{i}^{(1)}\cdot x_s) & \ge p \chi(h,\alpha_0,\sigma) + \frac{\alpha_0  p}{48} \left(1-c_3(\Delta+\sqrt{r}+\epsilon)-o(1)\right)-2.
\end{align*}

Now we move on to analyzing the contribution of the group $I_-$.
\begin{align*}
     \sum_{i \in I_-} \phi(v_i^{(1)} \cdot x_s) 
    &   \le \sum_{i \in I_{-}} \left(v_i^{(1)} \cdot x_s \right)\\
    & = \sum_{i \in I_{-}} \left(v_i^{(0)} \cdot x_s - \frac{\alpha_0}{n} \sum_{q: v_i^{(0)} \cdot x_{q} \ge 0} y_{q} g_{0q} \phi'(v_i^{(0)} \cdot x_{q}) (x_{q} \cdot x_s)\right) \\
    & \overset{(i)}{\le} \sum_{i \in I_{-}} v_i^{(0)} \cdot x_s - \frac{\alpha_0}{n} \left(\sum_{i \in I_{-}}g_{0s}+\sum_{i \in I_{-}}\sum_{q \in \cK_k - \{ s \}, v_i^{(0)}\cdot x_q \ge h} g_{0q} (x_q\cdot x_s)\right)\\
    &  \hspace{1.2in}+ \frac{\alpha_0}{n} \sum_{i \in I_{-}}\sum_{q : v_i^{(0)} \cdot x_{q} \ge 0, y_{q} = -1}  g_{0q} \phi'(v_i^{(0)} \cdot x_{q}) (x_{q} \cdot x_q)\\
    & \overset{(ii)}{\le} p \chi(h,\alpha_0,p)- \frac{\alpha_0}{n} \left(\sum_{i \in I_{-}}g_{0s}+\sum_{i \in I_{-}}\sum_{q \in \cK_k - \{ s \}, v_i^{(0)}\cdot x_q \ge h} g_{0q} (x_q\cdot x_s) \right. \\&\hspace{1.2in} \left. +  \sum_{i \in I_{-}}\sum_{q : v_i^{(0)} \cdot x_{q} \ge 0, y_{q} = -1}  g_{0q} \phi'(v_i^{(0)} \cdot x_{q}) (x_{q} \cdot x_q)\right) +1,
\end{align*}
where $(i)$ follows by noting that $x_s \cdot x_q \ge 0$ for all pairs, therefore we can ignore contributions that have $y_q = 1$, and $(ii)$ is by 
Part~\ref{i:meanofiplusandiminus} of
Lemma~\ref{l:concentration}. Now by using an argument that is identical to that in first part of the proof that bounded the contribution of $I_{+}$ above starting from inequality \eqref{e:iplusboundmiddle} we conclude
\begin{align*} 
    \sum_{i \in I_-} \phi(v_i^{(1)} \cdot x_s)  \le p \chi(h,\alpha_0,\sigma) - \frac{\alpha_0  p}{48} \left(1-c_3(\Delta+\sqrt{r}+\epsilon)-o(1)\right)+2.
\end{align*}
This establishes our bound on the contribution of the nodes in $I_-$.
\end{proof}
In the following lemma we bound the contribution of the nodes in $\tI_{-}$ defined in the proof of Lemma~\ref{l:loss.one_step}.
\begin{lemma} Borrowing all notation from the proof of Lemma~\ref{l:loss.one_step} above, on a good run, 
\label{l:bound_on_the_contribution_of_tilde_Iminus}
\begin{align*}
    \sum_{i \in \tI_-} \phi(v_i^{(1)} \cdot x_s) \le \frac{\sqrt{2}}{\sigma\sqrt{\pi}} \left(h+5\alpha_0(2+\Delta+r)\right)^2p.
\end{align*}
\end{lemma}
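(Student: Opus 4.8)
The plan is to bound each summand $\phi(v_i^{(1)} \cdot x_s)$ for $i \in \tI_-$ by something like $\frac{(v_i^{(1)} \cdot x_s)^2}{2h}$ (using that $\phi(z) \le z^2/(2h)$ whenever $z \le h$, and $\phi(z)=0$ for $z<0$), and then argue two things: first, that each such $v_i^{(1)} \cdot x_s$ cannot be much larger than the threshold $h + 4\alpha_0$ that defines $\tI_-$ — more precisely that, since one gradient step moves $v_i \cdot x_s$ by at most $\alpha_0$ (the bound $|v_i^{(1)} \cdot x_s - v_i^{(0)} \cdot x_s| \le \alpha_0$ established in the proof of Lemma~\ref{l:loss.one_step}), we get $v_i^{(1)} \cdot x_s \le h + 5\alpha_0(2+\Delta+r)$ or so; and second, that only the nodes with $v_i^{(0)} \cdot x_s$ not too negative can possibly have $v_i^{(1)} \cdot x_s > 0$ and hence contribute at all. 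Concretely, a node $i \in \tI_-$ contributes a nonzero term only if $v_i^{(1)} \cdot x_s > 0$, which (again using the $\alpha_0$-Lipschitz-in-one-step bound, but sharpened by looking at the sign structure of the update for $u_i \ne y_s$) forces $v_i^{(0)} \cdot x_s \ge -\alpha_0(1/2 + 2(\Delta+r))$. So the contributing nodes all lie in the band
\[
-\alpha_0\Bigl(\tfrac{1}{2} + 2(\Delta+r)\Bigr) \le v_i^{(0)} \cdot x_s \le h + 4\alpha_0, \qquad u_i \ne y_s,
\]
which is exactly the set counted in Part~\ref{i:sizeofsetthatfallsinthemiddle} of Lemma~\ref{l:concentration}.

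The execution would then go as follows. First I would establish the sharpened one-step displacement estimate for $i \in \tI_-$: write out $v_i^{(1)} \cdot x_s = v_i^{(0)} \cdot x_s + \frac{\alpha_0 u_i}{n}\sum_{q : v_i^{(0)}\cdot x_q \ge 0} y_q g_{0q} \phi'(v_i^{(0)}\cdot x_q)(x_q \cdot x_s)$, and bound the correction term. Since $u_i = -y_s$, the $q$'s in the same class as $s$ push $v_i^{(1)}\cdot x_s$ down and only the opposite-class $q$'s (those with $y_q = y_s \cdot(-1)\cdot(-1)$... careful with signs) can push it up; using $g_{0q}, \phi', x_q\cdot x_s \in [0,1]$ and the inner-product bounds $x_q \cdot x_s \le (1+\Delta)/2 + 2r$ for opposite classes (Lemma~\ref{l:aux.innerproductboundonx_sandx_q}), the net upward movement is at most $\alpha_0(1/2 + 2(\Delta+r))$ up to lower-order terms, which simultaneously gives the upper bound $v_i^{(1)}\cdot x_s \le h + 5\alpha_0(2+\Delta+r)$ and the lower threshold on $v_i^{(0)}\cdot x_s$ needed for a nonzero contribution. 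Second, I would apply $\phi(z) \le z^2/(2h)$ pointwise: each contributing node satisfies $\phi(v_i^{(1)}\cdot x_s) \le \frac{1}{2h}(h + 5\alpha_0(2+\Delta+r))^2$. Third, I would multiply by the count of contributing nodes, which by Part~\ref{i:sizeofsetthatfallsinthemiddle} of Lemma~\ref{l:concentration} is at most $\frac{\sqrt{2}}{\sigma\sqrt{\pi}}(h + 5\alpha_0(2+\Delta+r))\,p$, yielding
\[
\sum_{i \in \tI_-} \phi(v_i^{(1)}\cdot x_s) \le \frac{1}{2h}\bigl(h + 5\alpha_0(2+\Delta+r)\bigr)^2 \cdot \frac{\sqrt{2}}{\sigma\sqrt{\pi}}\bigl(h+5\alpha_0(2+\Delta+r)\bigr)p.
\]
Finally I would absorb the extra factor $\frac{1}{2h} \cdot (h + 5\alpha_0(2+\Delta+r))$: since $h = 1/p$ and $\alpha_0 = p^{-1/2-\beta}$ we have $h + 5\alpha_0(2+\Delta+r) = O(h) = O(1/p)$ (as $\alpha_0 = o(h)$ would be false — actually $\alpha_0 \gg h$, so this factor is $O(\alpha_0/h) = O(\alpha_0 p)$, which is not $O(1)$), so I must be more careful and instead bound $\phi(z) \le z$ rather than $z^2/(2h)$ when $z$ exceeds $h$ — but contributing nodes have $v_i^{(1)}\cdot x_s$ possibly exceeding $h$, so the right move is to split: if $v_i^{(1)}\cdot x_s \le h$ use $\phi \le z^2/(2h) \le z \cdot \frac{z}{2h}$; uniformly $\phi(z) \le z \le h + 5\alpha_0(2+\Delta+r)$ for all contributing nodes, and then $\sum \le (h+5\alpha_0(2+\Delta+r)) \cdot \frac{\sqrt2}{\sigma\sqrt\pi}(h+5\alpha_0(2+\Delta+r))p$, which is exactly the claimed bound.

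The main obstacle is the sign-tracking in the sharpened one-step estimate: one must verify, for $u_i = -y_s$, precisely which training points $q$ can increase $v_i \cdot x_s$ and by how much, and show the net increase is bounded by $\alpha_0(1/2 + 2(\Delta+r))$ up to $o(\alpha_0)$ corrections — this requires using $g_{0q} \le 1/2 + o(1)$, the opposite-class inner product bound, and the fact that same-class points subtract. Everything after that is a routine substitution of the parameter scalings $h = 1/p$, $\alpha_0 = p^{-1/2-\beta}$, $\sigma = p^{-1/2-\beta/2}$ together with the counting bound from Lemma~\ref{l:concentration}, Part~\ref{i:sizeofsetthatfallsinthemiddle}.
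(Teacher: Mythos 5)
Your proposal is correct and follows essentially the same route as the paper: restrict to nodes in the band $-\alpha_0(1/2+2(\Delta+r)) \le v_i^{(0)}\cdot x_s \le h+4\alpha_0$ (since same-class examples only push $v_i\cdot x_s$ down when $u_i = -y_s$, and opposite-class examples push it up by at most $\alpha_0(1/2+2(\Delta+r))$ via the inner-product bound), bound each surviving term by $\phi(z)\le z \le h+5\alpha_0(2+\Delta+r)$, and multiply by the count from Part~\ref{i:sizeofsetthatfallsinthemiddle} of Lemma~\ref{l:concentration}. Your self-correction away from the $\phi(z)\le z^2/(2h)$ bound is the right call (the paper never attempts it), and the $g_{0q}\le 1/2+o(1)$ refinement you mention is unnecessary --- the crude bound $g_{0q}\le 1$ suffices, as in the paper.
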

\begin{proof}
  Recalling that $v_i^{(1)}$ is obtained by taking a gradient step
\begin{align*}
    &\sum_{i \in \tI_-} \phi(v_i^{(1)} \cdot x_s)\\   
    & = \sum_{i \in \tI_{-}} \phi\left(v_i^{(0)} \cdot x_s - \frac{\alpha_0}{n} \sum_{q: v_i^{(0)} \cdot x_{q} \ge 0} y_{q} g_{0q} \phi'(v_i^{(0)} \cdot x_{q}) (x_{q} \cdot x_s)\right) \\
    & \overset{(i)}{\le} \sum_{i \in \tI_{-}} \phi\left(v_i^{(0)} \cdot x_s + \frac{\alpha_0}{n} \sum_{q: v_i^{(0)} \cdot x_{q} \ge 0, y_{q}= -1}  g_{0q}\phi'(v_i^{(0)} \cdot x_{q}) (x_{q} \cdot x_s)\right) \\
    & \overset{(ii)}{\le} \sum_{i \in \tI_{-}} \phi\left(v_i^{(0)} \cdot x_s + \frac{\alpha_0}{n} \sum_{q: v_i^{(0)} \cdot x_{q} \ge 0, y_{q}= -1}   (x_{q} \cdot x_s)\right) \\
    & \overset{(iii)}{\le} \sum_{i \in \tI_{-}} \phi\left(v_i^{(0)} \cdot x_s + \frac{\alpha_0}{n}\left( n\left( \frac{1}{2}+\frac{\Delta}{2}+2r\right)\right)\right) \\
    & \le \sum_{i \in \tI_{-}} \phi\left(v_i^{(0)} \cdot x_s + \alpha_0\left( \frac{1}{2}+2\left(\Delta+r\right)\right)\right) \\
    & \overset{(iv)}{\le} \left\lvert \left\{i \in [p+1,\ldots,2p]: -\alpha_{0}\left( \frac{1}{2}+2(\Delta+r)\right) \le v_{i}^{(0)} \cdot x_{s} \le h+4\alpha_0\right\} \right\rvert \\
    & \hspace{1in} \times  \left( h+4\alpha_0+\alpha_{0}\left(\frac{1}{2}+2(\Delta+r)\right)\right) \\
    & \overset{(v)}{\le} \left( \frac{\sqrt{2}}{\sigma\sqrt{\pi}} \left(h+5 \alpha_0(2+\Delta+r)\right)
              \right) p \times \left( h+4\alpha_0+\alpha_{0}\left(\frac{1}{2}+2(\Delta+r)\right)\right) \\ &\le \frac{\sqrt{2}}{\sigma\sqrt{\pi}} \left(h+5\alpha_0(2+\Delta+r)\right)^2 p,
\end{align*}
where $(i)$ follows by discarding the contribution of the examples with the same label $y_{q} = 1$, $(ii)$ is because $g_{0\ell}$ and $\phi'$ are non-negative and bounded by $1$, $(iii)$ follows by the 
bound 
$x_{q} \cdot x_s \le (1+\Delta+4r)/2 $ established in Lemma~\ref{l:aux.innerproductboundonx_sandx_q} below. Inequality $(iv)$ 
follows from the facts that
$\phi(z) = 0$ for all $z < 0$
and $v_i^{(0)} \cdot x_s \leq h + 4 \alpha_0$
for all $i \in \tI_+$,
and finally
$(v)$ follows from 
Part~\ref{i:sizeofsetthatfallsinthemiddle} of
Lemma~\ref{l:concentration}. This establishes the claim.
\end{proof}

\subsection{Proof of Theorem~\ref{t:main.clusters}}

Having analyzed the first step we are now ready to prove Theorem~\ref{t:main.clusters}.

Part~$(a)$ of the theorem follows by invoking Lemma~\ref{l:loss.one_step} that shows that after the first step $L_1 \le \exp\left(-C_4 p^{(1/2-\beta)}\right)$ with probability at least $1-\delta$.

Part~$(b)$ of the theorem shall follow by invoking Theorem~\ref{t:main.general}. Since $p \ge \log^{C_3}(n d/\delta)$ for a large enough constant $C_3$ we know that $L_1 \le 1/n^{1+C_1}$ as required by Theorem~\ref{t:main.general}. 
Also note that Part~\ref{i:init:normboundV} of Lemma~\ref{l:concentration} ensures that on a good run, 
$\frac{3}{5}\sqrt{\frac{d}{p^{\beta}}}\le \lv V^{(1)}\rv \le 3\sqrt{\frac{d}{p^{\beta}}}$.

Set the value of $Q_1 = \frac{e^{2}}{120p}$. (This sets the step-size $\alpha_t = Q_1 \log^2(1/L_t)= \frac{e^2\log^2(1/L_t)}{120p}$.) To invoke Theorem~\ref{t:main.general} we need to ensure that $Q_1 \le \widetilde{Q}_1$ (see its definition in equation~\eqref{eq:Q_1_def}), but this is easy to check since
\begin{align*}
    \widetilde{Q}_1 & = \min\left\{\frac{1}{30pL_1\log^2\left(1/L_1\right)},\frac{108 \lv V^{(1)}\rv^2}{125L_1 \log^4\left(1/L_1\right) },\frac{e^2}{120p}\right\}\\
    & \ge \min\left\{\frac{\exp(C_4p^{(1/2-\beta)})}{30C_4^2p^{2-2\beta}},\frac{972 d\exp(C_4p^{(1/2-\beta)})}{3125 C_4^4 p^{2-3\beta}},\frac{e^2}{120p}\right\}\\&\hspace{0.5in}\mbox{(since $L_1\le \exp\left(-C_4p^{(1/2-\beta)}\right)$ and $\lv V^{(1)}\rv\ge \frac{3}{5}\sqrt{\frac{d}{p^{\beta}}}$)}\\
    & = \frac{e^2}{120p},
\end{align*}
where the final equality holds since $p \geq \log^{C_3} d$. Next we set $Q_2 = \widetilde{Q}_2(Q_1)$ (recall its definition from equation~\eqref{eq:Q_2_def} above):
\begin{align*} 
    Q_2 = \widetilde{Q}_2(Q_1)=  \frac{125  Q_1 L_1  \log^4(1/L_1)}{216 \lv V^{(1)}\rv^2}.
\end{align*}
With these valid choices of $Q_1$ and $Q_2$ we now invoke Theorem~\ref{t:main.general} to get that, for all $t > 1$ 
\begin{align*}
    L_{t}  & \le \frac{L_1}{Q_2\cdot (t-1)+1} \\
         & \le \frac{L_1}
                  {\frac{125  Q_1 L_1 \log^4(1/L_1) p^{\beta}(t-1)}{1944 d} +1} \hspace{0.5in}\mbox{(since $\lv V^{(1)}\rv\le 3\sqrt{\frac{d}{p^{\beta}}}$)}\\
        & = \frac{L_1}
                  {\frac{c_1 L_1 \log^4(1/L_1) (t-1)}{d p^{1-\beta}} +1} \\
            & \leq \frac{L_1}
                  {\max \left\{\frac{c_1 L_1\log^4(1/L_1)  (t-1)}{dp^{1-\beta}}, 1 \right\}} \\
        & = \min \left\{ \frac{d p^{1-\beta}}{c_1 \log^4(1/L_1) (t-1)}, L_1 \right\}\\
        & \le \min \left\{ \frac{d p^{1-\beta}}{c_1 C_4^4p^{2-4\beta} (t-1)}, L_1 \right\}\\
        & = \min \left\{ \frac{d }{c_2 p^{1-3\beta} (t-1)}, L_1 \right\}.
\end{align*}
Combining this with Part~$(a)$,
together with the assumption
that $p \geq \log^{C_3} d$, proves Part~$(b)$.

\section{Simulations}
\label{s:simulations}
In this section, we experimentally verify the convergence results of Theorem~\ref{t:main.general}. We performed 100 rounds of batch
gradient descent to minimize the
softmax loss on random training data.
The training data was for a two-class
classification problem.  There were
$128$ random examples drawn from a distribution
in which each of two equally likely classes was distributed as a
mixture of Gaussians whose centers
had an XOR structure:  the positive
examples came from an
equal mixture of
\[
\cN\left(\left(\sqrt{\frac{1}{2}},-\sqrt{\frac{1}{2}}\right), \frac{I}{100}\right)
\mbox{ and } \cN\left(\left(-\sqrt{\frac{1}{2}},\sqrt{\frac{1}{2}}\right), \frac{I}{100} \right),
\]
and the negative examples came from an equal mixture of 
\[
\cN\left(\left(\sqrt{\frac{1}{2}},\sqrt{\frac{1}{2}}\right), \frac{I}{100}\right)
\mbox{ and } \cN\left(\left(-\sqrt{\frac{1}{2}},-\sqrt{\frac{1}{2}}\right), \frac{I}{100}\right).
\]
The number $p$ of hidden units per class
was $100$.  The activation functions
were Huberized ReLUs with $h = 1/p$.
The weights were initialized using
$\cN(0,(4p)^{-5/4})$ and the initial step
size was $(4p)^{-3/4}$.  
(These correspond to the choice
$\beta = 1/4$ in Theorem~\ref{t:main.clusters}.)
For the other
updates, the step-size on iteration
$t$ was $\log^2(1/L_t)/p$.  
The process
of randomly generating data, randomly
initializing a network, and running
gradient descent was repeated 5 times,
and the curves of training error as
a function of update number are plotted
in Figure~\ref{f:xor}.  The decrease
in the loss with the number of iterations
is roughly in line with our upper bounds.
\begin{figure}
  \centering
  \includegraphics[width=4in]{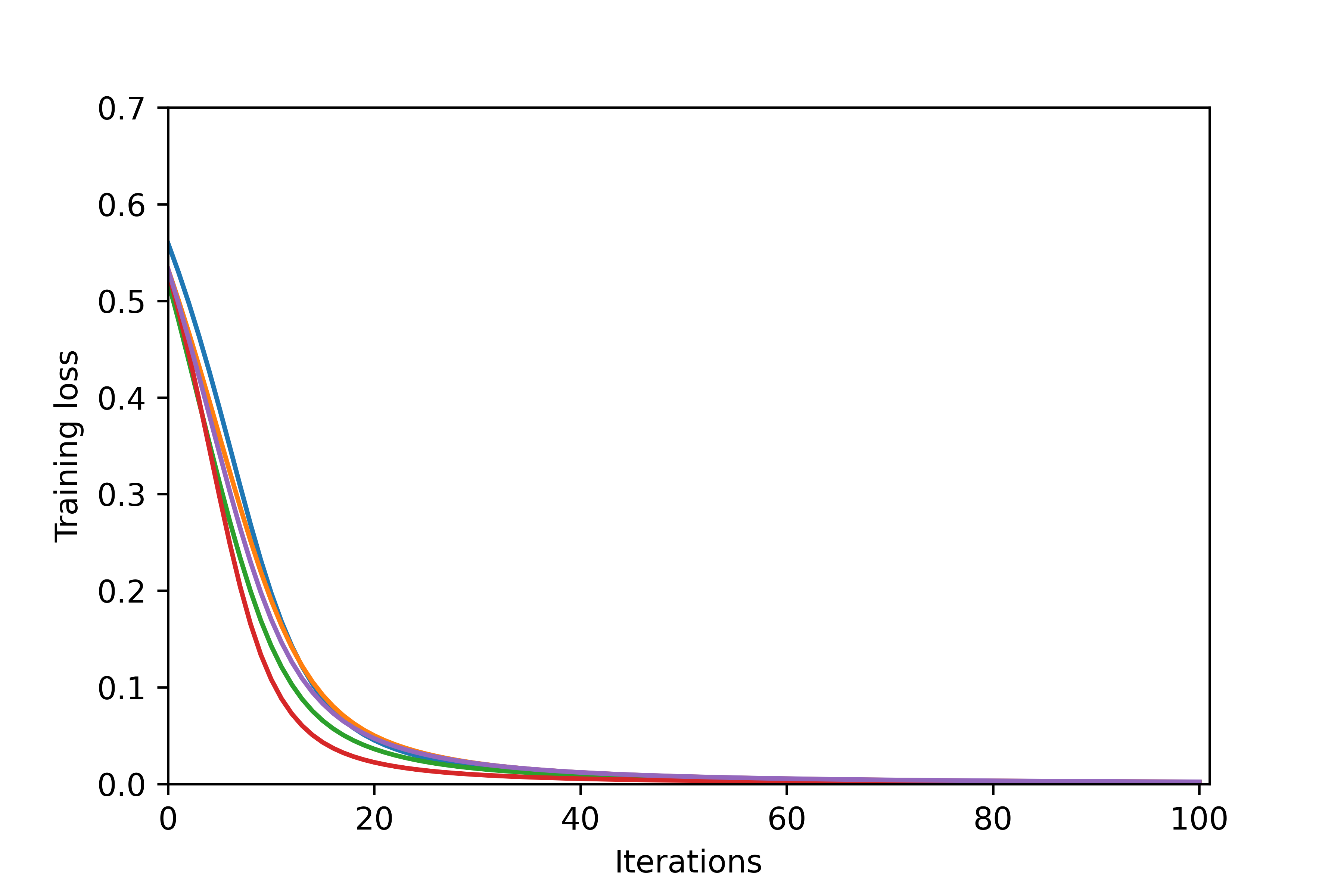}
  \caption{\label{f:xor} Training
  loss as a function of the number 
  of gradient descent steps
  for data distributed as mixtures of Gaussians, where
  the means have an XOR structure.  Details
  are in Section~\ref{s:simulations}.}
  \end{figure}
 
 We performed a similar collection
 of simulations, except with a
 different, more challenging, data
 distribution, which we call the
 ``shoulders'' distribution.  The means of the
 mixture components of the positive
 examples were $(1,0)$ and
 $(0,1)$ while the means of mixture components of the negative examples remain at the same place. The positive centers start to crowd the negative
 center $(1/\sqrt{2}, 1/\sqrt{2})$ making
 it more difficult to pick out examples from
 the negative center.  Plots for
 this data distribution, which also
 scale roughly like our upper bounds,
 are shown in Figure~\ref{f:shoulders}.
  \begin{figure}
  \centering
  \includegraphics[width=4in]{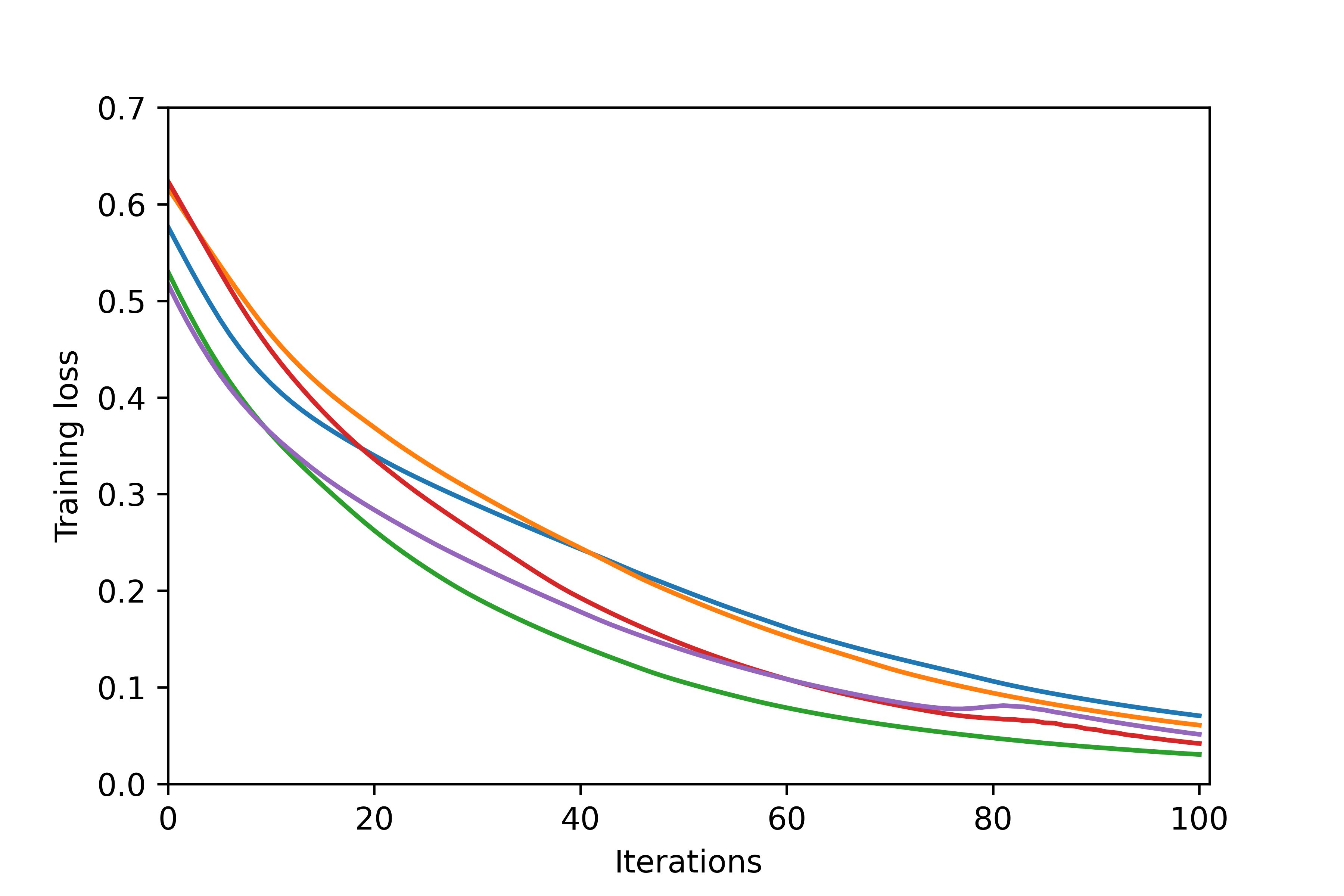}
  \caption{\label{f:shoulders} 
  Training
  loss as a function of the number 
  of gradient descent steps
  for the ``shoulders'' distribution.  Details
  are in Section~\ref{s:simulations}.}
  \end{figure}

Next, we performed ten training runs as described above
for the shoulders data, except that, for five of them,
the Huberized ReLU was replaced by a standard ReLU. The results are in Figure~\ref{f:relu_vs_hrelu}.  While there is evidence
that training with the non-smooth objective arising from
the standard ReLU leads to a limited extent of ``overshooting'', the
shapes of the loss curves agree on a coarser scale.
\begin{figure}
  \centering
  \includegraphics[width=4in]{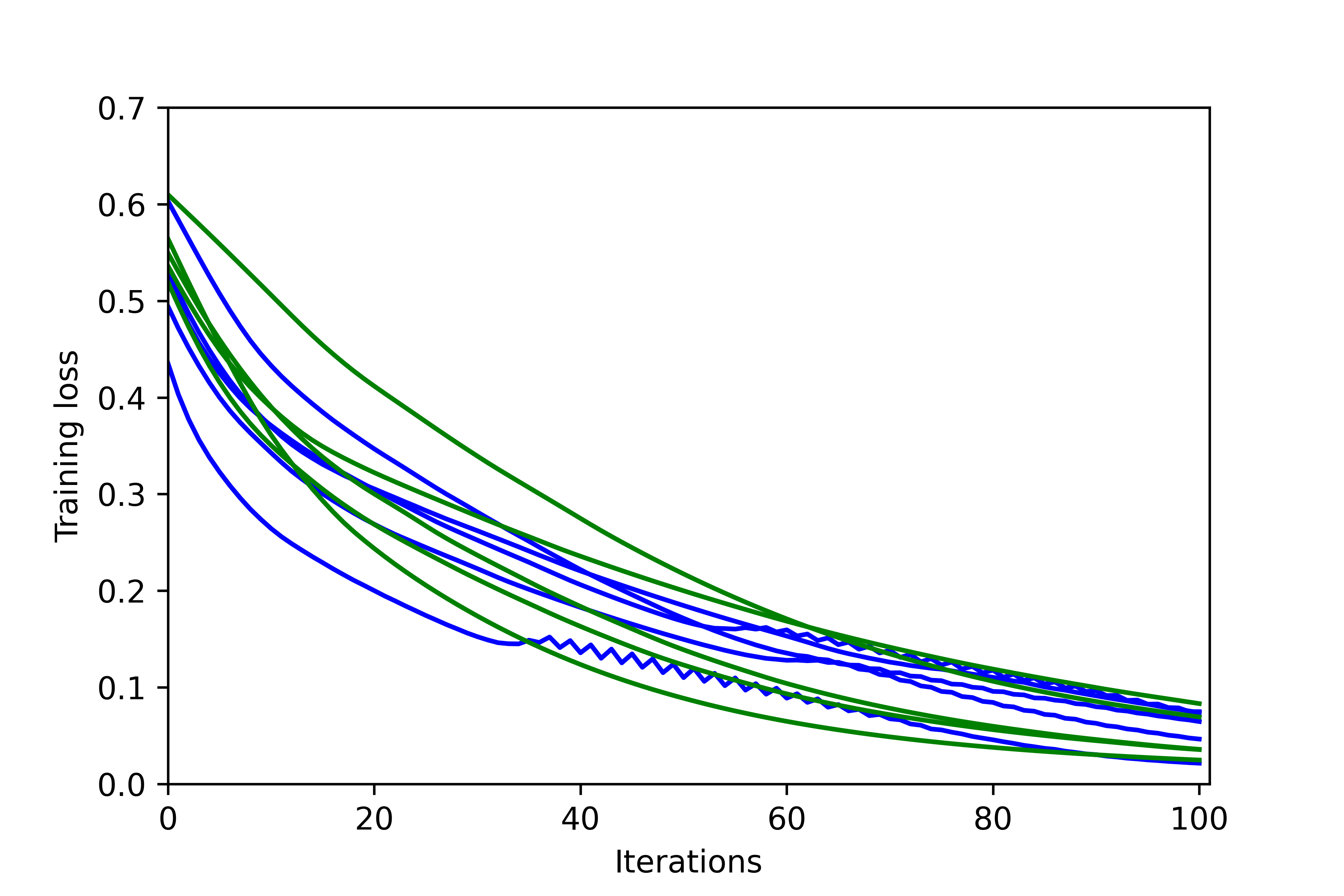}
  \caption{\label{f:relu_vs_hrelu} Training
  loss as a function of the number 
  of gradient descent steps
  with the ReLU (blue) and the Huberized ReLU 
  (green) on the ``shoulders" data.}
  \end{figure}


\section{Additional Related Work}
\label{s:furtherrelatedwork}

\citet{chizat2019lazy} analyzed gradient flow 
for a general class of smoothly parameterized
models, showing that scaling up the initialization,
while scaling down the loss,
ensures that a first-order
Taylor approximation around the initial solution
remains accurate until convergence.

\citet{chizat2020implicit}, building on \citep{chizat2018global,mei2019mean},
show that infinitely wide two-layer squared ReLU networks trained with gradient flow on the logistic loss leads to a max-margin classifier in a particular non-Hilbertian space of functions.
(See also the videos in a talk about this work \citep{chizat20msri}.)
\citet{brutzkus2018sgd} show that finite-width two-layer leaky ReLU networks can be trained up to zero-loss using stochastic gradient descent with the hinge loss, when the underlying data is linearly separable.

The papers \citep{brutzkus2019larger,wei2019regularization,ji2019polylogarithmic} identify when it is possible to efficiently learn XOR-type data using neural networks with stochastic gradient descent on the logistic loss. 

\citet{CCGZ20} analyzed regularized training with gradient flow on infinitely wide networks.  
When training is regularized, the weights also may travel far from their initial values.

Our study is motivated in part by the line of work that has emerged which emphasizes the need to understand the behavior of interpolating (zero training loss/error) classifiers and regressors~\citep[see, e.g.,][among others]{zhang2016understanding,belkin2019reconciling}. A number of recent papers have analyzed the properties of interpolating methods in linear regression \citep{hastie2019surprises,bartlett2020benign,muthukumar2020harmless,tsigler2020benign,bartlett2020failures}, linear classification \citep{montanari2019generalization,chatterji2020finite,liang2020precise,muthukumar2020classification,hsu2020proliferation}, kernel regression \citep{liang2020just,mei2019generalization,liang2020MultipleDescent} and simplicial nearest neighbor methods \citep{belkin2018overfitting}.

Also related are the papers that 
study
the implicit bias of gradient methods \citep{neyshabur2017implicit,soudry2018implicit,ji2018risk,gunasekar2018characterizing,gunasekar2018implicit,li2018algorithmic,arora2019implicit,ji2019gradient}.

A number of recent papers also theoretically study the optimization of neural networks 
including
\citep{andoni2014learning,li2017convergence,ZS0BD17,ZLWJ17a,ge2018learning,ZPS18,du2018gradient,safran2018spurious,zhang2019learning,arora2019fine,DBLP:conf/nips/Daniely20,DBLP:conf/nips/DanielyM20,DBLP:conf/colt/BreslerN20}.

In particular, the proof of \citet{DBLP:conf/nips/DanielyM20} demonstrated
that the first iteration of gradient descent learned useful features for
the parity-learning problem studied there.

\section{Discussion} \label{s:discussion}

We demonstrated that gradient descent drives the logistic loss of finite-width two-layer Huberized ReLU networks to zero if the initial loss is small enough. This result makes no assumptions about the width or the number of samples. We also showed that when the data is structured, and the data satisfies  certain cluster and separation conditions, random initialization followed by gradient descent drives the loss to zero.

After a preliminary version of this paper was posted on arXiv
\citep{chatterji2020does}, related results were obtained~\citep{CLB21colt} for
deep networks with smoothed approximations to the ReLU, under conditions that include Swish. This analysis included adapting the NTK techniques to these activation functions.  
This provides a broader
set of circumstances under which Theorem~\ref{t:main.general} of
this paper can be applied.

Another interesting way forward would be to examine whether the loss can be shown to decrease super-polynomially with the width when there are more than two clusters per label or if the number of samples per cluster is imbalanced.

It would be interesting to see if the corresponding results hold for ReLU activations, which, despite the success of Swish, remain popular.

Now that we have established conditions under which
gradient descent can drive the training error to
zero, future work could study the implicit bias of this limit 
and potentially use this to study the generalization behavior of the final interpolating solution. 
One step towards this could be establish a more precise directional alignment result to show that gradient descent maximizes the margin of Huberized ReLU networks for logistic loss \citep[as][did for ReLU networks trained using gradient flow]{lyu2019gradient,ji2020directional}.

Theorems~\ref{t:main.general} and \ref{t:main.clusters} use
a concrete choice of a learning rate schedule (at least,
up to a constant factor).  We believe that our techniques can
be extended to apply to a wider variety of learning-rate
schedules, with corresponding changes to the convergence rate. 

In our paper, we assumed that the features are all unit-length vectors to simplify the proofs. We believe that the results of Theorem~\ref{t:main.general} can be easily extended to the case where the features have arbitrary bounded lengths. We also expect that the results of Theorem~\ref{t:main.clusters} can be extended to the case where the examples in the four clusters are drawn from sub-Gaussian distributions with suitably small variances.


\subsection*{Acknowledgements}

We thank Zeshun Zong for alerting us to a mistake in
an earlier version of this paper.

We gratefully acknowledge the support of the NSF through grants DMS-2031883 and DMS-2023505 and the Simons Foundation through award 814639.

\appendix

\section{Reduction to the Case of No Bias}
\label{a:reduction.general}

Denoting the components of $x_s$ by $x_{s1},\ldots,x_{sd}$,
define $\tx_s = (x_{s1}/\sqrt{2},\ldots,x_{sd}/\sqrt{2},1/\sqrt{2})$.
We consider the process of training a model using
$(\tx_1,y_1),\ldots,(\tx_n,y_n)$.  

Consider 
\[
(\theta^{(1)},\tV^{(1)}),(\theta^{(2)},\tV^{(2)}),\ldots
\]
defined as follows.  
First, $\theta^{(1)}, \theta^{(2)},\ldots$ are generated 
as described in Section~\ref{s:defs}.  
Each row $\tv_i^{(1)}$ of $\tV^{(1)} \in \R^{2p \times (d+1)}$
is $\sqrt{2} (v_{i1}^{(1)},\ldots,v_{id}^{(1)},b_i^{(1)})$.

Define $\tL$ to be, informally, $L$, but without the bias terms, and
applied to 
\[
(\tx_1,y_1),\ldots,(\tx_n,y_n).
\]
That is
\[
\tL(\widetilde{V}) := \frac{1}{n}\sum_{s=1}^n \ln\left(1+\exp\left(-y_s 
\sum_{i=1}^{2p} u_i \phi(\tv_{i} \cdot \tx_s)
\right)\right).
\]
Then, for $\talpha_1 = 2\alpha_1, \talpha_2 = 2 \alpha_2, \ldots > 0$, we define
$\tV^{(2)}, \tV^{(3)}, \ldots$ to be the iterates of gradient descent applied to
$\tL$, except replacing
$\alpha_1, \alpha_2,\ldots$ by $\talpha_1, \talpha_2,\ldots$.

We claim that, for all $t$, 
\begin{itemize}
\item for all $i$, $\tv_i^{(t)} = \sqrt{2} (v_{i1}^{(t)},\ldots,v_{id}^{(t)},b_i^{(t)})$
\item for all $i$ and all $s$, $\tv_i^{(t)} \cdot \tx_s = v_i^{(t)} \cdot x_s + b_i^{(t)}$,
so that $\tL(\tV^{(t)}) = L(\theta^{(t)})$.
\end{itemize}

The first condition is easily seen to imply the second.  Further, the first condition
holds at $t=1$ by construction.  What remains is to prove that the
inductive hypothesis for iteration $t$ implies the first condition at iteration
$t+1$.  If $f_V$ is the function computed by the network with weights $V$ and no
biases, we have
\begin{align*}
\tv_i^{(t+1)} 
  & = \tv_i^{(t)} 
     + \talpha_t 
        \sum_{s=1}^n
        \frac{ 1 }{1 + \exp(y_s f_{\tV}^{(t)} (\tx_s))}
        \phi'(\tv_i^{(t)} \cdot \tx_s) y_s u_i \tx_s 
\\
          & = \sqrt{2} (v_{i1}^{(t)},\ldots,v_{id}^{(t)},b_i^{(t)})
     + \talpha_t 
        \sum_{s=1}^n
        \frac{ 1 }{1 + \exp(y_s f_{\theta}^{(t)} (x_s))}
        \phi'(v_i^{(t)} \cdot x_s + b_i^{(t)}) y_s u_i \tx_s \\
 & \hspace{2.5in} \mbox{(by the inductive hypothesis)} \\
 & = \sqrt{2} (v_{i1}^{(t+1)},\ldots,v_{id}^{(t+1)},b_i^{(t+1)}),
\end{align*}
because $\tx_s = (x_{s1}/\sqrt{2},\ldots,x_{sd}/\sqrt{2},1/\sqrt{2})$ and $\talpha_t = 2 \alpha_t$, completing the induction.

Finally, note that
\[
\tx_s \cdot \tx_q = \frac{x_s \cdot x_q + 1}{2} \geq 0,
 \]
 since $x_s$ and $x_q$ are unit length.

\section{Omitted Proofs from Section~\ref{ss:technicaltool}} \label{a:omiittedproof_main}
In this section we provide proofs of Lemmas~\ref{l:pl}-\ref{l:L.one_step_improvement}.
\subsection{Proof of Lemma~\ref{l:pl}}
\label{a:pl}
\pllemma*
\begin{proof}
For any $W \in [V^{(t)},V^{t+1}]$ we have that
\begin{align*}
    \nabla_W L(W) - \nabla L_t & = \int_{0}^{1} \left(\nabla^2_{\bar{W}} L|_{\bar{W} = sV^{(t)}+(1-s)W}\right) (W-V^{(t)}) \; \mathrm{d} s,
\end{align*}
where, as stated above, the weak Hessian is defined using the weak derivative $\gamma$ of $\phi'$. 
Thus,
\begin{align*}
    \lv \nabla_W L(W) - \nabla L_t\rv &\le \left[\sup_{s\in [0,1]}\left\lv \nabla^2_{\bar{W}} L|_{\bar{W} = sV^{(t)}+(1-s)W} \right\rv_{op}\right]\lv W-V^{(t)}\rv \le M\lv W-V^{(t)}\rv.
\end{align*}
This shows that along the line segment joining $V^{(t)}$ to $V^{(t+1)}$ the function is $M$-smooth.  Therefore, by using a standard argument~\citep[see, e.g,][Lemma~3.4]{bubeck2015convex} we get that
\begin{align*}
    L_{t+1} &\le L_t + \nabla L_t \cdot (V^{(t+1)}-V^{(t)}) + \frac{M}{2} \lv V^{(t+1)} - V^{(t)} \rv^2\\
    & = L_t - \alpha_t \lv \nabla L_t\rv^2 + \frac{\alpha_t^2 M}{2} \lv \nabla L_t\rv^2 \\
    & = L_t - \alpha_t \left(1-\frac{\alpha_t M}{2}\right)\lv \nabla L_t\rv^2\\
    & \le L_t - \frac{5\alpha_t \lv \nabla L_t\rv^2}{6}.
\end{align*}
This completes the proof.
\end{proof}
\subsection{Proof of Lemma~\ref{l:L_smooth}}
\label{a:L_smooth}
\hessianlemma*
\begin{proof}
We know that the gradient of the loss with respect to $v_i$ is
\begin{align*}
    \nabla_{v_i} L = \frac{-u_i}{n} \sum_{s=1}^n  \frac{\phi'(v_i \cdot x_s) y_s x_s}{1+\exp\left( y_s f_V(x_s)\right)}.
\end{align*}
The weak Hessian $\nabla^2 L$ is a block matrix with $4 p^2$ blocks, where
the $(i,j)^{th}$ block is $\nabla_{v_i} \nabla_{v_j} L$.

First, if $i \neq j$
\begin{align} \label{e:hessian_off_diagonal}
\nabla_{v_i} \nabla_{v_j} L = \frac{u_i u_j}{n} \sum_{s=1}^n  \frac{\phi'(v_i \cdot x_s) \phi'(v_j \cdot x_s) \exp(y_s f_{V}(x_s))  }{\left(1+\exp\left( y_s f_V(x_s)\right) \right)^2}x_s x_s^{\top}.
\end{align}

If $i = j$, 
\begin{align} \label{e:hessian_diagonal}
\nabla^2_{v_i} L = \frac{1}{n} \sum_{s=1}^n  \left[\frac{-u_i\gamma(v_i \cdot x_s) y_s }{1+\exp\left( y_s f_V(x_s)\right)}+ \frac{ \phi'(v_i \cdot x_s)^2\exp(y_s f_{V}(x_s)) }{\left(1+\exp\left( y_s f_V(x_s)\right)\right)^2}  \right]x_s x_s^{\top} .
\end{align}
By definition of the operator norm, 
\begin{equation}
\label{e:op.norm.def}
\lv \nabla^2_V L \rv_{op} = \sup_{a:  \lv a \rv = 1} \left\lv \left(\nabla^2_V L\right) a \right\rv.
\end{equation}
Let $a$ be a unit length member of $\R^{2 p (d+1)}$
and let us decompose $a$ into segments $a_1,\ldots,a_{2p}$ of $(d+1)$ components
each, so that $a$ is the concatenation of $a_1,\ldots,a_{2 p} \in \R^{d+1}$.
Note that $\sum_{i=1}^{2p} \lv a_i \rv^2 = 1$.

The squared norm of $(\nabla^2L)a$ is
\begin{align*}
 \left\lv (\nabla^2 L) a\right\rv^2  &= \sum_{i \in [2p]} \left\lv \sum_{j \in [2p]} \left(\nabla_{v_i}\nabla_{v_j} L\right) a_j\right\rv^2 \\
 &= \sum_{i,j,k \in [2p]} a_k^{\top}\left[ \left(\nabla_{v_i}\nabla_{v_k}L \right)\left(\nabla_{v_i}\nabla_{v_j}L \right)\right]a_j \\
 &\le \sum_{i,j,k \in [2p]} \lv a_j \rv \lv a_k \rv \lv \nabla_{v_i}\nabla_{v_k}L \rv_{op} \lv \nabla_{v_i}\nabla_{v_j}L \rv_{op}. \numberthis \label{e:hessian_upper_bound_midway}
\end{align*}  
By definition of the Huberized ReLU (in equation~\eqref{e:helu}) and its weak Hessian (in equation~\eqref{e:hessian_helu}) we know that, for any $z \in \R$, $\lvert \phi'(z) \rvert < 1$ and $\lvert \gamma(z) \rvert < 1/h$. Further, by Lemma~\ref{l:relationsbetweengradientandloss}, we know that for all $s$
\begin{align*}
    \frac{\exp(y_s f_V(x_s))}{\left(1+\exp\left( y_s f_V(x_s)\right)\right)^2} \le \frac{1}{1+\exp\left( y_s f_V(x_s)\right)}\le L(V; x_s,y_s) = \ln(1+\exp(-y_sf_V(x_s))).
\end{align*}
Also recall that for all $s\in [n]$,  $\lv x_s\rv =1$ and for all $i \in [2p]$, $\lvert u_i\rvert = 1$. 
Applying these to equation~\eqref{e:hessian_off_diagonal}, when $i \neq j$ we get that
\begin{align}
  \lv \left(\nabla_{v_i} \nabla_{v_j} L \right) \rv_{op}
   \le  
       L, \label{e:off_diagonal_operator_bound}
\end{align}
and, using equation~\eqref{e:hessian_diagonal}, when $i = j$ yields the bound 
\begin{align}
  \lv \nabla^2_{v_i} L \rv_{op}  \le    L\left(1+\frac{1}{h}\right) 
   \le 2  L/h. \label{e:diagonal_operator_bound}
\end{align}
Returning to inequality~\eqref{e:hessian_upper_bound_midway},
\begin{align*}
\left\lv (\nabla^2 L) a\right\rv^2  & \le \sum_{i,j,k \in [2p]} \lv a_j \rv \lv a_k \rv \lv \nabla_{v_i}\nabla_{v_k}L \rv_{op} \lv \nabla_{v_i}\nabla_{v_j}L \rv_{op} \\
& = \sum_{i,j,k \in [2p]: i=j=k} \lv a_j \rv^2  \lv \nabla^2_{v_i}L \rv_{op}^2 \\ & \qquad +  \sum_{i,j,k \in [2p]: (i\neq j) \wedge (i\neq k)} \lv a_j \rv \lv a_k \rv \lv \nabla_{v_i}\nabla_{v_k}L \rv_{op} \lv \nabla_{v_i}\nabla_{v_j}L \rv_{op}\\ & \qquad +  \sum_{i,j,k \in [2p]: i=j\neq k} \lv a_j \rv \lv a_k \rv \lv \nabla_{v_i}\nabla_{v_k}L \rv_{op} \lv \nabla^2_{v_i}L \rv_{op}\\
& \qquad +  \sum_{i,j,k \in [2p]: i=k\neq j} \lv a_j \rv \lv a_k \rv \lv \nabla_{v_i}\nabla_{v_j}L \rv_{op} \lv \nabla^2_{v_i}L \rv_{op}.  \numberthis \label{e:splitintodifferentgroups}
\end{align*}
Recall that $h=1/p$, therefore, by inequality~\eqref{e:diagonal_operator_bound}, the first term in the inequality above can be bounded by
\begin{align*}
     \sum_{i,j,k \in [2p]: i=j=k} \lv a_j \rv^2  \lv \nabla^2_{v_i}L \rv_{op}^2 \le   (2L/h)^2 \sum_{j} \lv a_j \rv^2 = 4L^2p^2.
\end{align*}
Using inequality~\eqref{e:off_diagonal_operator_bound}, the second term in the RHS of inequality~\eqref{e:splitintodifferentgroups} is
\begin{align*}
    &\sum_{i,j,k \in [2p]: (i\neq j) \wedge (i\neq k)} \lv a_j \rv \lv a_k \rv \lv \nabla_{v_i}\nabla_{v_k}L \rv_{op} \lv \nabla_{v_i}\nabla_{v_j}L \rv_{op} \\ &\qquad\qquad\qquad  \le  L^2 \sum_{i,j,k \in [2p]: (i\neq j) \wedge (i\neq k)} \lv a_j \rv \lv a_k \rv\\
     & \qquad\qquad\qquad\le  L^2 \sum_{i,j,k \in [2p]: (i\neq j) \wedge (i\neq k)} \frac{\lv a_j \rv^2 +\lv a_k \rv^2}{2}\\
     & \qquad\qquad \qquad=  \frac{(2p-1)L^2}{2} \sum_{i\in [2p]}\left(\sum_{j \in [2p]: i\neq j } \lv a_j \rv^2 + \sum_{k \in [2p]:  i\neq k}\lv a_k \rv^2\right)\\
     & \qquad\qquad\qquad =  \frac{(2p-1)L^2}{2} \sum_{i\in [2p]}2(1-\lv a_i \rv^2)\\
     & \qquad\qquad\qquad = (2p-1)^2L^2\le 4p^2L^2.
\end{align*}
Finally, the last two terms in inequality~\eqref{e:splitintodifferentgroups} can each be bounded by
\begin{align*}
   \sum_{i,j,k \in [2p]: i=j\neq k} \lv a_j \rv \lv a_k \rv \lv \nabla_{v_i}\nabla_{v_k}L \rv_{op} \lv \nabla^2_{v_i}L \rv_{op} & \le (2L/h)\cdot L \sum_{j,k \in [2p]: j\neq k} \lv a_j \rv \lv a_k \rv \\
   & = 2pL^2 \sum_{j,k \in [2p]: j\neq k} \lv a_j \rv \lv a_k \rv \\
   & \le 2pL^2 \sum_{j,k \in [2p]: j\neq k} \frac{\lv a_j \rv^2 + \lv a_k \rv^2}{2} \\
   &= pL^2 \sum_{j \in [2p]} \left(\lv a_j \rv^2 +\sum_{k\in [2p]:k\neq j} \lv a_k \rv^2 \right)\\
   &= pL^2 \sum_{i \in [2p]} \left(\lv a_j \rv^2 +(1-\lv a_j \rv^2) \right)\\
   & = 2p^2L^2 .
\end{align*}
The bounds on these four terms combined with inequality~\eqref{e:splitintodifferentgroups} tells us that 
\begin{align*}
    \left\lv (\nabla^2 L) a\right\rv^2  & \le 12  L^2 p^2.
\end{align*}
Taking square roots along with the definition of the operator norm in equation~\eqref{e:op.norm.def} completes the proof.
\end{proof}
\subsection{Proof of Lemma~\ref{l:gradient_norm.upper}}
\label{a:gradient_norm.upper}
\gradientupper*
\begin{proof}
Recall the definition of $g_s = \left(1+\exp\left(y_s f_{V}(x_s)\right)\right)^{-1}$. By using the expression for the gradient of the loss
\begin{align*}
    \lv \nabla_V L \rv^2 
    &  = \sum_{i=1}^{2p} \lv \nabla_{v_{i}} L(V) \rv^2 \\
    &  = \frac{1}{n^2}\sum_{i=1}^{2p}   \left\lv \sum_{s=1}^n g_s
      \phi'(v_i \cdot x_s) y_s x_s \right\rv^2
        \\
        & = \frac{1}{n^2}\sum_{i=1}^{2p}  \sum_{s=1}^n \sum_{q=1}^n g_s g_{q}
      \phi'(v_i \cdot x_s) \phi'(v_i \cdot x_{q}) y_s y_{q} x_s \cdot x_{q}. 
\end{align*}
 By definition we know that $\lvert \phi'(v_i \cdot x_s)\rvert < 1$ for all $s\in [n]$ and $\lvert y_s y_q x_s\cdot x_q \rvert \le 1$ for any pair $s,q \in [n]$. Therefore,
\begin{align*}
    \lv \nabla_V L \rv^2 & \le \frac{1}{n^2}\sum_{i=1}^{2p}  \sum_{s=1}^n \sum_{q=1}^n g_s g_{q} = \frac{2p}{n^2}  \sum_{s=1}^n \sum_{q=1}^n g_s g_{q}.
\end{align*} 
Since $g_k, g_{\ell} \leq 1$, this implies
$\lv \nabla_V L \rv^2 \leq 2 p$.

To get the stronger bound when $L(V)$ is small,
by Lemma~\ref{l:relationsbetweengradientandloss}, Part~\ref{i:gradientupperboundedbyloss} we know that $g_s g_{q} \le L_s L_{q}$. Thus,
        \begin{align*}
    \lv \nabla_V L \rv^2& \le \frac{2p}{n^2}\left(  \sum_{s=1}^n \sum_{q=1}^n L_s  L_{q} \right)= 2p\left( \frac{1}{n}\sum_{s=1}^n L_s \right)^2 
     = 2  p L(V)^2
    \end{align*}
completing the proof.
\end{proof}
\subsection{Proof of Lemma~\ref{l:L.one_step_improvement}}
\label{a:L.one_step_improvement}
\onesteplemma*
\begin{proof}
In order to apply Lemma~\ref{l:pl}, we would like to bound
$\lv \nabla^2_W L \rv_{op}$ for all convex combinations $W$
of $V^{(t)}$ and $V^{(t+1)}$.  For 
$N = \ceil{\frac{\sqrt{2p} \lv V^{(t+1)}-V^{(t)}\rv}{L_t}}$, 
we will prove the following by induction:
\begin{quote}
For all $s \in \{ 0,\ldots,N\}$, for all $\eta \in [0,s/N]$,
for $W = \eta V^{(t+1)} + (1 - \eta) V^{(t)}$,
$\lv \nabla^2_W L \rv_{op} \leq 10  p L_t$.
\end{quote}
The base case, where $s = 0$ follows directly from Lemma~\ref{l:L_smooth}.  Now,
assume that the inductive hypothesis holds from some $s$, and,
for $\eta \in (s/N,(s+1)/N]$, consider
$W = \eta V^{(t+1)} + (1 - \eta) V^{(t)}$.
Let $\tW = (s/N) V^{(t+1)} + (1 - s/N) V^{(t)}$.
Applying Lemma~\ref{l:pl} along with the inductive hypothesis, 
$L(\tW) \leq L_t$.
Applying Lemma~\ref{l:gradient_norm.upper},
\begin{align*}
L(W) & \leq L(\tW) + (\sqrt{2p} )\lv W- \tW  \rv \\
  &    \leq L_t +  \frac{\sqrt{2p} \lv V^{(t+1)} - V^{(t)} \rv}{N} \\
  & \le 2L_t.
\end{align*}
Applying Lemma~\ref{l:L_smooth}, this implies
$\lv \nabla^2_W L \rv_{op} \leq 10  p L(V^{(t)})$, completing the proof of the inductive step.

So, now we know that, for all convex combinations
$W$ of $V^{(t)}$ and $V^{(t+1)}$, 
$\lv \nabla^2_W L \rv_{op} \leq 10  p L(V^{(t)})$.
Applying
Lemma~\ref{l:pl}, we have
\begin{align*}
L_{t+1} 
& \le L_t - \frac{5\alpha_t\lv \nabla L_t\rv^2}{6},
\end{align*}
which is the desired result.
\end{proof}
\subsection{Proof of Lemma~\ref{l:concave.min}}
\label{a:concave.min}
\concaveminlemma*
\begin{proof}
If $n=1$ the lemma is trivial.  Consider
the case $n > 1$.
Consider an arbitrary feasible point $z_1,\ldots,z_n$ with $z_1,\ldots,z_n >0$ and $\sum_{i=1}^n z_i = M$.
Assume without loss of generality
that $z_1 \geq z_2\geq \ldots \geq z_n$. For an arbitrarily small $\eta >0$, we claim that the point $z_1 + z_2-\eta, \eta, z_3, \ldots,z_n$
is at least as good.
Since $\psi$ is concave
\begin{align*}
\psi(z_1) 
 &\geq \frac{z_2-\eta}{z_1 + z_2-2\eta} \psi(\eta) + 
    \frac{z_1-\eta}{z_1 + z_2-2\eta}\psi(z_1 + z_2-\eta), \quad 
\mbox{ and } \\
\psi(z_2) 
  &\geq \frac{z_1-\eta}{z_1 + z_2-2\eta} \psi(\eta) + 
    \frac{z_2-\eta}{z_1 + z_2-2\eta}\psi(z_1 + z_2-\eta).
\end{align*}
So by adding these two inequalities we infer
\begin{align*}
 \psi(z_1 + z_2-\eta) + \psi(\eta) + \sum_{i=3}^n \psi(z_i) 
& \leq \psi(z_1) + \psi(z_2) + \sum_{i=3}^n \psi(z_i).
\end{align*}
Repeating this for the other $(n-2)$ components of
the solution, we find that
\begin{align*}
    \psi(M-(n-1)\eta) + (n-1)\psi(\eta) 
& \leq \sum_{i=1}^n \psi(z_i).
\end{align*}
Since $\psi$ is a continuous function by taking the limit $\eta \to 0^{+}$ we get that,
\begin{align*}
    \psi(M) + (n-1)\lim_{\eta \to 0^{+}}\psi(\eta) & \le \sum_{i=1}^n \psi(z_i).
\end{align*}
Given that $z_1,\ldots,z_n$ was an arbitrary feasible point, the previous inequality establishes our claim.
\end{proof}

\section{Reduction to the Case of No Bias with Random Initialization}
\label{a:reduction.clusters}

We
once again consider the process of training a model using
$(\tx_1,y_1),\ldots,(\tx_n,y_n)$, where $\tx_s$
is defined as in Appendix~\ref{a:reduction.general}.

Let $\tsigma = \sqrt{2} \sigma$.  A sample from
$\mathcal{N}(0, \tsigma^2)$ can be generated by sampling
from $\mathcal{N}(0, \sigma^2)$, and scaling the result up by a
factor of $\sqrt{2}$.  

For some $\sigma > 0$, and $\alpha_0, \alpha_1, \alpha_2,\ldots > 0$, $h \geq 0$,
consider the joint distribution on 
\[
(\theta^{(0)},\tV^{(0)}), (\theta^{(1)},\tV^{(1)}),\ldots
\]
defined as follows.  
First, $\theta^{(0)}, \theta^{(1)},\ldots$ are generated 
as described in Section~\ref{s:main.clusters}.  
Each row $\tv_i^{(0)}$ of $\tV^{(0)} \in \R^{2p \times (d+1)}$
is $\sqrt{2} (v_{i1}^{(0)},\ldots,v_{id}^{(0)},b_i^{(0)})$
(so that they are mutually independent draws from $\cN(0, 2\sigma^2)$).

Define $\tL$ as in Appendix~\ref{a:reduction.general}: informally, $L$, but without the bias terms.

Then, for $\talpha_0 = 2\alpha_0, \talpha_1 = 2 \alpha_1, \ldots > 0$, we define
$\tV^{(1)}, \tV^{(2)}, \ldots$ to be the iterates of gradient descent applied to
$\tL$, except replacing
$\alpha_0,\alpha_1, \alpha_2,\ldots$ by $\talpha_0, \talpha_1, \talpha_2,\ldots$.

Arguing as in Appendix~\ref{a:reduction.general}, except starting
with round $0$, we can see that, for all $t$, 
\begin{itemize}
\item for all $i$, $\tv_i^{(t)} = \sqrt{2} (v_{i1}^{(t)},\ldots,v_{id}^{(t)},b_i^{(t)})$
\item for all $i$ and all $s$, $\tv_i^{(t)} \cdot \tx_s = v_i^{(t)} \cdot x_s + b_i^{(t)}$,
so that $\tL(\tV^{(t)}) = L(\theta^{(t)})$.
\end{itemize}

For each cluster $k$, define $\tmu_k$ by
$\tmu_k = \left( \frac{\mu_{k1}}{\sqrt{2}},\ldots,\frac{\mu_{kd}}{\sqrt{2}},\frac{1}{\sqrt{2}}
\right).$  Note that $\lv \tx_s - \tmu_k \rv = \frac{\lv x_s - \mu_k \rv}{\sqrt{2}}$ and, for all clusters $k$ and $\ell$
\[
\tmu_k \cdot \tmu_{\ell} = \frac{\mu_k \cdot \mu_{\ell} + 1}{2}.
\]

\section{Proof of Lemma~\protect\ref{l:concentration}}
\label{a:concentration}
We begin by restating the lemma here. 
\concentrationlemma* 
The different parts of the lemma are proved one at a time in the subsections below. The lemma holds by taking a union bound over all the different parts. Throughout the proof of this lemma we fix the samples $(x_1,y_1),\ldots,(x_n,y_n) \in \S^{d-1}\times \{-1,1\}$. Conditioned on their value, for all $i \in [2p]$ and for all $s\in [n]$, the random variables $v_i^{(0)}\cdot x_s \sim \mathcal{N}(0,\sigma^2)$.
\subsection{Proof of Part~\ref{i:meanofiplusandiminus}}
Consider a fixed sample $s$. Without loss of generality, suppose that $y_s =1$. We want to demonstrate a high probability lower bound on
\begin{align*}
    \sum_{i \in I_{+s}} v_i^{(0)} \cdot x_s  = \sum_{i \in [p]} (v_i^{(0)} \cdot x_s) \mathbf{1}_{i \in I_{+s}}.
\end{align*}
Now the expected value of this sum,
\begin{align*}
    \mathbb{E}\left[\sum_{i \in [p]} (v_i^{(0)} \cdot x_s) \mathbf{1}_{i \in I_{+s}}\right] &= p \mathbb{E}\left[\left(v_1^{(0)} \cdot x_s\right) \mathbf{1}_{1 \in I_{+s}}\right].  
\end{align*}
Choose the function $\chi$ in the statement of the result to be
\begin{align*}
    \chi(h,\alpha_0,\sigma) := 
    \mathbb{E}\left[\left(v_1^{(0)} \cdot x_s\right) \mathbf{1}_{1 \in I_{+s}}\right].
\end{align*}

By applying Hoeffding's inequality~\citep[see][Theorem~2.6.2]{vershynin2018high} (since $v_i^{(0)}\cdot x_s \sim \mathcal{N}(0,\sigma^2)$, the truncated random variable $(v_i^{(0)}\cdot x_s )\mathbf{1}_{i \in I_{+s}}$ is also $c_1\sigma$-sub-Gaussian for an appropriate positive constant $c_1$),
\begin{align*}
    \Pr\left[\sum_{i \in I_{+s}} v_i^{(0)} \cdot x_s \le  \mathbb{E}\left[\sum_{i \in I_{+s}} (v_i^{(0)} \cdot x_s) \right] -\eta p\right] &\le \exp\left(-c_2\eta^2 p/\sigma^2\right)\\&= \exp\left(-c_2\eta^2 p^{2 + \beta}\right)
\end{align*}
since $\sigma = 1/p^{1/2 + \beta/2}$. Setting $\eta = 1/p$ we get
\begin{align*}
    \Pr\left[\sum_{i \in I_{+s}} v_i^{(0)} \cdot x_s \le  \mathbb{E}\left[\sum_{i \in I_{+s}} (v_i^{(0)} \cdot x_s) \right] -1\right] \le \exp\left(-c_2p^{\beta}\right).
\end{align*}
Since $p\ge \log^{C_3}(n d/\delta)$, for any $C_3 \ge c_3/\beta$, for a large enough constant $c_3$ we can establish that 
\begin{align*}
    \Pr\left[\sum_{i \in I_{+s}} v_i^{(0)} \cdot x_s \le  \mathbb{E}\left[\sum_{i \in I_{+s}} (v_i^{(0)} \cdot x_s) \right] -1\right] \le \frac{\delta}{20n}.
\end{align*}
Finally, a union bound over all $n$ samples completes the proof for $I_{+s}$:
\begin{align*}
    \Pr\left[\exists s\in [n]: \sum_{i \in I_{+s}} v_i^{(0)} \cdot x_s \le  \mathbb{E}\left[\sum_{i \in I_{+s}} (v_i^{(0)} \cdot x_s) \right] -1\right] \le \frac{\delta}{20}.
\end{align*}
An identical argument holds for the sum: $\sum_{i \in {I_{-s}}} v_{i}^{(0)} \cdot x_s$ which completes the proof of this part of the lemma.
    
\subsection{Proof of Part~\ref{i:sizeofthesetiplusandiminus}}
Consider a sample $s$. Without loss of generality, suppose that $y_s =1$. Recall the definition of the set 
\begin{align*}
    I_{+s} = \{i \in [2p]: v_i^{(0)} \cdot x_s \ge h+4 \alpha_0 \text{ and } u_i = y_s = 1\}.
\end{align*}
Note that the variable $v_i^{(0)} \cdot x_s$ has a Gaussian distribution with zero-mean and variance $\sigma^2$. Also, recall that $u_i = 1$ for all $i \in [p]$. Therefore, for each $i \in [p]$,
\begin{align*}
    \zeta:= \Pr\left[v_i^{(0)} \cdot x_s \ge h+ 4\alpha_0\right]& = \frac{1}{2} - \Pr\left[v_i^{(0)} \cdot x_s \in [0, h+ 4\alpha_0]\right] \\ &\overset{(i)}{=} \frac{1}{2} - O\left(\frac{h+\alpha_0}{\sigma}\right) \\
    & = \frac{1}{2} - O\left(\frac{1/p + 1/p^{1/2 + \beta}}{1/p^{1/2 + \beta/2}}\right) \\
    & = \frac{1}{2} -o(1),
\end{align*}
where $(i)$ follows by an upper bound of $1/(\sigma\sqrt{2\pi})$ on the density of a Gaussian random variable with variance $\sigma^2$. A Hoeffding bound implies that, for any $\eta >0$
\begin{align*}
    \Pr\left[\Big\lvert \left\lvert I_{+s} \right\rvert - \zeta p \Big\rvert \ge \eta p \right] \le 2\exp\left(-c' \eta^2 p\right).
\end{align*}
Thus by a union bound over all samples
\begin{align*}
    \Pr\left[\exists s \in [n]:\Big| \lvert I_{+s} \rvert - \zeta p \Big| \ge \eta p \right] \le 2n\exp\left(-c' \eta^2 p\right).
\end{align*}
Setting $\eta = 1/p^{1/4}$ and recalling that $\zeta = 1/2 -o(1)$ and $p \ge \ln^{C_3}(nd/\delta)$ completes the argument for the sets $I_{+s}$. An identical argument goes through for the second claim that establishes a bound on the size of the sets $I_{-s}$.

\subsection{Proof of Part~\ref{i:init.lprime}}
By definition $g_{0s} = \left(1+\exp\left(y_s \sum_{i=1}^{2p} u_i \phi(v_i^0 \cdot x_s)\right)\right)^{-1}$. Recall that 
$v_i^{(0)}$ is drawn from a zero-mean Gaussian with variance $\sigma^2 I$. Therefore, for each $i$, $v_i^{(0)} \cdot x_s$ is a zero-mean Gaussian with variance $\sigma^2$ (since $\lv x_s \rv = 1$). For ease of notation let us define $\xi_i := v_i^{(0)} \cdot x_s$. The sigmoid function $1/(1+\exp(t))$ is $1$-Lipschitz. Therefore,
\begin{align*}
    &\left\lvert\frac{1}{1+\exp\left(y_s \sum_{i=1}^{2p} u_i \phi(\xi_i)\right)} - \frac{1}{1+\exp\left(y_s \E\left[ \sum_{i=1}^{2p} u_i \phi(\xi_i)\right]\right)} \right\rvert  \\
    &\qquad \qquad \qquad \qquad \qquad \qquad \qquad \qquad \qquad \qquad \qquad \qquad  \le \left \lvert \sum_{i=1}^{2p} u_i \phi(\xi_i)-\E\left[\sum_{i=1}^{2p} u_i \phi(\xi_i)\right]\right\rvert.
\end{align*}
Additionally, by its definition the Huberized ReLU $\phi$ is also $1$-Lipschitz. Therefore for any pair $z_1, z_2 \in \R^{2p}$
\begin{align*}
    \left\lvert \sum_{i=1}^{2p}u_i \left(\phi(z_{1i})-\phi(z_{2i})\right) \right\rvert
    \le \sum_{i=1}^{2p} \left\lvert \phi(z_{1i})-\phi(z_{2i}) \right\rvert 
    \le \sum_{i=1}^{2p} \left\lvert z_{1i}-z_{2i} \right\rvert 
    &= \lv z_1 - z_2 \rv_1 \\
     &\le \sqrt{2p} \lv z_1 - z_2 \rv.
\end{align*}
Hence, the function $y_s\sum_{i=1}^{2p} u_i \phi(\xi_i)$ is $\sqrt{2p}$-Lipschitz with respect to its argument 
\[
(\xi_1,\ldots,\xi_{2p}).
\]
By the Borell-Tsirelson-Ibragimov-Sudakov inequality for the concentration of Lipschitz functions of Gaussian random variables \citep[see][Theorem~2.4]{wainwright2019high},
\begin{align*}
    \Pr\left[\left\lvert \sum_{i=1}^{2p} u_i \phi(\xi_i)-\E\left[\sum_{i=1}^{2p} u_i \phi(\xi_i)\right] \right\rvert \ge \eta\right] \le 2 \exp\left(-\frac{\eta^2}{4p \sigma^2}\right).
\end{align*}
Recall that $\sigma = 1/p^{1/2 + \beta/2}$, thus,
\begin{align*}
    \Pr\left[\left\lvert \sum_{i=1}^{2p} u_i \phi(\xi_i))-\E\left[\sum_{i=1}^{2p} u_i \phi(\xi_i))\right] \right\rvert \ge \eta\right] \le 2 \exp\left(-c_1p^{\beta}\eta^2\right).
\end{align*}
By choosing $\eta = 1/p^{\beta/4}$, 
\begin{align*}
    \Pr\left[\left\lvert \sum_{i=1}^{2p} u_i \phi(\xi_i))-\E\left[\sum_{i=1}^{2p} u_i \phi(\xi_i))\right] \right\rvert \ge \frac{1}{p^{\beta/4}}\right] \le 2 \exp\left(-c_2p^{\beta/2}\right).
\end{align*}

This tells us that with probability at least $1-2\exp(-c_2 p^{\beta/2})$,
\begin{align*}
    \frac{1}{1+\exp\left(y_s \E\left[ \sum_{i=1}^{2p} u_i \phi(\xi_i))\right]\right)} - \frac{1}{p^{\beta/4}}&\le \frac{1}{1+\exp\left(y_s \sum_{i=1}^{2p} u_i \phi(\xi_i))\right)}\\ &\le  \frac{1}{1+\exp\left(y_s \E\left[ \sum_{i=1}^{2p} u_i \phi(\xi_i))\right]\right)} + \frac{1}{p^{\beta/4}}. \numberthis \label{e:upperbound_gradient_initial_in_terms_of_expectation}
\end{align*}

Next, we calculate the value of $\E\left[ \sum_{i=1}^{2p} u_i \phi(\xi_i))\right]$. Note that all the random variables $\{\xi_i\}_{i\in[2p]}$ are identically distributed. Recall that, $u_i = 1$ if $i \in \{1,\ldots,p\}$ and $u_i = -1$ if $i \in \{p+1,\ldots,2p\}$, thus
\begin{align*}
    \mathbb{E}\left[\sum_{i=1}^{2p} u_i \phi(\xi_i)\right] & = \mathbb{E}\left[\sum_{i=1}^p \phi(\xi_i)\right] - \mathbb{E}\left[\sum_{i=p+1}^{2p} \phi(\xi_i)\right]
    = p\mathbb{E}\left[\phi(\xi_1)]-p\E[\phi(\xi_1)\right] = 0.
\end{align*}
Thus by inequality~\eqref{e:upperbound_gradient_initial_in_terms_of_expectation} we know that with probability at least $1-2\exp(-c_2p^{\beta/2})$
\begin{align*}
    \frac{1}{2}-o(1)\le \frac{1}{1+\exp(y_s \sum_{i=1}^{2p} u_i \phi(v_i^{(0)} \cdot x_s))} \le \frac{1}{2}+o(1).
\end{align*}
A union bound over all $n$ samples completes the proof, since $p \ge \ln^{C_3}(nd/\delta)$ for a large enough constant $C_3$.

\subsection{Proof of Part~\ref{i:numberofelementsinthesameclustercaptured}}
We will first prove the first claim of this part of the lemma. Without loss of generality consider the cluster $\cK_1$ (recall that for all examples $s \in \cK_1$, $y_s =1$). For any pair $s,q \in \cK_1$
\begin{align*}
    &\Pr\left[v_i^{(0)}\cdot x_s \ge h+ 4\alpha_0 \text{ and } v_i^{(0)}\cdot x_q \ge h+4\alpha_0\right]\\ & \ge \Pr\left[v_i^{(0)}\cdot x_s \ge 0 \text{ and } v_i^{(0)}\cdot x_q \ge 0 \right]-\Pr[v_i^{(0)} \cdot x_s \in[0,h+4\alpha_0]]-\Pr[v_i^{(0)} \cdot x_q \in[0,h+4\alpha_0]]\\
    &\overset{(i)}{\ge} \Pr\left[v_i^{(0)}\cdot x_s \ge 0 \text{ and } v_i^{(0)}\cdot x_q \ge 0 \right]-O\left( \frac{h+\alpha_0}{\sigma}\right)\\
    &= \Pr\left[v_i^{(0)}\cdot x_s \ge 0 \text{ and } v_i^{(0)}\cdot x_q \ge 0 \right]-O\left( \frac{1/p + 1/p^{1/2 + \beta}}{1/p^{1/2 + \beta/2}}\right)\\
    & \overset{(ii)}{=} \frac{1 - \arccos(x_s \cdot x_q)/\pi}{2} -o(1)\\
    & \overset{(iii)}{\geq} \frac{1 - \arccos(1 - 2 r)/\pi}{2} -o(1) \\
    &\ge \frac{1 - \sqrt{r}}{2} -o(1)
\end{align*}
where $(i)$ follows by an upper bound of $1/(\sqrt{2\pi}\sigma)$ on the density of a Gaussian random variable, $(ii)$ follows by noting that the conditional probability of $v_i^{(0)} \cdot x_{q} \ge 0$ conditioned on the event that $v_{i}^{(0)}\cdot x_s \ge 0$ is
$1 - \frac{\arccos(x_s  \cdot x_q)}{\pi}$, and
$(iii)$ follows since $x_s  \cdot x_q \geq 1 - 2r$ by 
Lemma~\ref{l:aux.innerproductboundonx_sandx_q}.

Define $\zeta := \Pr\left[v_i^{(0)}\cdot x_s \ge h+ 4\alpha_0 \text{ and } v_i^{(0)}\cdot x_q \ge h+4\alpha_0\right]$. A Hoeffding bound implies that, for any $\eta >0$
\begin{align*}
    \Pr\left[\left\lvert \left\{i \in [p]: \left(v_i^{(0)} \cdot x_s \ge h+4\alpha_0 \right) \wedge \left( v_i^{(0)} \cdot x_q \ge h +4\alpha_0 \right) \right\}\right\rvert \le (\zeta -\eta) p\right] \le \exp\left(-c_1\eta^2 p\right).
\end{align*}
Recall the definition of the set $I_{+s} = \left\{i\in [2p]: (u_i = y_s=1) \wedge \left(v_i^{(0)} \cdot x_s \ge h+4 \alpha_0\right) \right\}$. Therefore, a union bound over all sample pairs $s,q \in \mathcal{K}_1$ implies that
\begin{align*}
     \Pr\left[\left\lvert \left\{i \in [p] : \forall \; s \in \cK_1, i \in I_{+s} \right\}\right\rvert \le (\zeta -\eta) p\right] &\le n^2 \exp\left(-c_1\eta^2 p\right).
\end{align*}
Finally, by taking a union bound over all 4 clusters we get that
\begin{align*}
     \Pr\left[\exists k \in [4]: \left\lvert \left\{i \in [2p] : \forall \; s \in \cK_k, i \in I_{+s} \right\}\right\rvert \le (\zeta -\eta) p\right] &\le 4n^2 \exp\left(-c_1\eta^2 p\right).
\end{align*}
Choosing $\eta = 1/p^{1/4}$, recalling that $\zeta \ge (1-\sqrt{r}-o(1))/2$ and $p \ge \ln^{C_3}(nd/\delta)$ for a large enough constant $C_3$ completes the proof of the first claim.

The second claim of this part of the lemma  follows by an identical argument.

\subsection{Proof of Part~\ref{i:init.captured.not.captured.part2}}
Without loss of generality, consider a node $i \in [p]$ with $u_i=1$ and a fixed pair $s,q \in [n]$ such that $y_s =1$ and  $y_{q}=-1$. Since each of $v_i^{(0)} \cdot x_s$ and $v_i^{(0)} \cdot x_{q}$ are distributed as $\mathcal{N}(0,\sigma^2)$, we have
\begin{align*}
    \Pr[v_i^{(0)} \cdot x_s \geq h + 4 \alpha_0
        \mbox{ and } v_i^{(0)} \cdot x_q \geq 0 ] &\le  \Pr[v_i^{(0)} \cdot x_s \geq 0
        \mbox{ and } v_i^{(0)} \cdot x_q \geq 0 ] \\ 
        &\overset{(i)}{=} \frac{1 - \frac{\arccos(x_s \cdot x_q)}{\pi}}{2}\\ 
        &\overset{(ii)}{\le} \frac{1 - \frac{\arccos((1+\Delta)/2 +2r)}{\pi}}{2} \\
        & \le \frac{1}{3} +
            \frac{\Delta}{4}
            + r,
\end{align*}
if the bound $C_2$ on $r$ and $\Delta$
is small enough,
where $(i)$ follows by noting that the conditional probability of $v_i^{(0)} \cdot x_{q} \ge 0$ conditioned on the event that $v_{i}^{(0)}\cdot x_s \ge 0$ is $1 - \frac{\arccos(x_s \cdot x_q)}{\pi}$, while $(ii)$ follows since by Lemma~\ref{l:aux.innerproductboundonx_sandx_q}, $x_s \cdot x_{q} \le (1+\Delta)/2 +2r$ for samples where $y_s \neq y_{q}$. 

Now, define $\zeta := \Pr[v_i^0 \cdot x_s \geq h + 4 \alpha_0 
        \mbox{ and } v_i^{(0)} \cdot x_q \geq 0 ]$; a Hoeffding bound
        implies that, for any $\eta > 0$
        \[
        \Pr[|\{ i \in [p]: v_i^{(0)} \cdot x_s \geq h + 4 \alpha_0
        \mbox{ and } v_i^{(0)} \cdot x_q \geq 0 ]\}| 
          \geq (\zeta + \eta) p] \leq \exp(-c' \eta^2 p).
        \]
        Choosing $\eta = 1/p^{1/4}$ and recalling that $\zeta \le \frac{1}{3} +
            \frac{\Delta}{4}
            + r$ along with a union bound over the pairs of samples completes the proof of the first claim. An identical argument works to establish the second claim.
        
\subsection{Proof of Part~\ref{i:sizeofsetthatfallsinthemiddle}} 
Without loss of generality, consider a node $i \in[p+1,\ldots,2p]$ with $u_i=-1$ and fix a sample $s$ with $y_s = 1$. Since each $v_i^0 \cdot x_s$ is distributed as $\mathcal{N}(0,\sigma^2)$, we have,
\begin{align} \label{e:upperbound_zeta_part_6}
  \zeta :=  \Pr\left[ -\alpha_0(1/2+2(\Delta+r)) \le v_i^{(0)} \cdot x_s \le h+4\alpha_0\right] \le \frac{1}{\sigma\sqrt{2\pi}} \left(h+5\alpha_0(1+\Delta+r)\right) ,
\end{align}
where the bound on the probability above follows by an upper bound of $1/(\sqrt{2 \pi} \sigma)$ on the density of a Gaussian random variable.
A Hoeffding bound implies that, for any $\eta >0$
\begin{align*}
   & \Pr\left[\lvert \{i \in \{p+1,\ldots, 2p\}: -\alpha_0(1/2+2(\Delta+r))\le v_i^{(0)} \cdot x_s\le h+4\alpha_0 \}\rvert \ge  (\zeta+\eta)p\right] \\ &\le \exp\left(-c'\eta^2 p\right).
\end{align*}
By choosing $\eta = \frac{5\alpha_0}{\sigma \sqrt{2\pi}}= \frac{5}{\sqrt{2\pi}p^{\beta/2}}$, recalling the upper bound on $\zeta$ established in \eqref{e:upperbound_zeta_part_6} and a union bound over all the $n$ samples completes the proof since $p \ge \ln^{C_3} (nd/\delta)$ for a large enough constant $C_3$.

 \subsection{Proof of Part~\ref{i:init:normboundV}} 
 We know that each $v_i^{(0)} \sim \mathcal{N}(0,\sigma^2 I_{(d+1) \times (d+1)})$. Thus by a concentration inequality for the lower tail of a $\chi^2$-random variable with $2(d+1)p$ degrees of freedom~\citep[see][Lemma~1]{laurent2000adaptive} we have that, for any $\eta >0$
 \begin{align*}
       \Pr\left[\frac{\lv V^{(0)} \rv}{\sqrt{2(d+1)p}} \le \sigma\sqrt{1- \eta}\right]\le \exp(-(d+1)p\eta^2/2).
 \end{align*}
Recall that $\sigma = 1/p^{1/2+\beta/2}$, thus by setting
$\eta = 1/32$ we get that
 \begin{align*}
     \Pr\left[\lv V^{(0)} \rv \le \frac{6\sqrt{d+1}}{5p^{\beta/2}}  \right]\le \exp(-c_1 (d+1)p).
 \end{align*}
Since $p \ge \log^{C_3}(nd/\delta)$ for a large enough value of $C_3$, this ensures that 
\[
\lv V^{(0)}\rv \ge 6\sqrt{d+1}/(5p^{\beta/2})
\]
with probability at least $1 - \delta/c_2$. By the reverse triangle inequality,
 \begin{align*}
     \lv V^{(1)} \rv & \ge \lv V^{(0)}\rv - \alpha_0 \lv \nabla L_0 \rv  \overset{(i)}{\ge} \lv V^{(0)}\rv - \alpha_0 \sqrt{2p}   \ge\frac{6\sqrt{d+1}}{5p^{\beta/2}} - \frac{\sqrt{2}}{p^{\beta}}  \overset{(ii)}{\ge} \frac{3}{5}\sqrt{\frac{d}{p^{\beta}}},
 \end{align*}
 where $(i)$ follows by the bound on the norm of gradient established in Lemma~\ref{l:gradient_norm.upper} and $(ii)$ follows since $d\ge 2$ under our clustering assumptions. Hence 
 \begin{align}\label{e:lower_bound_norm}
     \Pr\left[\lv V^{(1)} \rv \ge \frac{3}{5}\sqrt{\frac{d}{p^{\beta}}}\right] \ge 1-\delta/c_2,
 \end{align}
 which establishes the desired lower bound on the norm of $V^{(1)}$. To establish the upper bound we will use the Borell-TIS inequality for Lipschitz functions of Gaussian random variables \citep[see][Example~2.28]{wainwright2019high}. By this inequality we have that, for any $\eta >0$
 \begin{align*}
     \Pr\left[\frac{\lv V^{(0)} \rv}{\sqrt{2(d+1) p}} \ge \sigma(1 + \eta) \right]\le \exp(-(d+1)p\eta^2).
 \end{align*}
Once again because $\sigma = 1/p^{1/2+\beta/2}$, by setting
$\eta = 1/32$ we get that
 \begin{align*}
     \Pr\left[\lv V^{(0)} \rv \ge \frac{3\sqrt{d+1}}{2p^{\beta/2}}  \right]\le \exp(-c_1 (d+1)p).
 \end{align*}
Since $p \ge \log^{C_3}(nd/\delta)$ for a large enough value of $C_3$, this ensures that 
\[
\lv V^{(0)}\rv \le 3\sqrt{d+1}/(2p^{\beta/2}) \le 5\sqrt{d}/2p^{\beta/2}
\]
with probability at least $1 - \delta/c_2$. By the triangle inequality,
 \begin{align*}
     \lv V^{(1)} \rv & \le \lv V^{(0)}\rv + \alpha_0 \lv \nabla L_0 \rv  \overset{(i)}{\le} \lv V^{(0)}\rv + \alpha_0 \sqrt{2p}   \le\frac{5\sqrt{d}}{2 p^{\beta/2}} + \frac{\sqrt{2}}{p^{\beta}}  \le 3\sqrt{\frac{d}{p^{\beta}}},
 \end{align*}
 where $(i)$ follows by the bound on the norm of gradient established in Lemma~\ref{l:gradient_norm.upper}. Hence 
 \begin{align*}
     \Pr\left[\lv V^{(1)} \rv \le 3\sqrt{\frac{d}{p^{\beta}}}\right] \ge 1-\delta/c_2.
 \end{align*}
 Combining this with inequality~\eqref{e:lower_bound_norm} above we get that
\begin{align*}
     \Pr\left[\frac{3}{5}\sqrt{\frac{d}{p^{\beta}}}\le \lv V^{(1)} \rv \le 3\sqrt{\frac{d}{p^{\beta}}}\right] \ge 1-2\delta/c_2
 \end{align*}
 completing our proof.

\section{Auxiliary Lemmas}
In this section we list a couple of lemmas that are useful in various proofs above.

\begin{lemma}\label{l:relationsbetweengradientandloss}
For any $x \in \mathbb{R}^d$ and $y \in \{-1,1\}$ and any weight matrix $V$ we have the following
\begin{enumerate}
    \item \label{i:gradientupperboundedbyloss}$$\frac{1}{1+\exp\left(yf_V(x)\right)} \le  \ln(1+\exp(-yf_V(x))) = L(V;x,y).$$
    \item \label{i:hessiangupperboundedbyloss}$$\frac{\exp\left(yf_V(x)\right)}{\left(1+\exp\left(yf_V(x)\right)\right)^2} \le \frac{1}{1+\exp\left(yf_V(x)\right)} \le L(V;x,y).$$
\end{enumerate}
\end{lemma}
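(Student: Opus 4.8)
The plan is to reduce both parts to elementary one-variable inequalities by writing $z := y f_V(x) \in \R$ and manipulating the logistic link. For Part~\ref{i:gradientupperboundedbyloss}, I would substitute $u := e^{-z} > 0$, note that $\frac{1}{1+e^z} = \frac{u}{1+u}$ and $\ln(1+e^{-z}) = \ln(1+u)$, so the claimed bound is exactly the textbook inequality $\frac{u}{1+u} \le \ln(1+u)$, valid for all $u > -1$ and in particular for $u > 0$. To verify it cleanly I would set $g(u) := \ln(1+u) - \frac{u}{1+u}$, observe $g(0) = 0$, and compute $g'(u) = \frac{1}{1+u} - \frac{1}{(1+u)^2} = \frac{u}{(1+u)^2} \ge 0$ for $u \ge 0$, so $g \ge 0$ on $[0,\infty)$. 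The final equality $\ln(1+e^{-z}) = L(V;x,y)$ is just the definition of the per-example loss.

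For Part~\ref{i:hessiangupperboundedbyloss}, the second inequality is precisely Part~\ref{i:gradientupperboundedbyloss}, so only the first inequality $\frac{e^z}{(1+e^z)^2} \le \frac{1}{1+e^z}$ requires an argument, and it is immediate: multiplying both sides by the positive quantity $1+e^z$ reduces it to $\frac{e^z}{1+e^z} \le 1$, which holds since $e^z < 1 + e^z$ for every $z \in \R$.

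There is no genuine obstacle here; the entire content is the inequality $u/(1+u) \le \ln(1+u)$, and the rest is bookkeeping with the sigmoid. If one prefers to avoid differentiation, the same inequality follows from $-\ln(1-t) \ge t$ for $t < 1$ applied with $t = \frac{u}{1+u} \in [0,1)$, since $\ln(1+u) = -\ln\bigl(1 - \tfrac{u}{1+u}\bigr) \ge \tfrac{u}{1+u}$. I would include whichever of these two short derivations reads most smoothly in context.
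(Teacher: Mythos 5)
Your proof is correct and takes essentially the same route as the paper, which simply cites the elementary inequalities $(1+e^z)^{-1}\le\ln(1+e^{-z})$ and $e^z/(1+e^z)^2\le(1+e^z)^{-1}$ without further verification; your substitution $u=e^{-z}$ and the check that $g(u)=\ln(1+u)-u/(1+u)$ is nonnegative just supplies the short derivation the paper omits.
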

\begin{proof}
Part~$1$ follows since for any $z \in \mathbb{R}$, we have the inequality $(1+\exp(z))^{-1}\le \ln(1+\exp(-z))$. 

Part~$2$ follows since for any $z \in \mathbb{R}^d$, we have the inequality $\exp(z)/\left(1+\exp(z)\right)^2 \le \left( 1+\exp(z)\right)^{-1}$.
\end{proof}

\begin{lemma}\label{l:aux.innerproductboundonx_sandx_q} Given an $r<1$ suppose that for any $k \in [4]$ all samples $s\in \cK_k$ satisfy the bound $\lv x_s - \mu_k \rv \le r/\sqrt{2}$ and for all $k \in [4]$, $\lv \mu_k \rv = 1$. 
\begin{enumerate}
    \item Then for any pair of clusters $\cK_k,\cK_\ell$ such that $y_k \neq y_{\ell}$, and $\mu_k \cdot \mu_{\ell} \le (1+\Delta)/2$ we have, for all $s \in \cK_k$ and $q \in \cK_\ell$
\begin{align*}
    x_s \cdot x_q \le \frac{1+\Delta }{2}+2r.
\end{align*}
\item Given a cluster $\cK_k$, if $s,s' \in \cK_k$ then,
\begin{align*}
    x_s \cdot x_{s'} \ge 1-2r.
\end{align*}
\end{enumerate}
\end{lemma}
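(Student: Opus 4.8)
The plan is to expand both inner products around the relevant cluster centers and absorb the deviations with Cauchy--Schwarz, using only the radius bound $\lv x_s - \mu_k \rv \le r/\sqrt 2$ and $\lv \mu_k \rv = 1$; no appeal to $\lv x_s\rv = 1$ is needed.

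\textbf{Part 1.} Fix $s \in \cK_k$ and $q \in \cK_\ell$ with $y_k \neq y_\ell$, and write $x_s = \mu_k + e_s$, $x_q = \mu_\ell + e_q$, so that $\lv e_s \rv \le r/\sqrt 2$ and $\lv e_q \rv \le r/\sqrt 2$ by hypothesis. Expanding,
\[
x_s \cdot x_q = \mu_k \cdot \mu_\ell + \mu_k \cdot e_q + e_s \cdot \mu_\ell + e_s \cdot e_q.
\]
The first term is at most $(1+\Delta)/2$ by assumption. By Cauchy--Schwarz and $\lv \mu_k \rv = \lv \mu_\ell \rv = 1$, the two linear cross-terms are each at most $r/\sqrt 2$, and $e_s \cdot e_q \le r^2/2$. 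Summing gives $x_s \cdot x_q \le (1+\Delta)/2 + \sqrt 2\, r + r^2/2$; since $r < 1$ one checks $\sqrt 2\, r + r^2/2 \le 2r$ (indeed $2r - \sqrt 2 r - r^2/2 = r\left((2-\sqrt 2) - r/2\right) \ge 0$ for all $r \le 2(2-\sqrt 2)$, in particular for $r < 1$), which yields the claimed bound.

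\textbf{Part 2.} Now take $s, s' \in \cK_k$ and write $x_s = \mu_k + e_s$, $x_{s'} = \mu_k + e_{s'}$ with $\lv e_s \rv, \lv e_{s'} \rv \le r/\sqrt 2$. Then
\[
x_s \cdot x_{s'} = \lv \mu_k \rv^2 + \mu_k \cdot e_{s'} + e_s \cdot \mu_k + e_s \cdot e_{s'} \ge 1 - \frac{r}{\sqrt 2} - \frac{r}{\sqrt 2} - \frac{r^2}{2} \ge 1 - 2r,
\]
again using $r < 1$ exactly as in Part~1.

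There is no substantive obstacle here; the computation is entirely routine. The only thing to track carefully is the constants: the factor $1/\sqrt 2$ in the radius hypothesis (which is precisely what the bias reduction of Appendix~\ref{a:reduction.clusters} supplies) is what makes the two linear deviation terms sum to $\sqrt 2\, r$ rather than $2r$, leaving exactly enough slack to also absorb the quadratic term $r^2/2$ inside the stated bound of $2r$.
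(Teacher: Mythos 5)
Your proof is correct and follows essentially the same route as the paper's: expand around the cluster centers, bound the cross terms by Cauchy--Schwarz, and check that $\sqrt{2}\,r + r^2/2 \le 2r$ for $r<1$. The only difference is that you spell out this last elementary inequality, which the paper leaves implicit.
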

\begin{proof}\emph{Proof of Part~1:} By evaluating the inner product and applying the Cauchy-Schwarz inequality
\begin{align*}
    x_s \cdot x_q & = (x_s-\mu_k + \mu_k) \cdot (x_q -\mu_{\ell}+\mu_{\ell}) \\
    & = \mu_k \cdot \mu_{\ell} + (x_s-\mu_k) \cdot \mu_{\ell} + \mu_k \cdot (x_q -\mu_{\ell}) + (x_s-\mu_k) \cdot (x_q -\mu_{\ell}) \\
    & \le \frac{1+\Delta}{2} + \sqrt{2}r + \frac{r^2}{2} \\
    & \le \frac{1+\Delta}{2} + 2r .
\end{align*}
\emph{Proof of Part~2:} Recall that $\lv \mu_k \rv = 1$. Thus, given two samples $s,s' \in \cK_k$,
\begin{align*}
    x_s \cdot x_s' & = (x_s - \mu_k + \mu_k)\cdot (x_{s'} - \mu_k + \mu_k) \\
    & = \mu_k \cdot \mu_k +(x_s-\mu_k) \cdot \mu_k + \mu_k \cdot (x_{s'}-\mu_k) + (x_s - \mu_k) \cdot (x_{s'}- \mu_k) \\
    & \ge 1 - \sqrt{2}r - \frac{r^2}{2} \\
    & \ge 1-2r
\end{align*}
as claimed.
\end{proof}

\printbibliography

\end{document}